\def\aggrevate{\textsc{AggreVaTe}\xspace}
\def\dagger{\textsc{DAgger}\xspace}
\theoremstyle{plain}
\newtheorem{lemma}{Lemma}
\newtheorem{theorem}{Theorem}
\newtheorem{proposition}{Proposition}
\newtheorem{corollary}{Corollary}
\theoremstyle{definition}
\newtheorem{definition}{Definition}
\newtheorem{assumption}{Assumption}
\theoremstyle{remark}
\def\BB{\mathcal{B}}\def\CC{\mathcal{C}}
\def\HH{\mathcal{H}}
\def\NN{\mathcal{N}}
\def\WW{\mathcal{W}}\def\XX{\mathcal{X}}
\def\Abb{\mathbb{A}}
\def\Ebb{\mathbb{E}}
\def\Rbb{\mathbb{R}}
\def\Sbb{\mathbb{S}}
\def\R{\Rbb}
\def\const{\mathrm{const.}}
\newcommand{\norm}[1]{ \| #1  \|  }
\newcommand{\abs}[1]{ | #1 |  }
\newcommand{\lr}[2]{\langle #1, #2 \rangle}
\DeclareMathOperator*{\argmin}{arg\,min}
\newcommand{\E}{\Ebb}
\providecommand{\@fourthoffour}[4]{#4}
\newcommand\fixstatement[2][\proofname\space of]{%
	\ifcsname thmt@original@#2\endcsname
	\AtEndEnvironment{#2}{%
		\xdef\pat@label{\expandafter\expandafter\expandafter
			\@fourthoffour\csname thmt@original@#2\endcsname\space\@currentlabel}%
		\xdef\pat@proofof{\@nameuse{pat@proofof@#2}}%
	}%
	\else
	\AtEndEnvironment{#2}{%
		\xdef\pat@label{\expandafter\expandafter\expandafter
			\@fourthoffour\csname #1\endcsname\space\@currentlabel}%
		\xdef\pat@proofof{\@nameuse{pat@proofof@#2}}%
	}%
	\fi
	\@namedef{pat@proofof@#2}{#1}%
}
\globtoksblk\prooftoks{1000}
\newcounter{proofcount}
	\edef\next{%
		\noexpand\begin{proof}[\pat@proofof\space\pat@label]%
			\unexpanded\expandafter{\BODY}}%
\def\printproofs{%
	\count@=\z@
	\loop
	\the\toks\numexpr\prooftoks+\count@\relax
	\ifnum\count@<\value{proofcount}%
	\advance\count@\@ne
	\repeat}
\newif\ifLONG
\renewenvironment{proofatend}{\begin{proof}}{\end{proof}}
\begin{document}

\twocolumn[

\aistatstitle{Convergence of Value Aggregation 
	for Imitation Learning}
\aistatsauthor{ Ching-An Cheng \And Byron Boots }
\aistatsaddress{Gerogia Institute of Technology \And Georgia Institute of Technology}
]

\begin{abstract}
Value aggregation is a general framework for solving imitation learning problems. Based on the idea of data aggregation, it generates a policy sequence by iteratively interleaving policy optimization and evaluation in an online learning setting. While the existence of a good policy in the policy sequence can be guaranteed non-asymptotically, little is known about the convergence of the sequence or the performance of the last policy. In this paper, we debunk the common belief that value aggregation always produces a convergent policy sequence with improving performance. Moreover, we identify a critical stability condition for convergence and provide a tight non-asymptotic bound on the performance of the last policy. These new theoretical insights let us stabilize problems with regularization, which removes the inconvenient process of identifying the best policy in the policy sequence in stochastic problems.

\end{abstract}

\section{INTRODUCTION} \label{sec:introduction}
Reinforcement learning (RL) is a general framework for solving sequential decision problems~\citep{sutton1998introduction}. Using policy gradient methods, it has demonstrated impressive results in GO~\citep{silver2016mastering} and video-game playing~\citep{mnih2013playing}. 
However, due its generality, it can be difficult to learn a policy sample-efficiently or to characterize the performance of the found policy, which is critical in applications that involves real-world costs, such as robotics~\citep{pan2017agile}.  To better exploit the domain knowledge about a problem, one popular approach is imitation learning (IL)~\citep{pomerleau1989alvinn}. In this framework, instead of learning a policy from scratch, one leverages a black-box policy $\pi^*$, called the \emph{expert}, from which the learner can query demonstrations. The goal of IL is to identify a policy $\pi$ such that its performance is similar to or better than $\pi^*$. 

A recent approach to IL is based on the idea of data aggregation and online learning~\citep{ross2011reduction,sun2017deeply}.
The algorithm starts with an empty dataset and an initial policy $\pi_1$; in the $n$th iteration, the algorithm uses the current policy $\pi_n$ to gather new training data into the current dataset and then a supervised learning problem is solved on the updated dataset to compute the next policy $\pi_{n+1}$. By interleaving the optimization and the data collection processes in an online fashion, it can overcome the covariate shift problem in traditional batch IL~\citep{ross2011reduction}.

This family of algorithms can be realized under the general framework of value aggregation~\citep{ross2014reinforcement}, which has gained increasing attention due to its non-asymptotic performance guarantee. After $N$ iterations, a good policy $\pi$ \emph{exists} in the generated policy sequence $\{ \pi_n \}_{n=1}^N$ with performance  $J(\pi ) \leq J(\pi^*) + T \epsilon + \tilde{O}( \frac{1}{N}) $, where $J$ is the performance index, $\epsilon$  is the error due to the limited expressiveness of the policy class, and $T$ is the horizon of the problem. 
While this result seems strong at first glance, its guarantee concerns only the existence of a good policy and, therefore, is not ideal for stochastic problems. In other words, in order to find the best policy in $\{ \pi_n \}_{n=1}^N$ without incurring large statistical error, a sufficient amount of data must be acquired in each iteration, or all policies have to be memorized for a final evaluation with another large dataset~\citep{ross2011reduction}.

This inconvenience incentivizes practitioners to just return the last policy $\pi_N$~\citep{laskey2017comparing}, and, anecdotally, the last policy $\pi_N$ has been reported to have good empirical performance~\citep{ross2013learning,pan2017agile}. 
Supporting this heuristic is the insight that the last policy $\pi_N$ is trained with \emph{all} observations and therefore \emph{ideally} should perform the best. Indeed, such idealism works when all the data are sampled i.i.d., as in the traditional batch learning problems~\citep{vapnik1998statistical}. However, because here new data are collected using the updated policy in each iteration, whether such belief applies depends on the convergence of the distributions generated by the policy sequence. 

While \cite{ross2014reinforcement} alluded that ``\dots the distribution of visited states converges over the iterations of learning.'',  we show this is \emph{not} always true---the convergence is rather problem-dependent. In this paper, we identify a critical stability constant $\theta$ that determines the convergence of the  policy sequence. We show that there is a simple example (in Section~\ref{sec:convergence?}) in which the policy sequence diverges when $\theta > 1$. In Section~\ref{sec:main results}, we provide tight non-asymptotic bounds on the performance of the last policy $\pi_N$, in both  deterministic and stochastic problems, which implies that the policy sequence always converges when $\theta < 1$. 
Our new insight also suggests that the stability of the last policy $\pi_N$ can be recovered by regularization, as discussed in Section~\ref{sec:regularization}. 

\section{PROBLEM SETUP} \label{sec:problem setup}
We consider solving a discrete-time RL problem. Let $\Sbb$ be the state space and $\Abb$ be the action space of an agent. Let $\Pi$ be the class of policies and let $T$ be the length of the planning horizon.\footnote{A similar analysis can be applied to discounted infinite-horizon problems.} The objective of the agent is to search for a policy $\pi\in\Pi$ to minimize an accumulated cost $J(\pi)$: 
\begin{align}
	\min_{\pi \in \Pi} J(\pi) \coloneqq \min_{\pi \in \Pi}  \E_{\rho_{\pi}} \left[ \sum_{t=0}^{T-1} c_t(s_t, a_t) \right]  \label{eq:RL problem}
\end{align}
in which $c_t$ is the instantaneous cost at time $t$, and $\rho_\pi$ denotes the trajectory distribution of $(s_t, a_t) \in \Sbb \times \Abb$, for $t = 1, \dots, T$, under policy $a_t \sim \pi(s_t)$ given a prior distribution $p_0(s_0)$. Note that we do not place assumptions on the structure of $\Sbb$ and $\Abb$ and the policy class $\Pi$. To simplify the notation, we write $\E_{a \sim \pi} $ even if the policy is deterministic.

For notation:  we denote $Q_{\pi|t}(s,a)$ as the Q-function at time $t$ under policy $\pi$ and $V_{\pi|t}(s) = \E_{a \sim \pi}[Q_{\pi|t}(s,a)]$ as the associated value function. In addition, we introduce some shorthand: we denote $d_{\pi|t}(s)$ as the state distribution at time $t$ generated by running the policy $\pi$ for the first $t$ steps, and define a joint distribution $d_{\pi}(s,t) = d_{\pi|t}(s)U(t) $, where $U(t)$ is the uniform distribution over the set $\{0,\dots, T-1\}$. Due to space limitations, we will often omit explicit dependencies on random variables in expectations, e.g. we will write
\begin{align}
\min_{\pi \in \Pi} \E_{d_{\pi}} \E_{ \pi} \left[c_t \right]  \label{eq:simple form of RL problem}
\end{align}
to denote $ \min_{\pi \in \Pi} \E_{s,t \sim d_{\pi}} \E_{a \sim \pi} \left[c_t(s, a) \right] $, which is equivalent to $\min_{\pi \in \Pi} \frac{1}{T} J(\pi)$ (by definition of $d_{\pi}$).

\section{VALUE AGGREGATION} \label{sec:value aggregation}
Solving general RL problems is challenging. In this paper, we focus on a particular scenario, in which the agent, or the \emph{learner}, has access to an \emph{expert} policy $\pi^*$ from which the learner can query demonstrations. Here we embrace a general notion of expert. While it is often preferred that the expert is nearly optimal in~\eqref{eq:RL problem}, the expert here can be \emph{any} policy, e.g. the agent's initial policy. Note, additionally, that the RL problem considered here is not necessarily directly related to a real-world application; it can be a surrogate problem which arises in solving the true problem.

The goal of IL is to find a policy $\pi$ that outperforms or behaves similarly to the expert $\pi^*$ in the sense that $J(\pi) \leq J(\pi^*) + O(T)$. That is, we treat IL as performing a robust, approximate policy iteration step from $\pi^*$: ideally IL should lead to a policy that outperforms the expert, but it at least returns a policy that performs similarly to the expert.

\aggrevate (Aggregate Value to Imitate) is an IL algorithm proposed by \cite{ross2014reinforcement}  based on the idea of online convex optimization~\citep{hazan2016introduction}. 
Here we give a compact derivation and discuss its important features in preparation for the analysis in Section~\ref{sec:main results}. To this end, we introduce the performance difference lemma due to~\citet{kakade2002approximately}, which will be used as the foundation to derive \aggrevate.
\begin{lemma} \label{lm:performance difference}
	\citep{kakade2002approximately} Let $\pi$ and $\pi'$ be two policies and  $ A_{\pi'|t}(s, a) = Q_{\pi'|t}(s,a) - V_{\pi'|t}(s)$ be the (dis)advantage function at time $t$ with respect to running $\pi'$. Then it holds that
	\begin{align}
	J(\pi) = J(\pi') +  T \E_{s,t \sim d_\pi} \E_{a \sim \pi} [ A_{\pi'|t}(s,a)  ].
	\end{align}
\end{lemma}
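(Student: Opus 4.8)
The plan is to establish the identity by a telescoping argument on the value functions of $\pi'$ evaluated along trajectories generated by $\pi$. First I would unfold the compact notation on the right-hand side: since $d_\pi(s,t) = d_{\pi|t}(s)U(t)$ with $U$ uniform on $\{0,\dots,T-1\}$, the leading factor $T$ cancels the $1/T$ coming from $U(t)$, so that
\begin{align}
T\,\E_{s,t\sim d_\pi}\E_{a\sim\pi}[A_{\pi'|t}(s,a)] = \sum_{t=0}^{T-1}\E_{s\sim d_{\pi|t}}\E_{a\sim\pi}[A_{\pi'|t}(s,a)].
\end{align}
Because $d_{\pi|t}$ is exactly the time-$t$ state marginal of the trajectory distribution $\rho_\pi$, this sum can be read as a single trajectory expectation $\E_{\rho_\pi}\big[\sum_{t=0}^{T-1} A_{\pi'|t}(s_t,a_t)\big]$ over rollouts under $\pi$.

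Next I would substitute $A_{\pi'|t} = Q_{\pi'|t} - V_{\pi'|t}$ and expand $Q_{\pi'|t}$ through its one-step recursion $Q_{\pi'|t}(s,a) = c_t(s,a) + \E_{s'\sim P(\cdot\mid s,a)}[V_{\pi'|t+1}(s')]$, where $P$ is the transition kernel implicit in $\rho_\pi$ and $V_{\pi'|T}\equiv 0$ by convention. The crucial step is the tower property: under the full trajectory expectation the inner transition expectation collapses onto the actual successor state, i.e. $\E_{\rho_\pi}[\E_{s'}[V_{\pi'|t+1}(s')]] = \E_{\rho_\pi}[V_{\pi'|t+1}(s_{t+1})]$, since $s_{t+1}$ in the rollout is drawn from $P(\cdot\mid s_t,a_t)$. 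This rewrites the advantage sum as
\begin{align}
\E_{\rho_\pi}\!\left[\sum_{t=0}^{T-1}\! c_t(s_t,a_t)\right] + \E_{\rho_\pi}\!\left[\sum_{t=0}^{T-1}\! \big(V_{\pi'|t+1}(s_{t+1}) - V_{\pi'|t}(s_t)\big)\right].
\end{align}

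The first term is precisely $J(\pi)$ by definition of the cost. The second sum telescopes to $V_{\pi'|T}(s_T) - V_{\pi'|0}(s_0) = -V_{\pi'|0}(s_0)$, and taking the expectation over the initial distribution gives $-\E_{s_0\sim p_0}[V_{\pi'|0}(s_0)] = -J(\pi')$, where I use that the initial-state marginal of $\rho_\pi$ is $p_0$ regardless of the rollout policy. Rearranging yields $J(\pi) = J(\pi') + T\,\E_{s,t\sim d_\pi}\E_{a\sim\pi}[A_{\pi'|t}(s,a)]$, as claimed.

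I expect the only delicate points to be bookkeeping rather than conceptual: aligning the time indexing of $Q_{\pi'|t}$, $V_{\pi'|t}$, and the advantage so that the boundary convention $V_{\pi'|T}\equiv 0$ is consistent with the horizon-$T$ cost $\sum_{t=0}^{T-1} c_t$, and carefully justifying the tower-property collapse (which hinges on the successor state in $\rho_\pi$ being drawn from the same kernel used to define $Q_{\pi'|t}$). Notably, the argument is agnostic to the structure of $\Sbb$, $\Abb$, and $\Pi$, so no additional assumptions beyond those already in the problem setup are needed.
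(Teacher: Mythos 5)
Your proof is correct. Note that the paper itself does not prove this lemma --- it is imported directly from \citet{kakade2002approximately} --- so there is no in-paper argument to compare against; your telescoping derivation is precisely the standard proof of the performance difference lemma from that literature. All the delicate points you flag are handled correctly: the factor $T$ cancels the $1/T$ from $U(t)$, the tower-property collapse of $\E_{s'\sim P(\cdot\mid s,a)}[V_{\pi'|t+1}(s')]$ onto $V_{\pi'|t+1}(s_{t+1})$ is exactly what makes the sum telescope under $\E_{\rho_\pi}$, and the boundary convention $V_{\pi'|T}\equiv 0$ is the right one for the horizon-$T$ cost $\sum_{t=0}^{T-1}c_t$, giving $-\E_{p_0}[V_{\pi'|0}(s_0)]=-J(\pi')$ as needed.
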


\subsection{Motivation}
The main idea of \aggrevate is to minimize the performance difference between the learner's policy and the expert policy, which, by Lemma~\ref{lm:performance difference}, is given as $\frac{1}{T}\left(J(\pi) - J(\pi^*)\right) = \E_{ d_\pi} \E_{\pi} [ A_{\pi^*|t}(s,a)  ]$. \aggrevate can be viewed as solving an RL problem  with  $A_{\pi^*|t}(s,a)$ as the instantaneous cost at time $t$:
\begin{align}
\min_{\pi \in \Pi} \E_{d_{\pi}} \E_{ \pi} \left[ A_{\pi^*|t}  \right]. \label{eq:equivalent RL}
\end{align}

Although the transformation from~\eqref{eq:simple form of RL problem} to \eqref{eq:equivalent RL} seems trivial, it unveils some critical properties. Most importantly, the range of the problem in~\eqref{eq:equivalent RL} is normalized. For example, regardless of the original definition of $c_t$, if $\Pi \ni \pi^*$, there exists at least a policy $\pi \in \Pi$ such that \eqref{eq:equivalent RL} is non-positive (i.e. $J(\pi) \leq J(\pi^*)$). As now the problem~\eqref{eq:equivalent RL} is relative, it becomes possible to place a qualitative assumption to bound the performance in~\eqref{eq:equivalent RL} in terms of some measure of expressiveness of the policy class $\Pi$.

We formalize this idea into Assumption~\ref{as:good approximator}, which is one of the core assumptions implicitly imposed by \cite{ross2014reinforcement}.\footnote{The assumption is implicitly made when \cite{ross2014reinforcement} assume the existence of $\epsilon_{\text{class}}$ in Theorem 2.1 on page 4.}
To simplify the notation, we define a function $F$ such that for any two policies $\pi, \pi'$
\begin{align}
F(\pi', \pi) \coloneqq \E_{d_{\pi'}} \E_{ \pi} \left[ A_{\pi^*|t}  \right] 
\label{eq:def of F(x,x)}
\end{align}
This function captures the main structure in~\eqref{eq:equivalent RL}. By separating the roles of $\pi'$ (which controls the state distribution) and $\pi$ (which controls the reaction/prediction), the performance of a policy class $\Pi$ relative to an expert $\pi^*$ can be characterized with the approximation error in a supervised learning problem. 

\begin{assumption} \label{as:good approximator}
Given a policy $\pi^*$, the policy class $\Pi$ satisfies that for arbitrary sequence of policies $\{ \pi_n \in \Pi  \}_{n=1}^N$, there exists a small constant $\epsilon_{\Pi,\pi^*}$  such that 
\begin{align}
	\min_{\pi \in \Pi} \frac{1}{N}f_{1:N}(\pi )\leq \epsilon_{\Pi,\pi^*},
\end{align}
where $f_n(\pi) \coloneqq F(\pi_n, \pi)$ and $f_{1:n}(\pi) = \sum_{n=1}^N f_n(\pi)$. 
\end{assumption}
This assumption says that there exists at least a policy $\pi \in \Pi$ which is as good as $\pi^*$ in the sense that $\pi$ can predict $\pi^*$ well in a cost-sensitive supervised learning problem, with small error $\epsilon_{\Pi,\pi^*}$, 
under the average state distribution generated by an \emph{arbitrary} policy sequence $\{ \pi_n \in \Pi  \}_{n=1}^N$.

Following this assumption, \aggrevate exploits another critical structural property of the problem. 
\begin{assumption} \label{as:strong convexity}
$\forall \pi' \in \Pi$, $F(\pi', \pi)$ is a strongly convex function in $\pi$. 
\end{assumption}
While \citet{ross2014reinforcement} did not explicitly discuss under which condition Assumption~\ref{as:strong convexity} holds, here we point out some examples (proved in Appendix~\ref{sec:proofs}). 

\begin{proposition} \label{pr:convex examples}
Suppose  $\Pi$ consists of deterministic linear policies (i.e. $a = \phi(s)^T x$ for some feature map $\phi(s)$ and weight $x$) and  $\forall s\in \Sbb$,  $c_t(s,\cdot)$ is strongly convex. Assumption~\ref{as:strong convexity} holds under any of the following:
\begin{enumerate}
\item $V_{\pi^*|t}(s)$ is constant over $\Sbb$ (in this case $A_{\pi^*|t}(s,a)$ is equivalent to $ c_t(s,a) $  up to a constant  in $a$)
\item  The problem is continuous-time and the dynamics are affine in action. 
\end{enumerate}
\end{proposition}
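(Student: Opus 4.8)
The plan is to reduce the claimed strong convexity of $F(\pi',\cdot)$ to strong convexity of the instantaneous cost in the action, exploiting the linear parametrization of $\Pi$. Fix $\pi'\in\Pi$ and identify a candidate policy $\pi\in\Pi$ with its weight vector $x$, so that $\pi(s)=\phi(s)^\top x$. Since $V_{\pi^*|t}(s)$ does not depend on the learner's action, it contributes only a term constant in $x$ to $F$, and we may write $F(\pi',x)=\E_{s,t\sim d_{\pi'}}[Q_{\pi^*|t}(s,\phi(s)^\top x)]+(\text{const., indep.\ of }x)$. Thus it suffices to show that $a\mapsto Q_{\pi^*|t}(s,a)$ is strongly convex in $a$ up to an additive affine function of $a$, uniformly over $(s,t)$ in the support of $d_{\pi'}$, and then transport this property through the linear map $a=\phi(s)^\top x$ and the expectation.

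I would handle the action-dependence of $Q_{\pi^*|t}$ in the two regimes separately. In Case 1, constancy of $V_{\pi^*|\cdot}$ over $\Sbb$ collapses the look-ahead term: $Q_{\pi^*|t}(s,a)=c_t(s,a)+\E_{s'}[V_{\pi^*|t+1}(s')]=c_t(s,a)+\text{const.}$, so $A_{\pi^*|t}(s,a)=c_t(s,a)+\text{const.}$, which is strongly convex in $a$ by hypothesis; this is exactly the parenthetical remark in the statement. In Case 2, I would invoke the Hamiltonian description of the continuous-time advantage: with dynamics affine in the action, $f(s)+g(s)a$, the advantage takes the form $A_{\pi^*|t}(s,a)=c_t(s,a)+\nabla_s V_{\pi^*|t}(s)^\top g(s)\,a+(\text{terms indep.\ of }a)$. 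The only new action-dependent term, $\nabla_s V_{\pi^*|t}(s)^\top g(s)\,a$, is linear in $a$; adding a linear function preserves strong convexity, so again $A_{\pi^*|t}(s,\cdot)$ is strongly convex in $a$ up to an affine term.

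For the final composition step, write $A_{\pi^*|t}(s,a)=g_{s,t}(a)+\ell_{s,t}(a)$ with $g_{s,t}$ being $\mu$-strongly convex and $\ell_{s,t}$ affine. Precomposing with $a=\phi(s)^\top x$ and differentiating twice in $x$ yields a Hessian $\E_{s,t\sim d_{\pi'}}[\phi(s)\,\nabla_a^2 g_{s,t}(\phi(s)^\top x)\,\phi(s)^\top]\succeq \mu\,\E_{s,t\sim d_{\pi'}}[\phi(s)\phi(s)^\top]$, while the affine parts integrate to an affine function of $x$ and drop out of the Hessian. Hence $F(\pi',\cdot)$ is strongly convex in $x$ with modulus at least $\mu\,\lambda_{\min}\!\big(\E_{s,t\sim d_{\pi'}}[\phi(s)\phi(s)^\top]\big)$.

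The main obstacle is precisely this last step: strong convexity is not preserved under a non-injective linear precomposition, so the conclusion genuinely requires the feature second-moment matrix $\E_{s,t\sim d_{\pi'}}[\phi(s)\phi(s)^\top]$ to be nonsingular, i.e.\ a non-degeneracy (persistence-of-excitation) condition on $\phi$ under $d_{\pi'}$. I would state this explicitly as a standing hypothesis, since for a single state the map $x\mapsto\phi(s)^\top x$ collapses many directions and yields only convexity. The remaining care is in Case 2, where correctly deriving the Hamiltonian/advantage representation in continuous time, and verifying that every action-dependent contribution beyond $c_t$ is exactly linear in $a$, is where the real content lies; Case 1 then follows as an immediate specialization.
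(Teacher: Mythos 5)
Your proof follows essentially the same route as the paper's: Case 1 collapses the Bellman look-ahead term using constancy of $V_{\pi^*|t+1}$, Case 2 invokes the HJB equation with affine-in-action dynamics to write the advantage as $c_t(s,a)$ plus a term linear in $a$ (plus terms independent of $a$), and strong convexity is then transported through the linear parametrization $a=\phi(s)^\top x$. The one substantive difference is your final paragraph: you correctly observe that precomposing a $\mu$-strongly convex function of $a$ with the linear map $x\mapsto\phi(s)^\top x$ and averaging yields only the Hessian lower bound $\mu\,\E_{s,t\sim d_{\pi'}}\left[\phi(s)\phi(s)^\top\right]$, so strong convexity in $x$ genuinely requires this feature second-moment matrix to be nonsingular. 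The paper's proof elides exactly this step, asserting that strong convexity of $A_{\pi^*|t}(s,\cdot)$ in $a$ ``by the linear policy assumption implies $f_n(\pi)$ is strongly convex in $x$'' with no non-degeneracy condition; your explicit persistence-of-excitation hypothesis is a refinement the paper silently assumes, not a flaw in your argument.
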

\begin{proofatend}
Let $\pi$ be parametrized by $x$. We prove the sufficient conditions by showing that $A_{\pi^*|t}(s,a)$ is strongly convex in $a$ for all $s \in \Sbb$, which by the linear policy assumption implies $f_n(\pi)$ is strongly convex in $x$.

For the first case, since $ Q_{\pi^*|t}(s,a) = c_t(s,a) + \E_{s'|s,a} [V_{\pi^*|t+1}(s')] $, given the constant assumption, it follows that 
\[ A_{\pi^*|t}(s,a)= Q_{\pi^*|t}(s,a) - V_{\pi^*|t}(s)  = c_t(s,a) + \const\]
is strongly convex  in terms of $a$.

For the second case, consider a system 
 $ds = \left( f(s)+g(s)a \right) dt + h(s) dw$, where $f,g,h$ are some matrix functions and $dw$ is a Wiener process. By Hamilton-Jacobi-Bellman equation~\citep{bertsekas1995dynamic}, the advantage function can be written as
\begin{align*}
A_{\pi^*|t}(s,a) = c_t(s,a) + \partial_s V_{\pi^*|t}(s)^T  g(s) a  + r(s)
\end{align*}
where $r(s)$ is some function in $s$. Therefore, $ A_{\pi^*|t}(s,a) $ is strongly convex in $a$.
\end{proofatend}

We further note that \aggrevate has demonstrated impressive empirical success even when Assumption~\ref{as:strong convexity} cannot be verified~\citep{sun2017deeply,pan2017agile}.

\subsection{Algorithm and Performance}
Given Assumption~\ref{as:strong convexity}, \aggrevate treats  $f_n(\cdot)$ as the per-round cost in an online convex optimization problem and updates the policy sequence as follows: Let $\pi_1$ be an initial policy. In the $n$th iteration of \text{AggreVaTe}, the policy is updated by\footnote{We adopt a different notation from \cite{ross2014reinforcement}, in which the per-round cost $\E_{d_{\pi_n}} \E_{ \pi} \left[ Q_{\pi^*|t}  \right]$ was used. Note these two terms are equivalent up to an additive constant, as the optimization here is over $\pi$ with $\pi_n$ fixed. }
\begin{align}
\pi_{n+1} = \argmin_{\pi \in \Pi} f_{1:n}(\pi). \label{eq:AggreVate udpate}
\end{align}
After $N$ iterations, the best policy in the sequence $\{ \pi_n\}_{n=1}^N$ is returned, i.e. $\pi = \hat{\pi}_N$, where
\begin{align}
\hat{\pi}_N \coloneqq \argmin_{\pi \in \{ \pi_n\}_{n=1}^N}  J(\pi). \label{eq:AggreVate search}
\end{align}
As the update rule~\eqref{eq:AggreVate udpate} (aka Follow-the-Leader) has a sublinear regret, it can be shown that (cf. Section~\ref{sec:classical results}) 
\begin{align}
J(\hat{\pi}_N) \leq J(\pi^*) + T\left( \epsilon_{\text{class}} + \epsilon_{\text{regret}} \right),  \label{eq:original AggreVaTe guarantee}
\end{align}
in which  $\epsilon_{\text{regret}}= \tilde{O}(\frac{1}{N}) $ is the average regret and 
\begin{align*}
\epsilon_{\text{class}} \coloneqq \min_{\pi \in \Pi}  \frac{1}{N}  \sum_{n=1}^{N} \E_{d_{\pi_n}} \left[ \E_{ \pi}[Q_{\pi^*|t}] - \E_{ \pi^* }[Q_{\pi^*|t}] \right]
\end{align*}
compares the best policy in the policy class $\Pi$ and the expert policy $\pi^*$. The term $\epsilon_{\text{class}}$ can be negative if there exists a policy in $\Pi$ that is better than  $\pi^*$ under the average state distribution, $\frac{1}{N} \sum_{n=1}^{N}d_{\pi_n}$, generated by \aggrevate. By Assumption~\ref{as:good approximator},  $\epsilon_{\text{class}}\leq \epsilon_{\Pi,\pi^*}$; we know $\epsilon_{\text{class}}$ at least should be small.

The performance bound in \eqref{eq:original AggreVaTe guarantee} satisfies the requirement of IL that $J(\hat{\pi}_N) \leq J(\pi^*) + O(T)$. Especially because $\epsilon_{\text{class}}$ can be non-positive, \aggrevate can be viewed as robustly performing one approximate policy iteration step from $\pi^*$. 

One notable special case of \aggrevate is \dagger \citep{ross2011reduction}. \dagger tackles the problem of solving an unknown RL problem by imitating a desired policy $\pi^*$. The reduction to \aggrevate can be seen by setting $c_t(s,a) = \E_{a^* \sim \pi^* } [\norm{a - a_t^*}]$ in \eqref{eq:RL problem}. In this case, $\pi^*$ is optimal for this specific choice of cost and therefore $V_{\pi^*|t}(s) = 0$. By Proposition~\ref{pr:convex examples}, $A_{\pi^*|t}(s,a)=c_t(s,a)$  and  $\epsilon_{\text{class}}$ reduces to $\min_{\pi \in \Pi}  \frac{1}{N}  \sum_{n=1}^{N} \E_{d_{\pi_n}}  \E_{ \pi}[c_t]  \geq 0 $, which is related to the expressiveness of the policy class $\Pi$.

\section{GUARANTEE ON THE LAST POLICY?} \label{sec:convergence?}

The performance bound in Section~\ref{sec:value aggregation} implicitly assumes that the problem is either deterministic or that infinite samples are available in each iteration.
For stochastic problems, $f_{1:n}$ can be approximated by finite samples or by function approximators~\citep{ross2014reinforcement}. 
Suppose $m$ samples are collected in each iteration to approximate $f_n$. An additional error in $O(\frac{1}{\sqrt{mN}})$ will be added to the performance of $\hat{\pi}_N$. However, in practice, another constant statistical error\footnote{The original analysis in the stochastic case by \cite{ross2014reinforcement} only guarantees the existence of a good policy in the sequence. The $O(\frac{1}{m})$ error is due to identifying the best policy~\citep{lee1998importance} (as the function is strongly convex) and the $O(\frac{1}{\sqrt{mN}})$ error is the generalization error~\citep{cesa2004generalization}.} in  $O(\frac{1}{m})$ is introduced when one attempts to identify $\hat{\pi}_N$ from the sequence $\{\pi_n\}_{n=1}^N$.

This practical issue motivates us to ask whether a similar guarantee applies to the last policy $\pi_N$ so that the selection process to find $\hat{\pi}_N$ can be removed. In fact, the last policy $\pi_n$ has been reported to have good performance empirically~\citep{ross2013learning,pan2017agile}. It becomes interesting to know what one can say about $\pi_N$. It turns out that running \aggrevate does not always yield a policy sequence $\{ \pi_n \}$ with reasonable performance, as given in the  example below.

\paragraph{A Motivating Example} Consider  a two-stage deterministic optimal control problem: 
\begin{align}
\min_{\pi \in \Pi} J(\pi) = \min_{\pi \in \Pi} c_1(s_1, a_1) + c_2(s_2, a_2) \label{eq:online-DA counter example}
\end{align}
where the  transition and costs are given as 
\begin{align*}
s_1 = 0, &\quad s_2 = \theta (s_1 + a_1), \\
 c_1(s_1, a_1) = 0, &\quad c_2(s_2, a_2) = (s_2 - a_2)^2.
\end{align*}
Since the problem is deterministic, we consider a policy class $\Pi$ consisting of open-loop stationary deterministic policies, i.e. $a_1 = a_2 = x$ for some $x$ (for convenience $\pi$ and $x$ will be used interchangeably). It can be easily seen that $\Pi$ contains a globally optimal policy, namely $x=0$.  We perform \aggrevate with a feedback expert policy $a^*_t= s_t$ and some initial policy $|x_1|>0$. While it is a custom to initialize $x_1 = \argmin_{x \in \XX} F(x^*,x)$ (which in this case would ideally return $x_1 =0$), setting $|x_1|>0$ simulates the effect of finite numerical precision. 

We consider two cases ($\theta > 1$ or $\theta < 1$) to understand the behavior of \aggrevate. First, suppose $\theta > 1$. Without loss generality, take $\theta = 10$ and $x_1 =1$. We can see running \aggrevate will generate a divergent sequence $x_2 = 10, x_3 = 55,  x_4 = 220 \dots$ (in this case \aggrevate would return $x_1$ as the best policy).  Since $J(x) = (\theta-1)^2 x^2$, the performance $\{J(x_n)\}$ is an increasing sequence. Therefore, we see even in this simple case, which can be trivially solved by gradient descent in $O(\frac{1}{n})$, using \aggrevate results in a sequence of policies with  degrading performance, though the policy class $\Pi$ includes a globally optimal policy.  Now suppose on the contrary $\theta < 1$. We can see that $\{x_n\}$ asymptotically converges to $x^*=0$.

This example illustrates several important properties of \aggrevate.
It illustrates that whether \aggrevate can generate a reasonable policy sequence or not depends on intrinsic properties of the problem (i.e. the value of $\theta$). The non-monotonic property was also empirically found in \citet{laskey2017comparing}.
In addition,  it shows that $\epsilon_{\Pi,\pi^*}$ can be large while $\Pi$ contains an optimal policy.\footnote{In this example, $\epsilon_{\Pi,\pi^*}$ can be arbitrarily large unless $\XX$ is bounded. However, even when $\epsilon_{\Pi,\pi^*}$ is bounded, the performance of the policy sequence can be non-monotonic.}
This suggests that Assumption~\ref{as:good approximator} may be too strong, especially in the case where $\Pi$ does not contain $\pi^*$.

\section{THEORETICAL ANALYSIS} \label{sec:main results}

Motivated by the example in Section~\ref{sec:convergence?}, we investigate the convergence of the policy sequence generated by \aggrevate in general problems. We assume the policy class $\Pi$ consists of policies parametrized by some parameter $x \in \XX$, in which $\XX$ is a convex set in a normed space with norm $\norm{\cdot}$ (and $\norm{\cdot}_*$ as its dual norm). With abuse of notation, we abstract the RL problem in \eqref{eq:equivalent RL} as 
\begin{align} \label{eq:structured optimization}
\min_{x\in \XX} F(x,x)
\end{align}
where we overload the notation $F(\pi',\pi)$ defined in \eqref{eq:def of F(x,x)} as $F(\pi',\pi) = F(y,x)$ when $\pi,\pi' \in \Pi$ are parametrized by $x, y \in \XX$, respectively. Similarly, we will write  $f_n(x) = F(x_n, x)$ for short. In this new notation,  \aggrevate's update rule in~\eqref{eq:AggreVate udpate} can be simply written as $x_{n+1} = \argmin_{x \in \XX} f_{1:n}(x)$. 

Here we will focus on the bound on $F(x,x)$, because, for $\pi$  parameterized by $x$, this result can be directly translated to a bound on $J(\pi)$: by definition of $F$ in~\eqref{eq:def of F(x,x)} and Lemma~\ref{lm:performance difference}, $J(\pi) = J(\pi^*) + T F(\pi, \pi) $.  For simplicity, we will assume for now  $F$ is deterministic; the convergence in stochastic problems will be discussed at the end of the section.

\subsection{Classical Result}\label{sec:classical results}

For completeness, we restate the structural assumptions made by \aggrevate in terms of $\XX$ and  review the known convergence of \aggrevate \citep{ross2014reinforcement}.

\begin{assumption} \label{as:structured function}
Let  $\nabla_2$ denote the derivative with respect to the second argument. 
\begin{enumerate} [ref={\theassumption.\arabic*}]
\item \label{as:uniform strongly convex2} $F$ is uniformly $\alpha$-strongly convex in the second argument:  $\forall x,y,z  \in \XX$, 
$ F(z,x) \geq F(z,y) + \lr{\nabla_2 F(z,y) }{x-y} + \frac{\alpha}{2} \norm{x-y}^2 $.
\item \label{as:uniform Lipchitz2} 
$F$ is uniformly $G_2$-Lipschitz continuous  in the second argument: $\forall x,y,z  \in \XX$, $ \abs{F(z,x) - F(z,y)} \leq G_2 \norm{x-y} $	. 
\end{enumerate}

\end{assumption}
\begin{assumption} \label{as:good approximator in X}
	 $\forall \{ x_n \in \XX  \}_{n=1}^N$, there exists a small constant $\epsilon_{\Pi,\pi^*}$ such that 
$
	\min_{x \in \XX} \frac{1}{N}f_{1:N}(x )\leq \epsilon_{\Pi,\pi^*}.
$
\end{assumption}

\begin{theorem} \label{th:aggrevate classical result}
	Under Assumption~\ref{as:structured function} and \ref{as:good approximator in X}, \aggrevate generates a sequence such that, for all $N \geq 1$,
	\begin{align*}
	F(\hat{x}_N, \hat{x}_N) \leq\frac{1}{N} \sum_{n=1}^{N}  f_n( x_n) \leq \epsilon_{\Pi,\pi^*} + \frac{G_2^2}{2\alpha}\frac{\ln(N) + 1 }{N}
	\end{align*}
	where
	$\hat{x}_N \coloneqq \argmin_{x \in \{ x_n\}_{n=1}^N}  F(x_n, x_n)$. 
\end{theorem}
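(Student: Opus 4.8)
The plan is to split the stated chain into its two inequalities and treat them independently. The first inequality, $F(\hat{x}_N,\hat{x}_N)\le \frac1N\sum_{n=1}^N f_n(x_n)$, is essentially definitional: since $f_n(x_n)=F(x_n,x_n)$ and $\hat{x}_N=\argmin_{x\in\{x_n\}}F(x_n,x_n)$, the left-hand side equals $\min_{1\le n\le N}f_n(x_n)$, and the minimum of a finite list never exceeds its average. So this step takes a single line and requires none of the structural assumptions.

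The second inequality is the substance, and I would phrase it as a Follow-the-Leader regret bound. Introducing the cumulative regret $R_N \coloneqq \sum_{n=1}^N f_n(x_n) - \min_{x\in\XX} f_{1:N}(x)$, I would write
\[
\frac1N\sum_{n=1}^N f_n(x_n) = \min_{x\in\XX}\frac1N f_{1:N}(x) + \frac{R_N}{N}.
\]
The first term on the right is bounded by $\epsilon_{\Pi,\pi^*}$ \emph{directly} by Assumption~\ref{as:good approximator in X}, which is stated for an arbitrary sequence $\{x_n\}$ and therefore applies to the particular sequence \aggrevate generates. It then remains only to show $R_N \le \frac{G_2^2}{2\alpha}(\ln N + 1)$.

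For the regret I would invoke the Be-the-Leader lemma: because $x_{n+1}=\argmin_{x} f_{1:n}(x)$, a short induction gives $\sum_{n=1}^N f_n(x_{n+1}) \le \min_{x} f_{1:N}(x)$, hence $R_N \le \sum_{n=1}^N\bigl[f_n(x_n)-f_n(x_{n+1})\bigr]$. Each summand is then controlled by a stability argument: $f_{1:n}$ is $n\alpha$-strongly convex, being a sum of $n$ functions each $\alpha$-strongly convex by Assumption~\ref{as:uniform strongly convex2}, and is minimized at $x_{n+1}$, while $x_n$ minimizes $f_{1:n-1}$. Combining the first-order optimality conditions for these two minimizers with the $G_2$-Lipschitz bound of Assumption~\ref{as:uniform Lipchitz2} yields a per-step stability estimate $\norm{x_n-x_{n+1}} = O\!\left(G_2/(n\alpha)\right)$ and, after optimizing the resulting expression, $f_n(x_n)-f_n(x_{n+1}) \le \frac{G_2^2}{2\alpha n}$. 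Summing and using $\sum_{n=1}^N \frac1n \le \ln N + 1$ produces the claimed rate.

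The main obstacle is extracting the \emph{sharp} constant $\frac{1}{2\alpha}$ rather than a looser $\frac{1}{\alpha}$ or $\frac{2}{\alpha}$ in the per-step bound. The cleanest route is a potential-telescoping argument on $\Phi_n \coloneqq \min_{x} f_{1:n}(x)$: lower-bounding $\Phi_n-\Phi_{n-1}$ via the $(n-1)\alpha$-strong convexity of $f_{1:n-1}$ introduces a penalty $-\frac{(n-1)\alpha}{2}\norm{x_n-x_{n+1}}^2$, and maximizing $G_2 r - \frac{(n-1)\alpha}{2}r^2$ over $r=\norm{x_n-x_{n+1}}$ gives exactly $\frac{G_2^2}{2(n-1)\alpha}$ per step, whose sum telescopes to $\frac{G_2^2}{2\alpha}(\ln N + 1)$. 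Two technical points need care: the optimality conditions must be written as variational inequalities, since $\XX$ is a general convex set and minimizers need not have vanishing gradient; and the first iterate $x_1$ must be handled separately, as it is an arbitrary initial policy rather than a leader, so bounding its contribution (via strong convexity of $f_1$, or the diameter of $\XX$) is what ultimately pins down the exact logarithmic factor.
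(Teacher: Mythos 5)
Your skeleton matches the paper's proof: the first inequality is the one-line ``minimum at most average'' observation, and the second is a Follow-the-Leader regret bound. Your telescoping identity $R_N=\sum_{n=1}^{N}\bigl[f_n(x_n)+\Phi_{n-1}-\Phi_n\bigr]$ (with $\Phi_0=0$) is exactly the paper's online-learning lemma (Lemma~\ref{lm:seqcostLemma}), since $f_n(x_n)+\Phi_{n-1}-\Phi_n=f_{1:n}(x_n)-f_{1:n}(x_{n+1})$, and your per-step stability estimate plays the role of the paper's perturbation lemma (Lemma~\ref{lm:perturbationLemma}); the use of Assumption~\ref{as:good approximator in X} to bound $\min_{x\in\XX}\frac{1}{N}f_{1:N}(x)$ is also identical.

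However, the constant-sharpening step, as you describe it, does not reach the stated constant. Penalizing with only the $(n-1)\alpha$-strong convexity of $f_{1:n-1}$ gives the per-step bound $\max_{r\ge 0}\bigl(G_2 r-\tfrac{(n-1)\alpha}{2}r^2\bigr)=\tfrac{G_2^2}{2(n-1)\alpha}$ only for $n\ge 2$, and the $n=1$ step must then be paid for separately --- at best $\tfrac{G_2^2}{2\alpha}$ via strong convexity of $f_1$ (the diameter of $\XX$ is not available, since $\XX$ need not be bounded). The total is $\tfrac{G_2^2}{2\alpha}\bigl(1+\sum_{k=1}^{N-1}\tfrac1k\bigr)$, which is \emph{not} bounded by $\tfrac{G_2^2}{2\alpha}(\ln N+1)$: at $N=2$ it equals $\tfrac{G_2^2}{\alpha}$, exceeding the claimed $\tfrac{G_2^2}{2\alpha}(\ln 2+1)$. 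So your route proves the theorem with $\ln N+2$ in place of $\ln N+1$ --- right rate, wrong constant. The missing ingredient is the curvature of $f_n$ itself, which your penalty discards. The paper's perturbation lemma keeps it: bound $f_{1:n}(x_n)-f_{1:n}(x_{n+1})\le\lr{\nabla f_{1:n}(x_n)}{x_n-x_{n+1}}-\tfrac{n\alpha}{2}\norm{x_n-x_{n+1}}^2$ using the full $n\alpha$-strong convexity of $f_{1:n}$, drop the cross term $\lr{\nabla f_{1:n-1}(x_n)}{x_n-x_{n+1}}\le 0$ by the variational optimality of $x_n$ for $f_{1:n-1}$ (exactly the variational-inequality care you flagged), and maximize $G_2 r-\tfrac{n\alpha}{2}r^2$ to get $\tfrac{G_2^2}{2n\alpha}$ uniformly for all $n\ge 1$; the case $n=1$ then needs no special treatment because $f_{1:0}\equiv 0$ is minimized by any $x_1$. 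Summing gives $\sum_{n=1}^{N}\tfrac1n\le\ln N+1$, the exact constant in the theorem.
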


\begin{proofatend}
The proof is based on a basic perturbation lemma in convex analysis (Lemma~\ref{lm:perturbationLemma}), which for example can be found in~\citep{mcmahan2014survey}, and a lemma for online learning (Lemma~\ref{lm:seqcostLemma}). 
\begin{lemma} \label{lm:perturbationLemma}
	Let $\phi_1: \R^d \mapsto \R \bigcup \{ \infty \}$ be a convex function such that $x_1 = \arg \min_x \phi_t(x) $ exits. Let $\psi$ be a  function such that $\phi_2(x) = \phi_1(x) + \psi(x)$ is $\alpha$-strongly convex with respect to $\norm{\cdot}$. Let $x_2 = \arg \min_x \phi_2(x)$. Then, for any $g \in \partial \psi(x_1)$, we have 
	\[
	\norm{x_1 - x_2} \leq \frac{1}{\alpha} \norm{g}_{*}
	\]
	and for any $x'$
	\[
	\phi_2(x_1) - \phi_2(x') \leq \frac{1}{2\alpha} \norm{g}_*^2
	\]
	When $\phi_1$ and $\psi$ are quadratics (with $\psi$ possibly linear) the above holds with equality.
\end{lemma}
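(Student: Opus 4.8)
The plan is to reduce both bounds to first-order optimality together with the subdifferential form of $\alpha$-strong convexity, treating $\psi$ as convex so that $\partial\psi(x_1)$ is the ordinary convex subdifferential (this is what makes the notation $g\in\partial\psi(x_1)$ meaningful and is the standard setting for this lemma). First I would record the two optimality conditions: since $x_1$ minimizes the convex $\phi_1$ we have $0\in\partial\phi_1(x_1)$, and since $x_2$ minimizes $\phi_2$ we have $0\in\partial\phi_2(x_2)$. The crucial observation, which is exactly what yields the sharp constant $1/\alpha$ rather than a lossy $2/\alpha$, is the easy sum inclusion $\partial\phi_1(x_1)+\partial\psi(x_1)\subseteq\partial(\phi_1+\psi)(x_1)$: combining $0\in\partial\phi_1(x_1)$ with $g\in\partial\psi(x_1)$ gives $g=0+g\in\partial\phi_2(x_1)$. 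So $g$ is a subgradient of $\phi_2$ at $x_1$, while $0$ is a subgradient of $\phi_2$ at $x_2$.

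For the displacement bound I would invoke the strong monotonicity of $\partial\phi_2$ implied by $\alpha$-strong convexity, applied to the pair $(x_1,g)$ and $(x_2,0)$: $\lr{g-0}{x_1-x_2}\geq\alpha\norm{x_1-x_2}^2$. Bounding the left-hand side by $\norm{g}_*\norm{x_1-x_2}$ (definition of the dual norm) and dividing by $\norm{x_1-x_2}$, with the case $x_1=x_2$ trivial, yields $\norm{x_1-x_2}\leq\frac{1}{\alpha}\norm{g}_*$.

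For the value bound I would write the strong-convexity lower bound of $\phi_2$ at $x_1$ using the subgradient $g$, evaluated at $x_2$: $\phi_2(x_2)\geq\phi_2(x_1)+\lr{g}{x_2-x_1}+\frac{\alpha}{2}\norm{x_1-x_2}^2$. Rearranging and using the dual-norm inequality gives $\phi_2(x_1)-\phi_2(x_2)\leq\norm{g}_*\norm{x_1-x_2}-\frac{\alpha}{2}\norm{x_1-x_2}^2$, and maximizing the right-hand side over the free scalar $t=\norm{x_1-x_2}$ (optimum at $t=\norm{g}_*/\alpha$) produces $\frac{1}{2\alpha}\norm{g}_*^2$. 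Since $x_2$ is the minimizer of $\phi_2$, every $x'$ satisfies $\phi_2(x')\geq\phi_2(x_2)$, hence $\phi_2(x_1)-\phi_2(x')\leq\phi_2(x_1)-\phi_2(x_2)\leq\frac{1}{2\alpha}\norm{g}_*^2$.

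For the equality claim, when $\phi_1$ and $\psi$ are quadratic (with $\psi$ possibly linear) $\phi_2$ is a quadratic whose curvature is exactly $\alpha$, so the strong-convexity estimate holds with equality and the displacement $x_1-x_2$ is parallel to $g$, making the dual-norm step $\lr{g}{x_1-x_2}=\norm{g}_*\norm{x_1-x_2}$ tight; tracing the same two chains with equalities throughout gives the two identities. I expect the main difficulty to be conceptual rather than computational: the entire gain in the constant hinges on recognizing the sum-rule inclusion $g\in\partial\phi_2(x_1)$ and then pairing the subgradients $g$ and $0$ through strong monotonicity, whereas a naive two-sided sandwich that discards $0\in\partial\phi_1(x_1)$ loses a factor of two. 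I would also take care that the convexity and subdifferentiability of $\psi$ are in force so that the sum inclusion and the subgradient inequalities are legitimate.
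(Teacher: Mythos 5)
Your proof is correct, but note that there is nothing in the paper to compare it against: the paper never proves this lemma, invoking it as a known perturbation result from convex analysis with a citation to McMahan's survey. Your argument is the standard one behind that cited result. The key step of promoting $g$ to a subgradient of $\phi_2$ at $x_1$ (combining $0 \in \partial\phi_1(x_1)$ with the sum inclusion), and then pairing $g \in \partial\phi_2(x_1)$ against $0 \in \partial\phi_2(x_2)$ via strong monotonicity, is exactly what yields the sharp constant $1/\alpha$ in the displacement bound; the value bound follows from the subgradient form of strong convexity at $x_1$ together with maximizing $t \mapsto \norm{g}_* t - \frac{\alpha}{2}t^2$, and it extends to arbitrary $x'$ precisely because $x_2$ is the global minimizer of $\phi_2$. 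You are also right to flag that $\partial\psi(x_1)$ must be read as the global (convex) subdifferential for the sum inclusion to be legitimate; this matches how the lemma is deployed in the paper, where $\psi = f_n$ is itself strongly convex. One minor point: the equality clause of the lemma is informal as stated---for quadratics with Hessian $H \succeq \alpha I$, equality requires $H$ to act as $\alpha$ times the identity along $x_1 - x_2$ (and $x' = x_2$ in the second claim)---so your reading (curvature exactly $\alpha$, displacement aligned with $g$) is the reasonable interpretation, and the imprecision is immaterial since the paper only ever uses the inequalities.
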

\begin{lemma} \label{lm:seqcostLemma}
	Let $l_t(x)$ be a sequence of  functions. Denote $l_{1:t}(x) = \sum_{\tau=1}^{t} l_\tau(x).$ and let 
	\[
	x_t^* = \arg \min_{x\in K} l_{1:t}(x)
	\] Then for any sequence $\{x_1, \dots, x_T\}$, $\tau \geq 1$, and any $ x^* \in K$, it holds
	\begin{align*}
	\sum_{t=\tau}^{T} l_t( x_t)  
	&\leq   l_{1:T}(x_T^*) -  l_{1:\tau-1}(x_{\tau-1}^*)  \\
	&\quad +  \sum_{t=\tau}^{T} l_{1:t}(x_t)- l_{1:t}( x_{t}^*) 	
	\end{align*}
\end{lemma}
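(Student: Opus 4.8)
The plan is to prove Lemma~\ref{lm:seqcostLemma} directly, as a purely combinatorial/algebraic statement about sequences of functions and their leaders, using no convexity beyond what is needed to define the minimizers. The key observation is that $l_t(x_t) = l_{1:t}(x_t) - l_{1:t-1}(x_t)$, so the left-hand sum telescopes once we compare each term to the leader value $l_{1:t}(x_t^*)$. First I would rewrite, for each $t$,
\begin{align*}
l_t(x_t) = \bigl(l_{1:t}(x_t) - l_{1:t}(x_t^*)\bigr) + \bigl(l_{1:t}(x_t^*) - l_{1:t-1}(x_t)\bigr),
\end{align*}
which is just adding and subtracting $l_{1:t}(x_t^*)$. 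The first bracket is precisely the ``regret-to-leader'' term that appears in the final sum of the claimed inequality, so my job reduces to bounding $\sum_{t=\tau}^T \bigl(l_{1:t}(x_t^*) - l_{1:t-1}(x_t)\bigr)$ by $l_{1:T}(x_T^*) - l_{1:\tau-1}(x_{\tau-1}^*)$.

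Next I would handle that residual sum. The natural move is to replace $l_{1:t-1}(x_t)$ by the \emph{smaller} quantity $l_{1:t-1}(x_{t-1}^*)$, since $x_{t-1}^*$ minimizes $l_{1:t-1}$ over $K$ and hence $l_{1:t-1}(x_{t-1}^*) \leq l_{1:t-1}(x_t)$. This gives
\begin{align*}
\sum_{t=\tau}^{T} \bigl(l_{1:t}(x_t^*) - l_{1:t-1}(x_t)\bigr) \leq \sum_{t=\tau}^{T} \bigl(l_{1:t}(x_t^*) - l_{1:t-1}(x_{t-1}^*)\bigr),
\end{align*}
and the right-hand side is now a telescoping sum in the index $t$ of the sequence $t \mapsto l_{1:t}(x_t^*)$, collapsing to $l_{1:T}(x_T^*) - l_{1:\tau-1}(x_{\tau-1}^*)$. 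Combining this with the decomposition above yields exactly the stated bound. I would also note that for the endpoint term at $t=\tau$ we use $l_{1:\tau-1}(x_{\tau-1}^*)$, matching the subtracted term in the lemma, so the indices line up cleanly.

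The only subtlety to check carefully is the inequality direction when substituting the leader: since the residual sum carries $-\,l_{1:t-1}(x_t)$ with a minus sign, replacing $l_{1:t-1}(x_t)$ by the smaller $l_{1:t-1}(x_{t-1}^*)$ \emph{increases} the summand, so the bound goes the correct way. I would verify the boundary conventions (e.g.\ $l_{1:0} \equiv 0$ and the meaning of $x_{\tau-1}^*$ when $\tau=1$) so that the telescoping is valid at both ends. I do not expect any genuine obstacle here: the result is a deterministic ``follow-the-leader'' identity, and the whole argument is a one-line telescope plus one application of the defining optimality of each $x_t^*$; the care is entirely in bookkeeping the summation indices rather than in any analytic estimate.
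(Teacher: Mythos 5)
Your proof is correct and is essentially the paper's own argument: both rest on the telescoping identity $l_t(x_t) = l_{1:t}(x_t) - l_{1:t-1}(x_t)$, the leader optimality $l_{1:t-1}(x_{t-1}^*) \leq l_{1:t-1}(x_t)$, and the telescoping of $t \mapsto l_{1:t}(x_t^*)$; you merely add and subtract $l_{1:t}(x_t^*)$ before applying the optimality bound, while the paper does it after, and the paper handles your boundary convention explicitly by introducing a slack loss $l_0(\cdot) = 0$ with $x_0^* = 0$.
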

\begin{proof} Introduce a slack loss function $l_0(\cdot) = 0$ and define $x_0^* = 0$ for index convenience. This does not change the optimum, since $l_{0:t}(x) =  l_{1:t}(x)$.
	\begin{align*}
	\sum_{t=\tau}^{T} l_t( x_t)
	&= \sum_{t=\tau}^{T} l_{0:t}( x_t)- l_{0:t-1}( x_t) \\
	&\leq \sum_{t=\tau}^{T} l_{0:t}(x_t)- l_{0:t-1}( x_{t-1}^*)   \\
	&= l_{0:T}(x_T^*) -  l_{0:\tau-1}(x_{\tau-1}^*) \\
	&\quad +  \sum_{t=\tau}^{T} l_{0:t}(x_t)- l_{0:t}( x_{t}^*)    \qedhere
	\end{align*}
\end{proof}	
Note Lemma~\ref{lm:seqcostLemma} does not require $l_t$ to be convex and the minimum to be unique.

To prove Theorem~\ref{th:aggrevate classical result}, we first note that by definition of $\hat{x}_N$, it satisfies
$
	F(\hat{x}_N, \hat{x}_N) \leq \frac{1}{N} \sum_{n=1}^{N} f_n(x_n)
$. To bound the average performance,  we 
use Lemma~\ref{lm:seqcostLemma} and write
	\begin{align*}
	\sum_{n=1}^{N}  f_n( x_n)   \leq  f_{1:N}(x_{N+1})  + \sum_{n=1}^{N} f_{1:n}( x_n) - f_{1:n}(x_{n+1})
	\end{align*}
	since $x_n = \argmin_{x \in \XX} f_{1:n-1}(x)$. Then because $f_{1:k}$ is $k\alpha$-strongly convex, by Lemma~\ref{lm:perturbationLemma},
	\begin{align*}
	\sum_{n=1}^{N}  f_n( x_n)   \leq  f_{1:N}(x_n^*)  + \sum_{n=1}^{N} \frac{ \norm{\nabla f_n( x_n)}_*^2 }{ 2  \alpha n} .
	\end{align*}
	Finally, dividing the upper-bound by $n$ and using the facts that $\sum_{k=1}^{n}\frac{1}{k} \leq \ln(n)+1$ and $\min a_i \leq \frac{1}{n} \sum a_i$ for any scalar sequence $\{ a_n \}$, we have the desired result.
\end{proofatend}
\ifLONG \else
\begin{proof}
Here we present a sketch (see Appendix~\ref{sec:proofs} for details). The first inequality is straightforward. To bound the average performance, it can be shown that 
$
\sum_{n=1}^{N}  f_n( x_n)   \leq  \min_{x \in \XX} f_{1:N}(x)  + \sum_{n=1}^{N} f_{1:n}( x_n) - f_{1:n}(x_{n+1})
$. 
Since $x_n$ minimizes $f_{1:n-1}$ and $f_{1:n}$ is $n\alpha$-strongly convex  ,   $f_{1:n}(x_n)$ is  upper  bounded by $ f_{1:n-1}(x_n) + \frac{ \norm{\nabla f_n( x_n)}_*^2 }{ 2  \alpha n} $, where $\norm{\nabla f_n( x_n)}_* \leq G_2$. This concludes the proof.
\end{proof}
\fi

\subsection{New Structural Assumptions}

\aggrevate can be viewed as an attempt to solve the optimization problem in~\eqref{eq:structured optimization} without any information (not even continuity) regarding how $F(x,x)$ changes with perturbations in the first argument. Since making even a local improvement for general Lipschitz continuous problems is known to be NP-hard~\citep{nesterov2013introductory}, the classical performance guarantee of \aggrevate is made possible, only because of the additional structure given in Assumption~\ref{as:good approximator in X}. However, as discussed in Section~\ref{sec:convergence?}, Assumption~\ref{as:good approximator in X} can be too strong and is yet insufficient to determine if the performance of the last policy can improve over iterations. Therefore, to analyze the performance of the last policy, we require additional structure on $F$.

Here we introduce a continuity assumption.
\begin{assumption} \mbox{}
	 \label{as:uniform continous gradient2} $\nabla_2 F$ is uniformly $\beta$-Lipschitz continuous  in the first argument: $\forall x,y,z  \in \XX$ 
	$
	\norm{ \nabla_2 F(x,z) - \nabla_2 F(y,z)}_* \leq \beta \norm{x-y}
	$.
\end{assumption}
Because the first argument of $F$ in~\eqref{eq:def of F(x,x)} defines the change of state distribution, Assumption~\ref{as:uniform continous gradient2} basically requires that the expectation over $d_{\pi}$ changes continuously with respect to $\pi$, which is satisfied in most RL problems. 
Intuitively, this quantifies the difficulty of a problem in terms of how sensitive the state distribution is to policy changes.

In addition, we relax Assumption~\ref{as:good approximator in X}. As shown in Section~\ref{sec:convergence?}, Assumption~\ref{as:good approximator in X} is sometimes too strong, because it might not be satisfied even when $\Pi$ contains a globally optimal policy. In the analysis of convergence, we instead rely on a necessary condition of Assumption~\ref{as:good approximator in X}, which is satisfied by the example in Section~\ref{sec:convergence?}. 
\begin{assumption} \label{as:relaxed good approximator in X}
	Let $\pi$ be a policy parametrized by $x$. There exists a small constant $\tilde{\epsilon}_{\pi,\pi^*}$ such that $\forall x \in \XX$, 
	$
	\min_{y \in \XX} F(x,y) \leq \tilde{\epsilon}_{\Pi,\pi^*}
	$. 
\end{assumption}
Compared with the global Assumption~\ref{as:good approximator in X}, the relaxed condition here is only \emph{local}: it only requires the existence of a good policy with respect to the state distribution visited by running a \emph{single} policy. It can be easily shown that  $\tilde{\epsilon}_{\Pi,\pi^*} \leq \epsilon_{\Pi,\pi^*}$. 

\subsection{Guarantee on the Last Policy}

In our analysis, we define a stability constant $\theta = \frac{\beta}{\alpha}$. One can verify that this definition agrees with the $\theta$ used  in the example in Section~\ref{sec:convergence?}. This stability constant will play a crucial role in determining the convergence of $\{ x_n \}$, similar to the spectral norm of the Jacobian matrix in discrete-time dynamical systems~\citep{antsaklis2007linear}.
We have already shown above that  if $\theta > 1$ there is a problem such that \aggrevate generates a divergent sequence $\{ x_n \}$ with degrading performance over iterations. We now show that if $\theta < 1$, then $\lim_{n\to \infty }F(x_n, x_n ) \leq \tilde{\epsilon}_{\Pi, \pi^*}$ and moreover $\{x_n\}$ is convergent.

\begin{theorem} \label{th:convergence of the last iterate}
	Suppose Assumptions~\ref{as:structured function}, \ref{as:uniform continous gradient2}, and~\ref{as:relaxed good approximator in X} are satisfied. Let $\theta = \frac{\beta}{\alpha}$. Then for all $N\geq 1$ it holds
	\begin{align*}
	F(x_N, x_N) \leq  \tilde{\epsilon}_{\Pi,\pi^*} +   \frac{\left( \theta e^{1-\theta} G_2 \right)^2}{2\alpha}  N^{2(\theta -1)}
	\end{align*}
	and  $\norm{ x_N- \bar{x}_N} = \frac{G_2 e^{1-\theta} }{\alpha} N^{\theta -1}$, where $\bar{x}_N = \frac{1}{N} x_{1:N}$. 
	In particular, if $\theta < 1$, then $\{ x_n \}_{n=1}^{\infty}$ is convergent
\end{theorem}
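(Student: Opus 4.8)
The plan is to track the entire policy sequence through a single scalar potential, the average displacement $c_N := \frac{1}{N-1}\sum_{k=1}^{N-1}\norm{x_N - x_k}$, and to show that $c_N$ obeys a recursion whose per-step factor is governed by $\theta$. The starting point is the first-order optimality condition: since $x_N = \argmin_{x\in\XX} f_{1:N-1}(x)$ and $f_{1:N-1}$ is differentiable and strongly convex, $\sum_{k=1}^{N-1}\nabla_2 F(x_k, x_N) = 0$. The key trick is that this lets me rewrite the gradient of the per-round loss evaluated at its own iterate as an average of first-argument differences,
\begin{align*}
\nabla_2 F(x_N, x_N) = \frac{1}{N-1}\sum_{k=1}^{N-1}\left(\nabla_2 F(x_N, x_N) - \nabla_2 F(x_k, x_N)\right),
\end{align*}
so that Assumption~\ref{as:uniform continous gradient2}, applied term by term (both second arguments are $x_N$), gives $\norm{\nabla_2 F(x_N, x_N)}_* \le \beta c_N$. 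Separately, the perturbation Lemma~\ref{lm:perturbationLemma} applied with $\phi_1 = f_{1:N-1}$ and $\psi = f_N$ (so $\phi_2 = f_{1:N}$ is $N\alpha$-strongly convex) controls the step length by the \emph{same} gradient: $\norm{x_{N+1}-x_N} \le \frac{1}{N\alpha}\norm{\nabla_2 F(x_N, x_N)}_* \le \frac{\theta}{N}c_N$.

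Next I would convert these two estimates into a recursion. Writing $A_N = (N-1)c_N = \sum_{k=1}^{N-1}\norm{x_N - x_k}$ and using the triangle inequality $\norm{x_{N+1}-x_k} \le \norm{x_{N+1}-x_N} + \norm{x_N - x_k}$ gives $A_{N+1} \le A_N + N\norm{x_{N+1}-x_N}$; substituting the step bound yields $A_{N+1} \le \left(1 + \frac{\theta}{N-1}\right)A_N$, which after dividing by $N$ becomes
\begin{align*}
c_{N+1} \le \frac{N-1+\theta}{N}\,c_N .
\end{align*}
Unrolling from the base case $c_2 = \norm{x_2 - x_1} \le G_2/\alpha$ (immediate from the optimality of $x_2$ for $F(x_1,\cdot)$ and strong convexity) produces $c_N \le \frac{G_2}{\alpha}\prod_{n=2}^{N-1}\frac{n-1+\theta}{n}$, and I would bound the product using $1+x \le e^x$ together with $\sum_{n=2}^{N-1}\frac{1}{n} \ge \ln N - 1$ to reach $c_N \le \frac{G_2 e^{1-\theta}}{\alpha}N^{\theta-1}$.

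The three claims then follow directly. For the displacement, $x_N - \bar{x}_N = \frac{1}{N}\sum_{k=1}^{N-1}(x_N - x_k)$, so $\norm{x_N - \bar{x}_N} \le \frac{N-1}{N}c_N \le c_N$ gives the stated bound. For the performance, the standard strong-convexity gap inequality together with Assumption~\ref{as:relaxed good approximator in X} yields $F(x_N,x_N) - \tilde{\epsilon}_{\Pi,\pi^*} \le \frac{1}{2\alpha}\norm{\nabla_2 F(x_N,x_N)}_*^2$, and inserting $\norm{\nabla_2 F(x_N,x_N)}_* \le \beta c_N = \theta\alpha c_N$ produces exactly the constant $(\theta e^{1-\theta}G_2)^2/2\alpha$. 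Finally, for $\theta<1$ the step bound reads $\norm{x_{N+1}-x_N} \le \frac{\theta}{N}c_N \le \frac{\theta G_2 e^{1-\theta}}{\alpha}N^{\theta-2}$, and $\sum_N N^{\theta-2}$ converges precisely because $\theta-2 < -1$; hence $\{x_n\}$ is Cauchy and therefore convergent.

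I expect the main obstacle to be identifying the right potential $c_N$ and, above all, the rewriting of $\nabla_2 F(x_N, x_N)$ as an average of first-argument differences: this is what turns Assumption~\ref{as:uniform continous gradient2} into the averaging that makes the per-step factor $\frac{N-1+\theta}{N}$ strictly below $1$ when $\theta<1$. The naive route---bounding the step by $\norm{x_{N+1}-x_N} \le G_2/(N\alpha)$ through Lipschitzness alone---only gives a harmonic (divergent) displacement sum and cannot detect the $\theta$-dependent transition. Pinning down the sharp constant $e^{1-\theta}$ also requires the careful harmonic-sum estimate inside the product rather than a crude one.
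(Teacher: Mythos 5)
Your proof follows the paper's strategy almost exactly---your $c_N$ is the paper's potential $S_N$ in \eqref{eq:definition of S_n}, your recursion $c_{N+1}\le\frac{N-1+\theta}{N}c_N$ is the one inside Proposition~\ref{pr:bound on S_n}, and the extraction of the three claims at the end is the same---but the step you call the ``key trick'' contains a genuine gap. You claim that $x_N=\argmin_{x\in\XX}f_{1:N-1}(x)$ implies $\sum_{k=1}^{N-1}\nabla_2F(x_k,x_N)=0$, and from this that $\norm{\nabla_2F(x_N,x_N)}_*\le\beta c_N$. That first-order condition is only valid when the minimizer is unconstrained (or interior); the theorem, however, is stated for $\XX$ an arbitrary convex set, and the iterates can sit on the boundary of $\XX$, where optimality gives only the variational inequality $\lr{\nabla f_{1:N-1}(x_N)}{x-x_N}\ge0$ for all $x\in\XX$. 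The gradient bound you derive from it is then simply false: take $\XX=[0,1]$ and $F(y,x)=\frac{\alpha}{2}(x+1)^2$ independent of $y$, so that $\beta=0$, every iterate after the first is $x_n=0$ and $c_N=0$, yet $\norm{\nabla_2F(x_N,x_N)}_*=\alpha>0=\beta c_N$. Since both your step-length bound $\norm{x_{N+1}-x_N}\le\frac{\theta}{N}c_N$ and your strong-convexity gap bound $F(x_N,x_N)-\tilde{\epsilon}_{\Pi,\pi^*}\le\frac{1}{2\alpha}\norm{\nabla_2F(x_N,x_N)}_*^2$ are routed through this one inequality, both lose their justification exactly in the setting the theorem covers.

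The repair is what the paper's proof does: never bound the full gradient $\norm{\nabla f_N(x_N)}_*$, only the difference $\norm{\nabla f_N(x_N)-\nabla\bar f_{N-1}(x_N)}_*\le\beta c_N$, where $\bar f_{N-1}=\frac{1}{N-1}f_{1:N-1}$; this needs Assumption~\ref{as:uniform continous gradient2} alone and no optimality condition, and the unwanted term is discarded through the variational inequality inside an inner product rather than being set to zero. Concretely, in Lemma~\ref{lm:bound on consecutive iterates} the paper writes $N\alpha\norm{x_{N+1}-x_N}^2\le\lr{\nabla f_N(x_N)}{x_N-x_{N+1}}\le\lr{\nabla f_N(x_N)-\nabla\bar f_{N-1}(x_N)}{x_N-x_{N+1}}$, the second inequality holding because $\lr{\nabla\bar f_{N-1}(x_N)}{x_N-x_{N+1}}\le 0$ by optimality of $x_N$ over $\XX$; equivalently, you could run your Lemma~\ref{lm:perturbationLemma} argument on the decomposition $f_{1:N}=\frac{N}{N-1}f_{1:N-1}+\left(f_N-\bar f_{N-1}\right)$, so that the perturbation gradient is the difference rather than $\nabla f_N(x_N)$. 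The same insertion of $-\lr{\nabla\bar f_{N-1}(x_N)}{x_N-x_N^*}\ge0$, with $x_N^*=\argmin_{x\in\XX}f_N(x)$, fixes the performance bound. With these two local changes your write-up becomes the paper's proof, and everything downstream---the recursion, the unrolling with $1+x\le e^x$ and the harmonic-sum estimate, the constant $e^{1-\theta}$, the displacement bound $\norm{x_N-\bar x_N}\le c_N$, and the Cauchy argument for $\theta<1$---is correct as you wrote it.
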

Theorem~\ref{th:convergence of the last iterate} implies that the stability and convergence of \aggrevate depends solely on the problem properties. If the state distribution $d_{\pi}$ is sensitive to minor policy changes, running \aggrevate would fail to provide any guarantee on the last policy. Moreover, Theorem~\ref{th:convergence of the last iterate} also characterizes the performance of the average policy $\bar{x}_N$ when $\theta < 1$, .

The upper bound in Theorem~\ref{th:convergence of the last iterate} is tight, as indicated in the next theorem. Note a lower bound on $F(x_N, x_N)$ leads directly to a lower bound on $J(\pi_N)$  for $\pi_N$  parametrized by $x_N$. 
\begin{theorem} \label{th:lower bound}
There is a problem such that running \aggrevate for $N$ iterations results in $F(x_N, x_N) \geq \tilde{\epsilon}_{\Pi,\pi^*} + \Omega(N^{2(\theta-1)})$. In particular, if $\theta > 1$, the policy sequence and performance sequence diverge.
\end{theorem}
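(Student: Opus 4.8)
The plan is to make the lower bound concrete by instantiating the two-stage control problem of~\eqref{eq:online-DA counter example} for a general value $\theta$, and to show that this single family of instances realizes every stability constant while forcing the claimed rate. First I would compute $F$ in closed form. Running the open-loop stationary policy $y$ from $s_1 = 0$ produces $s_2 = \theta y$; and since the feedback expert $a_t^* = s_t$ incurs zero cost at both stages, one has $V_{\pi^*|t} \equiv 0$, so the advantage functions are $A_{\pi^*|1} \equiv 0$ and $A_{\pi^*|2}(s_2, a) = (s_2 - a)^2$. Averaging over the two stages then gives
\begin{align*}
F(y,x) = \tfrac{1}{2}\,(\theta y - x)^2 .
\end{align*}
From this expression I can read off the constants needed to match Theorem~\ref{th:convergence of the last iterate}: $F$ is $1$-strongly convex in its second argument (so $\alpha = 1$ in Assumption~\ref{as:structured function}), and $\nabla_2 F$ is $\theta$-Lipschitz in its first argument (so $\beta = \theta$ in Assumption~\ref{as:uniform continous gradient2}), whence the stability constant of this instance is exactly $\beta/\alpha = \theta$. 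Moreover $\min_y F(x,y) = 0$ for every $x$ (attained at $y = \theta x$), so Assumption~\ref{as:relaxed good approximator in X} holds with $\tilde{\epsilon}_{\Pi,\pi^*} = 0$, reducing the target to $F(x_N, x_N) = \Omega(N^{2(\theta-1)})$.

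Next I would derive the iterates explicitly. Because $f_n(x) = \tfrac{1}{2}(\theta x_n - x)^2$, the Follow-the-Leader update $x_{n+1} = \argmin_{x\in\XX} f_{1:n}(x)$ solves a least-squares problem, yielding $x_{n+1} = \frac{\theta}{n}\sum_{k=1}^{n} x_k$. Writing $S_n = \sum_{k=1}^n x_k$, this is the linear recursion $S_{n+1} = \bigl(1 + \tfrac{\theta}{n}\bigr) S_n$, whose solution is
\begin{align*}
S_N = x_1 \prod_{n=1}^{N-1}\Bigl(1 + \frac{\theta}{n}\Bigr) = x_1\,\frac{\Gamma(N+\theta)}{\Gamma(N)\,\Gamma(1+\theta)} .
\end{align*}
Taking $x_1 > 0$ keeps all iterates positive, and $F(x_N, x_N) = \tfrac{1}{2}(\theta - 1)^2 x_N^2$, so it suffices to estimate $x_N = \frac{\theta}{N-1}S_{N-1}$.

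The main technical step is the asymptotic evaluation of this product, which is exactly where the exponent $2(\theta-1)$ originates. Taking logarithms, $\log\prod_{n=1}^{N-1}\bigl(1+\theta/n\bigr) = \sum_{n=1}^{N-1}\log(1+\theta/n) = \theta\log N + O(1)$ — equivalently the standard Gamma-ratio asymptotic $\Gamma(N+\theta)/\Gamma(N) = \Theta(N^{\theta})$ — gives $S_N = \Theta(N^{\theta})$ and hence $x_N = \Theta(N^{\theta-1})$. Substituting back yields $F(x_N,x_N) = \tfrac{1}{2}(\theta-1)^2 x_N^2 = \Omega(N^{2(\theta-1)})$, establishing the matching lower bound. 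To confirm tightness against Theorem~\ref{th:convergence of the last iterate} when $\theta < 1$, I would note that the iterates then converge to $0$, so the whole trajectory lies in a fixed bounded interval $\XX \subset \R$ on which $F$ is $G_2$-Lipschitz and all hypotheses hold, making the rate $N^{2(\theta-1)}$ exact up to constants. When $\theta > 1$, the same computation forces $x_N \to \infty$ and $F(x_N,x_N) \to \infty$, so both the policy and performance sequences diverge, as claimed. I expect the only delicate point to be making the product/Gamma-ratio estimate rigorous, i.e. sandwiching $\sum_{n=1}^{N-1}\log(1+\theta/n)$ between $\theta\log N$ and $\theta\log N$ plus explicit constants; this is a routine Stirling-type argument and carries the whole result.
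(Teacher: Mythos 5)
Your proposal is correct and follows essentially the same route as the paper: it uses the identical two-stage example, verifies $\tilde{\epsilon}_{\Pi,\pi^*}=0$ and stability constant $\beta/\alpha=\theta$, and derives the same Follow-the-Leader recursion (your update $x_{n+1} = \frac{\theta}{n}\sum_{k=1}^{n} x_k$ is equivalent to the paper's $x_{n+1} = \bigl(1-\frac{1-\theta}{n}\bigr)x_n$). The only difference is how the growth rate is extracted: the paper lower-bounds $\ln(x_N^2)$ directly via $\ln(1-x)\geq \frac{-x}{1-x}$ and an integral comparison, split into the cases $\theta<1$ and $\theta>1$, whereas you pass to the partial sums $S_n$, obtain the exact closed form $S_N = x_1\,\Gamma(N+\theta)/\bigl(\Gamma(N)\Gamma(1+\theta)\bigr)$, and invoke the standard Gamma-ratio asymptotic, which treats both cases uniformly and in fact gives the sharper two-sided estimate $x_N = \Theta(N^{\theta-1})$.
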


\begin{proofatend}
Consider the example in Section~\ref{sec:convergence?}. For this problem, $T=2$, $J(x^*) = 0$, and $\tilde{\epsilon}_{\Pi, \pi^*}  = 0$, implying $F(x,x) = \frac{1}{2}J(x) = \frac{1}{2} (\theta-1)^2 x^2 $. Therefore, to prove the theorem, we focus on the lower bound of $x_N^2$.

 Since $x_n = \argmin_{x \in \XX} f_{1:n-1}(x)$ and the cost is quadratic, we can write
\begin{align*}
x_{n+1} &= \argmin_{x \in \XX} f_{1:n}(x) \\ 
&= \argmin_{x \in \XX} (n-1)(x -x_n )^2 + (x - \theta x_n)^2 \\
&=  ( 1 - \frac{1-\theta }{n} ) x_n
\end{align*}
If $\theta =1$, then $x_N = x_1$ and  the bound holds trivially. 
For general cases, let $p_n = \ln (x_n^2)$. 
\begin{align*}
p_N - p_2 &= 2 \sum_{n=2}^{N-1} \ln\left( 1 - \frac{1-\theta }{n} \ \right) \\
&\geq -2 (1-\theta)\sum_{n=2}^{N-1} \frac{1}{n-(1-\theta)}
\end{align*}
where the inequality is due to the fact that $\ln(1-x) \geq \frac{-x}{1-x}$ for $x<1$. 
We consider two scenarios. Suppose $\theta < 1$. 
\begin{align*}
p_N - p_2 &\geq -2 (1-\theta) \int_{1}^{N-1} \frac{1}{x-(1-\theta)} dx\\
&= -2 (1-\theta) \ln(x-(1-\theta)) |_{1}^{N-1}  \\
&= -2 (1-\theta) \left( \ln(N+\theta-2) - \ln(\theta)\right) \\
&\geq -2 (1-\theta) \ln(N+\theta-2)
\end{align*}
Therefore, $x_N^2 \geq x_2^2 (N+\theta-2)^{2(\theta -1)} \geq  \Omega(N^{2(\theta-1)})$.

On the other hand, suppose $\theta > 1$.
\begin{align*}
p_N - p_2 &\geq 2 (\theta-1) \int_{2}^{N} \frac{1}{x-(1-\theta)} dx\\
&= 2 (\theta-1)  \ln(x-(1-\theta)) |_{2}^{N}  \\
&= 2 (\theta-1)  \left( \ln(N-1 + \theta) - \ln(1+\theta)\right)
\end{align*} 
Therefore,  $x_N^2 \geq x_2^2 (N-1+\theta)^{2(\theta -1)} (1+\theta)^{-2(\theta -1)}  \geq  \Omega(N^{2(\theta-1)})$.
Substituting the lower bound on $x_N^2$ into the definition of $F(x,x)$ concludes the proof. 
\end{proofatend}
\ifLONG\else
\begin{proof}
The proof is based on analyzing the sequence in the example in Section~\ref{sec:convergence?}. See Appendix~\ref{sec:proofs}.
\end{proof}
\fi

\subsection{Proof of Theorem~\ref{th:convergence of the last iterate}}

Now we give the proof of Theorem~\ref{th:convergence of the last iterate}. Without using the first-order information of $F$ in the first argument, we construct our analysis based on the convergence of an intermediate quantity, which indicates how fast the sequence concentrates toward its last element:
\begin{align}
S_n \coloneqq \frac{\sum_{k=1}^{n-1} \norm{ x_n - x_k}  }{n-1} \label{eq:definition of S_n}
\end{align}
which is defined $n\geq 2$ and $S_2 = \norm{x_2 - x_1}$.

First, we use Assumption~\ref{as:uniform continous gradient2} to strengthen the bound $\norm{x_{n+1} - x_n } = O(\frac{1}{n})$ used in Theorem~\ref{th:aggrevate classical result} by techniques from online learning with prediction~\citep{rakhlin2013online}.
\begin{lemma} \label{lm:bound on consecutive iterates}
	Under Assumptions~\ref{as:structured function} and \ref{as:uniform continous gradient2}, running \aggrevate gives, for $n\geq2$, 
	$
	\norm{ x_{n+1} -x_n} \leq   \frac{\theta S_n}{n}
	$.
\end{lemma}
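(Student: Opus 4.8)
The plan is to control $\norm{x_{n+1}-x_n}$ in two stages: first relate it to the single gradient $\nabla f_n(x_n)=\nabla_2 F(x_n,x_n)$ through the perturbation lemma, and then, rather than bounding that gradient crudely by $G_2$, use the first-order optimality of $x_n$ to ``center'' it and invoke the new continuity assumption. This centering is exactly the online-learning-with-prediction idea~\citep{rakhlin2013online}: the past rounds furnish a predictor for the current gradient, and the prediction error shrinks as the iterates concentrate.

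First I would instantiate Lemma~\ref{lm:perturbationLemma} with $\phi_1=f_{1:n-1}$, $\psi=f_n$, and $\phi_2=f_{1:n}$. Each $f_k$ is $\alpha$-strongly convex by Assumption~\ref{as:uniform strongly convex2}, so $f_{1:n}$ is $n\alpha$-strongly convex; its minimizer is $x_{n+1}$, while the minimizer of $\phi_1$ is $x_n$ (for $n\geq 2$, $x_n=\argmin_{x\in\XX} f_{1:n-1}(x)$, and for $n=2$ this reads $x_2=\argmin_{x\in\XX} f_1(x)$). Applying the lemma with its strong-convexity constant equal to $n\alpha$ yields
\begin{align*}
\norm{x_{n+1}-x_n} \leq \frac{1}{n\alpha}\norm{\nabla f_n(x_n)}_* = \frac{1}{n\alpha}\norm{\nabla_2 F(x_n,x_n)}_*.
\end{align*}
Substituting the worst-case bound $\norm{\nabla_2 F(x_n,x_n)}_*\leq G_2$ from Assumption~\ref{as:uniform Lipchitz2} would merely reproduce the $O(1/n)$ estimate of Theorem~\ref{th:aggrevate classical result}, so the whole point is to bound this gradient more sharply.

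The key step is to exploit the optimality of $x_n$. Since $x_n$ minimizes $f_{1:n-1}$, the stationarity condition gives $\sum_{k=1}^{n-1}\nabla_2 F(x_k,x_n)=0$. I can therefore subtract this vanishing average from $\nabla_2 F(x_n,x_n)$ for free, writing
\begin{align*}
\nabla_2 F(x_n,x_n)=\frac{1}{n-1}\sum_{k=1}^{n-1}\left[\nabla_2 F(x_n,x_n)-\nabla_2 F(x_k,x_n)\right].
\end{align*}
Applying the triangle inequality and then Assumption~\ref{as:uniform continous gradient2} term-by-term bounds each summand by $\beta\norm{x_n-x_k}$, whence $\norm{\nabla_2 F(x_n,x_n)}_*\leq \frac{\beta}{n-1}\sum_{k=1}^{n-1}\norm{x_n-x_k}=\beta S_n$. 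Combining with the perturbation bound and recalling $\theta=\beta/\alpha$ gives $\norm{x_{n+1}-x_n}\leq \theta S_n/n$.

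The crux — and the only genuinely nonobvious move — is realizing that the first-order condition converts the uncontrollable first-argument dependence of $F$ into a manifestly small quantity, so that the first-argument smoothness (Assumption~\ref{as:uniform continous gradient2}) can be used in place of the crude constant $G_2$; without this centering no improvement over the classical $O(1/n)$ bound is available. The remaining checks are routine: the base case $n=2$ reduces to the single term $S_2=\norm{x_2-x_1}$, consistent with the definition of $S_n$, and $\nabla_2 F(x_1,x_2)=0$ there, so the argument carries over verbatim.
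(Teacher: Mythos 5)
Your strategy is the right one---the same centering idea the paper uses: exploit the optimality of $x_n$ to replace the crude bound $\norm{\nabla f_n(x_n)}_* \leq G_2$ by a quantity controlled through Assumption~\ref{as:uniform continous gradient2}. But there is a genuine gap in how you execute the centering. You claim that $x_n = \argmin_{x \in \XX} f_{1:n-1}(x)$ gives the stationarity condition $\sum_{k=1}^{n-1} \nabla_2 F(x_k, x_n) = 0$. That identity holds only when the minimizer is unconstrained (interior to $\XX$). In this paper $\XX$ is a general convex set---typically bounded, which is what makes Assumption~\ref{as:uniform Lipchitz2} and the paper's remarks about bounded $\XX$ sensible---and the \aggrevate update is a \emph{constrained} $\argmin$ over $\XX$. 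When $x_n$ sits on the boundary, constrained optimality yields only the variational inequality $\lr{\nabla f_{1:n-1}(x_n)}{y - x_n} \geq 0$ for all $y \in \XX$; the gradient itself need not vanish. Your intermediate claim $\norm{\nabla_2 F(x_n,x_n)}_* \leq \beta S_n$ is then false in general: take all per-round costs identical with a boundary minimizer, so that $x_2 = \cdots = x_n$ and $S_n = 0$ while $\nabla f_n(x_n) \neq 0$. (The lemma's conclusion survives in that example, but your chain of reasoning does not.) The same objection hits your base-case remark that $\nabla_2 F(x_1, x_2) = 0$.

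The fix is local, and there are two ways to do it. The paper's way: never bound $\norm{\nabla f_n(x_n)}_*$ in isolation; instead pair every gradient with the specific direction $x_n - x_{n+1}$. Strong convexity of $f_{1:n}$ and optimality of $x_{n+1}$ give $n\alpha \norm{x_{n+1} - x_n}^2 \leq \lr{\nabla f_{1:n}(x_n)}{x_n - x_{n+1}}$, and the variational inequality for $x_n$ (applied twice, once to drop $\nabla f_{1:n-1}(x_n)$ and once to insert $-\nabla \bar{f}_{n-1}(x_n)$ where $\bar{f}_{n-1} = \frac{1}{n-1}f_{1:n-1}$) upgrades this to $n\alpha \norm{x_{n+1} - x_n}^2 \leq \lr{\nabla f_n(x_n) - \nabla \bar{f}_{n-1}(x_n)}{x_n - x_{n+1}}$; H\"older and Assumption~\ref{as:uniform continous gradient2} then finish exactly as in your last step. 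Alternatively, you can keep your perturbation-lemma route by folding the constraint into $\psi$: take $\psi = f_n + I_\XX$ with $I_\XX$ the indicator of $\XX$, so that $\phi_1 + \psi$ is still $f_{1:n} + I_\XX$ and $\partial \psi(x_n) = \nabla f_n(x_n) + N_\XX(x_n)$, where $N_\XX(x_n)$ is the normal cone. Constrained optimality of $x_n$ means $-\nabla f_{1:n-1}(x_n) \in N_\XX(x_n)$, and since normal cones are closed under nonnegative scaling, $g = \nabla f_n(x_n) - \nabla \bar{f}_{n-1}(x_n)$ is an admissible subgradient in Lemma~\ref{lm:perturbationLemma}; your triangle-inequality argument then bounds $\norm{g}_* \leq \beta S_n$ and yields the claim. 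Either repair turns your proposal into a correct proof.
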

\begin{proof}
	First, because $f_{1:n}$ is $n\alpha$-strongly convex,
	\begin{align*}
	\frac{n \alpha}{2} \norm{x_{n+1} - x_n}^2 \leq f_{1:n}(x_{n}) - f_{1:n}(x_{n+1})\\ \leq \lr{\nabla f_{1:n}(x_n) }{x_n - x_{n+1}} - \frac{n \alpha }{2} \norm{x_n - x_{n+1}}^2 .
	\end{align*}
	Let $\bar{f}_{n} = \frac{1}{n} f_{1:n}$. The above inequality implies
	\begin{align*}
	 n \alpha \norm{x_{n+1} - x_n}^2 
	&\leq  \lr{\nabla f_{n}(x_n) }{x_n - x_{n+1}}  \\ 
	&\leq \lr{\nabla f_{n}(x_n) -  \nabla \bar{f}_{n-1}(x_n) }{x_n - x_{n+1} } \\
	&\leq \norm{\nabla f_{n}(x_n) -  \nabla \bar{f}_{n-1}(x_n) }\norm{x_n - x_{n+1} }  \\
	&\leq \beta S_n \norm{x_n - x_{n+1} } 
	\end{align*}
	where the second inequality is due to $x_n = \argmin_{x \in \XX } f_{1:n-1}(x)$ and the last inequality  is due to Assumption~\ref{as:uniform continous gradient2}.
	Thus, $\norm{x_n - x_{n+1} } \leq \frac{\beta S_n}{\alpha n}$. \qedhere		
\end{proof}

Using the refined bound provided by Lemma~\ref{lm:bound on consecutive iterates}, we can bound the progress of $S_n$. 

\begin{proposition} \label{pr:bound on S_n}
	Under the assumptions in Lemma~\ref{lm:bound on consecutive iterates}, for $n \geq 2$, $ S_n \leq  e^{1-\theta} n^{\theta -1}  S_2 $ and $S_2 = \norm{x_2 - x_1} \leq \frac{G_2}{\alpha}$.
\end{proposition}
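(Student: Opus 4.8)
The plan is to treat this as two separate claims: the bound on the base case $S_2$, and the geometric-type bound on $S_n$. The base case is the easier of the two. Since $x_2 = \argmin_{x \in \XX} f_1(x)$ while $x_1$ is the (arbitrary) initial iterate, I would apply the perturbation Lemma~\ref{lm:perturbationLemma} exactly as in the $n=1$ term of the proof of Theorem~\ref{th:aggrevate classical result}: take $\phi_1 \equiv 0$ (so that $x_1$ trivially minimizes it) and $\psi = f_1 = \phi_2$, which is $\alpha$-strongly convex by Assumption~\ref{as:uniform strongly convex2}. The lemma then yields $\norm{x_1 - x_2} \leq \frac1\alpha \norm{\nabla f_1(x_1)}_* = \frac1\alpha\norm{\nabla_2 F(x_1,x_1)}_*$, and the gradient norm is at most $G_2$ by the Lipschitz Assumption~\ref{as:uniform Lipchitz2}, giving $S_2 \leq \frac{G_2}{\alpha}$.

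For the main bound, the key step is to convert the per-step displacement estimate of Lemma~\ref{lm:bound on consecutive iterates} into a self-referential recursion for $S_n$. Starting from the definition, I would split off the $k=n$ term and apply the triangle inequality $\norm{x_{n+1}-x_k} \leq \norm{x_{n+1}-x_n} + \norm{x_n - x_k}$ to each of the remaining $n-1$ terms. Since $\sum_{k=1}^{n-1}\norm{x_n-x_k} = (n-1)S_n$ by definition, this collapses to
\[
S_{n+1} \leq \norm{x_{n+1}-x_n} + \tfrac{n-1}{n}S_n.
\]
Substituting $\norm{x_{n+1}-x_n} \leq \frac{\theta S_n}{n}$ from Lemma~\ref{lm:bound on consecutive iterates} turns this into the clean recursion $S_{n+1} \leq \bigl(1 + \frac{\theta-1}{n}\bigr)S_n$, valid for $n \geq 2$.

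Unrolling this recursion gives $S_n \leq S_2 \prod_{k=2}^{n-1}\bigl(1 + \frac{\theta-1}{k}\bigr)$, and it remains to bound the product by $e^{1-\theta}n^{\theta-1}$. I would pass to logarithms and use $\ln(1+x) \leq x$ termwise to get $\ln\prod \leq (\theta-1)\sum_{k=2}^{n-1}\frac1k$. The final ingredient is a harmonic-sum estimate; because $\theta - 1 < 0$ in the regime of interest, I need a \emph{lower} bound on the sum, and $\sum_{k=1}^{n-1}\frac1k \geq \ln n$ gives $\sum_{k=2}^{n-1}\frac1k \geq \ln n - 1$, whence $\ln\prod \leq (\theta-1)(\ln n - 1) = (\theta-1)\ln n + (1-\theta)$, and exponentiating recovers the claim.

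The main obstacle I anticipate is the recursion step: getting a usable inequality for $S_{n+1}$ in terms of $S_n$ alone. The definition of $S_{n+1}$ involves all pairwise distances $\norm{x_{n+1}-x_k}$, and the trick is that the triangle inequality routes every one of them through the single quantity $\norm{x_{n+1}-x_n}$, which Lemma~\ref{lm:bound on consecutive iterates} has already controlled by $S_n$ itself; the resulting $\frac{n-1}{n}$ coefficient is precisely what combines with $\frac{\theta}{n}$ to produce the factor $1+\frac{\theta-1}{n}$. The product-to-power-law conversion is then routine, with the only care needed being to track the sign of $\theta-1$ when choosing the direction of the harmonic-sum bound so that the constant $e^{1-\theta}$ emerges.
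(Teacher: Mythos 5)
Your proof is correct and follows essentially the same route as the paper's: the identical triangle-inequality recursion $S_{n+1} \leq \left(1-\frac{1}{n}\right)S_n + \norm{x_{n+1}-x_n}$, the same substitution of Lemma~\ref{lm:bound on consecutive iterates} to get $S_{n+1} \leq \left(1-\frac{1-\theta}{n}\right)S_n$, and the same $\ln(1+x)\leq x$ plus harmonic-sum estimate to extract $e^{1-\theta}n^{\theta-1}$; your perturbation-lemma (Lemma~\ref{lm:perturbationLemma}) derivation of $S_2 \leq G_2/\alpha$ is simply a more explicit version of the paper's one-line justification via strong convexity and Lipschitz continuity. The only difference is that you explicitly flag that the final harmonic-sum step needs $\theta < 1$ for the inequality to point the right way, a caveat the paper's proof silently glosses over even though the proposition is stated for general $\theta$.
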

\begin{proof}
	The bound on $S_2 = \norm{x_2 - x_1}$ is due to that $x_2 = \argmin_{x\in \XX} f_1(x)$ and that $f_1$ is $\alpha$-strongly convex and $G_2$-Lipschitz continuous. 
	
	To bound $S_n$, first we bound $S_{n+1}$ in terms of $S_n$ by
	\begin{align*}
	S_{n+1} &\leq  \left( 1- \frac{1}{n} \right) S_n + \norm{x_{n+1}-x_n} \\
	&\leq \left( 1- \frac{1}{n} + \frac{\theta}{n} \right) S_n = \left(1 - \frac{1-\theta}{n}\right) S_n
	\end{align*}
	in which the first in equality is due to triangular inequality (i.e. $\norm{x_k - x_{n+1}} \leq \norm{x_k - x_{n}} + \norm{x_n - x_{n+1}}$) and the second inequality is due to Lemma~\ref{lm:bound on consecutive iterates}. 
	Let $P_n = \ln S_n$. Then we can bound
	$
	P_n - P_2 \leq \sum_{k=2}^{n-1} \ln\left(  1- \frac{1-\theta}{k}  \right)
	\leq   \sum_{k=2}^{n-1} - \frac{1-\theta}{k} \leq -(1-\theta) \left( \ln n -1 \right)
	$,
	where we use the facts that $\ln(1+x) \leq x$, $\sum_{k=1}^{n} \frac{1}{k} \geq \ln(n+1)$. 
	This implies 
	$
	S_n = \exp(P_n) \leq  e^{1-\theta} n^{\theta -1}S_2. \qedhere
	$
\end{proof}
More generally, define 	$
S_{m:n} = \frac{\sum_{k=m}^{n-1} \norm{x_n - x_k} }{n-m}
$ (i.e. $S_{n} = S_{1:n}$). Using Proposition~\ref{pr:bound on S_n}, we give a bound on $S_{m:n}$. We see that the convergence of $S_{m:n}$ depends mostly on $n$ not $m$. 
\ifLONG \else 
(The proof is given in Appendix.)
\fi

\begin{corollary}  \label{co:bound on S_mn}
	Under the assumptions in Lemma~\ref{lm:bound on consecutive iterates}, for  $n > m$, $S_{m:n} \leq O(\frac{\theta}{(n-m)m^{2-\theta}} + \frac{1}{n^{1-\theta}})$.
\end{corollary}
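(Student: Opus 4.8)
The plan is to reduce everything to the consecutive increments $\norm{x_{j+1}-x_j}$, which are already controlled by Lemma~\ref{lm:bound on consecutive iterates} and Proposition~\ref{pr:bound on S_n}. First I would apply the triangle inequality to telescope each summand in the definition~\eqref{eq:definition of S_n} of $S_{m:n}$: for $m \le k \le n-1$,
\[
\norm{x_n - x_k} \le \sum_{j=k}^{n-1} \norm{x_{j+1}-x_j}.
\]
By Lemma~\ref{lm:bound on consecutive iterates}, $\norm{x_{j+1}-x_j}\le \frac{\theta S_j}{j}$, and by Proposition~\ref{pr:bound on S_n}, $S_j \le e^{1-\theta}S_2\, j^{\theta-1}$ with $S_2 \le \frac{G_2}{\alpha}$, so each increment is bounded by $\theta e^{1-\theta}S_2\, j^{\theta-2}$. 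Substituting this and exchanging the order of the double summation over $k$ and $j$ (for each $j$ there are exactly $j-m+1$ indices $k \in \{m,\dots,j\}$), I would obtain
\[
S_{m:n} = \frac{1}{n-m}\sum_{k=m}^{n-1}\norm{x_n - x_k} \le \frac{\theta e^{1-\theta}S_2}{n-m}\sum_{j=m}^{n-1}(j-m+1)\,j^{\theta-2}.
\]
This isolates the whole problem into estimating the scalar sum $\Sigma_{m,n}\coloneqq\sum_{j=m}^{n-1}(j-m+1)j^{\theta-2}$.

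To bound $\Sigma_{m,n}$ I would split into two regimes. When $m \le n/2$ (so $n-m \ge n/2$), I use the crude factor bound $j-m+1 \le j$, which gives $\Sigma_{m,n}\le \sum_{j=m}^{n-1} j^{\theta-1}$; since $\theta<1$ this is integrable and comparison with $\int_0^{n} x^{\theta-1}\,\mathrm{d}x$ yields $\Sigma_{m,n}\le n^{\theta}/\theta$, so $S_{m:n} = O(n^{\theta}/(n-m)) = O(n^{\theta-1})$. When $m > n/2$, instead I use that $j^{\theta-2}$ is decreasing to bound $j^{\theta-2}\le m^{\theta-2}$, giving $\Sigma_{m,n} \le m^{\theta-2}\sum_{j=m}^{n-1}(j-m+1) = m^{\theta-2}\frac{(n-m)(n-m+1)}{2}$; dividing by $n-m$ exposes the $m$-localized factor $\frac{\theta}{m^{2-\theta}}$ that appears in the claimed first term, and since $m > n/2$ and $n-m+1 \le n$ this contribution is in turn $O(n^{\theta-1})$ as well. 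Collecting the two cases and absorbing the constant $\theta e^{1-\theta}S_2 \le \theta e^{1-\theta}G_2/\alpha$ into $O(\cdot)$ gives the claim, the asymptotically dominant piece being the $\frac{1}{n^{1-\theta}}$ term that drives convergence as $n\to\infty$.

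The main obstacle is keeping the two regimes $n-m \approx n$ and $n-m \ll n$ under one bound, because the two factors in $(j-m+1)j^{\theta-2}$ pull in opposite directions: $(j-m+1)$ increases in $j$ while $j^{\theta-2}$ decreases. Replacing the product by a single monotone surrogate fails in one regime or the other — $j-m+1\le j$ is wasteful when $m$ is near $n$, and $j^{\theta-2}\le m^{\theta-2}$ is wasteful when $m\ll n$ — so the careful step is to treat the increasing and decreasing factors jointly, either through the regime split above or by comparing the sum directly with $\int (x-m+1)x^{\theta-2}\,\mathrm{d}x$, where the near-cancellation between the $x^{\theta-1}$ and $(m-1)x^{\theta-2}$ parts must be retained to separate the $m$-governed early contribution from the $n$-governed tail. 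This is precisely what lets one read off the two terms of the stated bound and conclude that the dependence on $m$ is mild compared with that on $n$.
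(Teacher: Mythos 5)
Your proof is correct, and it takes a genuinely different route from the paper's. The paper derives a one-step recursion $S_{m:n+1}\le\left(1-\frac{1}{n-m+1}\right)S_{m:n}+\frac{\theta c}{n^{2-\theta}}$ (with $c=e^{1-\theta}S_2$), seeds it with $\norm{x_{m+1}-x_m}\le\frac{\theta c}{m^{2-\theta}}$, and unrolls it via the auxiliary recursion lemma (Lemma~\ref{lm:convergence DT dynamical system}), finishing with an integral evaluation of $\sum_{i=1}^{k}\frac{i}{(i+m-1)^{2-\theta}}$; the corollary's two terms are precisely the decayed seed and the accumulated perturbation. You bypass that machinery entirely: you telescope $\norm{x_n-x_k}$ into consecutive increments, bound each increment by $\theta e^{1-\theta}S_2\, j^{\theta-2}$ by combining Lemma~\ref{lm:bound on consecutive iterates} with Proposition~\ref{pr:bound on S_n}, swap the order of summation, and estimate $\sum_{j=m}^{n-1}(j-m+1)j^{\theta-2}$ with a regime split. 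Your route is more elementary and self-contained (no external recursion lemma, and no antiderivative carrying the $1/(1-\theta)$ factor that restricts the paper's computation to $\theta<1$), and it delivers the cleaner unified bound $S_{m:n}\le O(n^{\theta-1})$, which implies the stated two-term bound since the first term is non-negative (indeed $\frac{\theta}{(n-m)m^{2-\theta}}=O(n^{\theta-1})$ uniformly, so the two forms agree up to $\theta$-dependent constants). What the paper's template buys in exchange is reusability: the same recursion-plus-perturbation argument is the one perturbed again in the stochastic analysis (Proposition~\ref{pr:bound on S_n (stochastic)}). Two small caveats in your write-up: after dividing by $n-m$, your Case-B quantity is $\frac{\theta e^{1-\theta}S_2(n-m+1)}{2m^{2-\theta}}$, which is not literally the corollary's first term, so your phrase about ``exposing'' that term is loose---though harmless, since you then dominate it by $O(n^{\theta-1})$; and the increment bound requires $j\ge 2$, so the case $m=1$ should be dispatched separately by noting $S_{1:n}=S_n$ and citing Proposition~\ref{pr:bound on S_n} directly (the paper's proof carries the same implicit restriction in its seed $\norm{x_{m+1}-x_m}\le\frac{\theta S_m}{m}$).
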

\begin{proofatend} 
	To prove the corollary, we introduce a basic lemma
	\begin{restatable}{lemma}{DTdynamics}
		\label{lm:convergence DT dynamical system}
		{\citep[Lemma 1]{lan2013complexity}}
		Let $\gamma_k \in (0,1)$, $k =1, 2, \dots$ be given. If the sequence $\{ \Delta_k \}_{k\geq 0 }$ satisfies
		\[
		\Lambda_{k+1} \leq (1 - \gamma_k ) \Lambda_{k} + B_k,
		\]
		then 
		\[
		\Lambda_k \leq \Gamma_k + \Gamma_k \sum_{i=1}^{k} \frac{B_i}{\Gamma_{i+1}}
		\]
		where $\Gamma_1 = \Lambda_1$ and $
		\Gamma_{k+1} = (1- \gamma_k) \Gamma_{k}
		$. 
	\end{restatable}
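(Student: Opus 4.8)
The plan is to treat this as the discrete analogue of solving a linear first-order ODE by an integrating factor: the sequence $\Gamma_k$ plays exactly the role of that factor, and dividing the recursive inequality by it collapses the recursion into a telescoping sum. First I would record the closed form $\Gamma_{k+1} = \Lambda_1 \prod_{i=1}^{k}(1-\gamma_i)$, which follows immediately from $\Gamma_1 = \Lambda_1$ and the defining relation $\Gamma_{k+1} = (1-\gamma_k)\Gamma_k$. Since each $\gamma_i \in (0,1)$, every factor $1-\gamma_i$ is strictly positive, so (taking $\Lambda_1 > 0$, the relevant case in the application) each $\Gamma_k > 0$; this positivity is precisely what lets me divide through by $\Gamma_{k+1}$ without reversing the inequality.

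Next I would divide the hypothesis $\Lambda_{k+1} \leq (1-\gamma_k)\Lambda_k + B_k$ by $\Gamma_{k+1}$. The decisive cancellation is that $\Gamma_{k+1} = (1-\gamma_k)\Gamma_k$, so the leading term on the right becomes $\frac{(1-\gamma_k)\Lambda_k}{(1-\gamma_k)\Gamma_k} = \frac{\Lambda_k}{\Gamma_k}$, and the recursion takes the telescoping form $\frac{\Lambda_{k+1}}{\Gamma_{k+1}} \leq \frac{\Lambda_k}{\Gamma_k} + \frac{B_k}{\Gamma_{k+1}}$. Summing this inequality over the index from $1$ to $k-1$ cancels the intermediate ratios and leaves $\frac{\Lambda_k}{\Gamma_k} \leq \frac{\Lambda_1}{\Gamma_1} + \sum_{i=1}^{k-1}\frac{B_i}{\Gamma_{i+1}}$. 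Because $\Lambda_1 = \Gamma_1$, the first term equals $1$; multiplying back through by $\Gamma_k > 0$ then yields $\Lambda_k \leq \Gamma_k + \Gamma_k\sum_{i=1}^{k-1}\frac{B_i}{\Gamma_{i+1}}$.

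Finally, to recover the stated bound with the sum running up to $k$ rather than $k-1$, I would append the term $\frac{B_k}{\Gamma_{k+1}}$, which is nonnegative (in the application the $B_k$ bound norms $\norm{x_{n+1}-x_n}$) and therefore only weakens the inequality, giving $\Lambda_k \leq \Gamma_k + \Gamma_k\sum_{i=1}^{k}\frac{B_i}{\Gamma_{i+1}}$ as claimed. There is no genuine obstacle in this argument; it is entirely mechanical once the integrating-factor substitution is made. The only two points that deserve explicit care are the strict positivity of $\Gamma_k$ (so that the division preserves the direction of the inequality) and the sign of the $B_k$ (so that extending the summation by one term is a valid relaxation). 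An equivalent route by induction on $k$ would discharge the same steps, the inductive step again invoking $\Gamma_{k+1} = (1-\gamma_k)\Gamma_k$ to absorb the factor $(1-\gamma_k)$ against the inductive hypothesis.
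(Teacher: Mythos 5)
The paper never actually proves this lemma---it is imported verbatim by citation to \citet{lan2013complexity}---so there is no in-paper argument to compare against; your proof fills that gap and it is correct. The integrating-factor argument is in fact the standard proof of Lan's lemma: the closed form $\Gamma_{k+1}=\Lambda_1\prod_{i=1}^k(1-\gamma_i)>0$, division by $\Gamma_{k+1}$, and telescoping of $\Lambda_{k+1}/\Gamma_{k+1}\leq \Lambda_k/\Gamma_k + B_k/\Gamma_{k+1}$ yield the bound with the sum up to $k-1$, after which appending the nonnegative term $B_k/\Gamma_{k+1}$ gives the stated form. The two caveats you flag are genuine and you handle them correctly: as restated here with $\Gamma_1=\Lambda_1$ (rather than $\Gamma_1=1$ as in Lan's original), the statement tacitly requires $\Lambda_1>0$---if $\Lambda_1=0$ every $\Gamma_k$ vanishes and the bound is vacuous---and extending the summation to $i=k$ requires $B_i\geq 0$, both of which hold where the paper invokes the lemma (in Corollary~\ref{co:bound on S_mn} and Proposition~\ref{pr:bound on S_n (stochastic)} the $\Lambda_k$ are distances $S_{m:n}$ and the $B_k$ are nonnegative perturbations of the form $\theta c\, n^{-(2-\theta)}$). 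One cosmetic remark: the ``$\{\Delta_k\}_{k\geq 0}$'' in the statement is a typo inherited from the paper for $\{\Lambda_k\}$, which your proof silently and correctly repairs.
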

	
	To bound the sequence $	S_{m:n+1}$, we first apply Lemma~\ref{lm:bound on consecutive iterates}. Fixed $m$, for any $n \geq m + 1$, we have
	\begin{align*}
	S_{m:n+1} 
	&\leq \left( 1- \frac{1}{n-m+1} \right) S_{m:n} + \norm{x_{n+1}-x_n} \\
	&\leq \left( 1- \frac{1}{n-m+1} \right) S_{m:n} + \frac{\theta}{n} S_n \\
	&\leq \left( 1- \frac{1}{n-m+1} \right) S_{m:n} +  \frac{\theta c}{n^{2-\theta}}  
	\end{align*}
	where $c = S_2 e^{1-\theta}$. 
	
	Then we apply Lemma~\ref{lm:convergence DT dynamical system}. Let $k = n-m+1$ and define $R_k = S_{m:m+k-1} = S_{m:n}$ for $k \geq 2$. Then we rewrite the above inequality as 
	\begin{align*}
	R_{k+1} \leq \left( 1- \frac{1}{k} \right) R_{k} +    \frac{\theta c}{(k+m-1)^{2-\theta}}
	\end{align*}
	and define
	\[
	\Gamma_k \coloneqq \begin{cases}
	1, & k=1 \\
	(1- \frac{1}{k-1}) \Gamma_{k-1}, & k\geq 2
	\end{cases}
	\]
	By Proposition~\ref{pr:bound on S_n}, the above conversion implies for some positive constant $c$,  \[ R_2 = S_{m:m+1} = \norm{x_{m+1} - x_m } \leq  \frac{ \theta S_{m}}{m} \leq \frac{\theta c}{m^{2-\theta}} \] and $\Gamma_k \leq O(1/k)$ and $\frac{\Gamma_k}{\Gamma_i}\leq O(\frac{i}{k})$. Thus,  by Lemma~\ref{lm:convergence DT dynamical system}, we can derive
	\begin{align*}
	R_{k} &\leq \frac{1}{k} R_2 + O\left( \theta c \sum_{i=1}^{k}\frac{i}{k} \frac{1}{(i+m-1)^{2-\theta}} \right) \\
	&\leq \frac{1}{k} R_2 +  O\left( \frac{\theta c}{k}\frac{k}{\theta} \frac{1}{(m+k-1)^{1-\theta}} \right) \\
	&= \frac{1}{k} R_2 +   O\left(\frac{1}{(m+k-1)^{1-\theta}} \right) \\
	&\leq  \frac{1}{k} \frac{\theta c}{m^{2-\theta}} +   O\left( \frac{1}{(m+k-1)^{1-\theta}} \right)
	 = O(\frac{1}{n^{1-\theta}}) 
	\end{align*}
	where we use the following upper bound in the second inequality
	\begin{align*}
	\sum_{i=1}^{k}\frac{i}{(i+m-1)^{2-\theta}} 
	&\leq  \int_{0}^{k} \frac{x}{(x+m-1)^{2-\theta}} dx\\
	&=  \left. \frac{ m + (1-\theta)x -1  }{\theta (1-\theta)(m+x-1)^{1-\theta}} \right\rvert_{0}^k \\
	&=  \frac{ (1-\theta) k + m -1 }{\theta (1-\theta)(m+k-1)^{1-\theta}}  - \frac{ m-1 }{\theta (1-\theta)(m-1)^{1-\theta}} \\
	&= \frac{k}{\theta} \frac{1}{(m+k-1)^{1-\theta}} + \frac{m-1}{\theta(1-\theta)} \left( \frac{1}{(m+k-1)^{1-\theta}} - \frac{1}{(m-1)^{1-\theta}}   \right) \\
	&\leq \frac{k}{\theta} \frac{1}{(m+k-1)^{1-\theta}}  \qedhere
	\end{align*}
\end{proofatend}

Now we are ready prove Theorem~\ref{th:convergence of the last iterate} by using the concentration of $S_n$ in Proposition~\ref{pr:bound on S_n}. 
\begin{proof}[Proof of Theorem~\ref{th:convergence of the last iterate}]
First, we prove the bound on $F(x_N, x_N)$. 
Let $x_n^* \coloneqq \argmin_{x \in \XX} f_n(x)$ and let $\bar{f}_{n} = \frac{1}{n} f_{1:n}$. Then by $\alpha$-strongly convexity of $f_n$,
\begin{align*}
&f_n(x_n) - \min_{x \in \XX} f_n(x) \\
&\leq \lr{\nabla f_n(x_n)  }{ x_n - x_n^*} - \frac{\alpha}{2} \norm{x_n - x_n^*}^2 \\
&\leq \lr{\nabla f_n(x_n) - \bar{f}_{n-1}(x_n)  }{ x_n - x_n^*} - \frac{\alpha}{2} \norm{x_n - x_n^*}^2 \\
&\leq \norm{\nabla f_n(x_n) - \bar{f}_{n-1}(x_n)}_*\norm{ x_n - x_n^*} - \frac{\alpha}{2} \norm{x_n - x_n^*}^2 \\
&\leq\frac{\norm{\nabla f_n(x_n) - \bar{f}_{n-1}(x_n)}_*^2}{2\alpha} \leq \frac{\beta^2 }{2\alpha} S_n^2 
\end{align*}
where the second inequality uses the fact that $x_n = \argmin_{x \in \XX} \bar{f}_{n-1}(x)$, the second to the last inequality takes the maximum over $\norm{x_n - x_n^*}$, and the last inequality uses Assumption~\ref{as:uniform continous gradient2}.
Therefore, to bound $f_n(x_n)$, we can use Proposition~\ref{pr:bound on S_n} and Assumption~\ref{as:relaxed good approximator in X}: 
\begin{align*}
f_n(x_n) &\leq \min_{x \in \XX} f_n(x) +  \frac{\beta^2 }{2\alpha} S_n^2  \\
&\leq \tilde{\epsilon}_{\Pi,\pi^*} + \frac{\beta^2 }{2\alpha}  \left(e^{1-\theta} n^{\theta-1} \frac{G_2}{\alpha}\right)^2
\end{align*}
Rearranging the terms gives the bound in Theorem~\ref{th:convergence of the last iterate}, and that  $\norm{x_n - \bar{x}_n} \leq S_n$ gives the second result.

Now we show the convergence of $\{x_n\}$ under the condition $\theta <1$. 
It is sufficient to show that $ \lim_{n\to \infty} \sum_{k=1}^{n} \norm{x_k - x_{k+1}} < \infty$. To see this, we apply Lemma~\ref{lm:bound on consecutive iterates} and Proposition~\ref{pr:bound on S_n}: for $\theta < 1$, 
$
\sum_{k=1}^{n} \norm{x_k - x_{k+1}} 
\leq \norm{x_1 - x_2} +  \sum_{k=2}^{n} \frac{\theta}{k} S_k 
\leq c_1 + c_2 \sum_{k=2}^{n} \frac{\theta}{k}  \frac{S_2}{k^{1-\theta}}< \infty
$, where $c_1, c_2 \in O(1)$. 
\end{proof}

\subsection{Stochastic Problems}

We analyze the convergence of \aggrevate in stochastic problems using finite-sample approximation:
Define $f(x;s) = \E_{\pi}[ A_{\pi^*|t}  ]$ such that $f_n(x) = \E_{d_{\pi_n}} [f(x;s)]$. 
Instead of using $f_n(\cdot)$ as the per-round cost in the $n$th iteration, we take its finite samples approximation $g_n(\cdot) = \sum_{k=1}^{m_n} f(\cdot;s_{n,k})$, where $m_n$ is the number of independent samples collected in the $n$th iteration under distribution $d_{\pi_n}$. That is, the update rule in~\eqref{eq:AggreVate udpate} in stochastic setting is modified to
$ \pi_{n+1} = \argmin_{\pi \in \Pi} g_{1:n}(\pi) $.

\begin{restatable}{theorem}{thmStochastic} \label{th:convergence of the last iterate (stochastic)}	
	In addition to Assumptions~\ref{as:uniform continous gradient2} and \ref{as:relaxed good approximator in X}, assume $f(x;s)$ is $\alpha$-strongly convex in $x$ and $\norm{f(x;s)}_* < G_2$ almost surely. Let $\theta = \frac{\beta}{\alpha}$ and suppose $m_n = m_0 n^r$ for some $r \geq 0$. For all $N > 0$, with probability at least $1-\delta$,
	\begin{align*}
	F(x_N, x_N)  &\leq \tilde{\epsilon}_{\Pi,\pi^*} 
	+ \tilde{O}\left(  \frac{\theta^2}{c} \frac{\ln(1/\delta) +  C_{\XX}  }{  N^{\min\{r,2, 2-2\theta\} } }  \right) \\	
	&\quad + \tilde{O} \left( \frac{\ln(1/\delta) + C_{\XX }}{c N^{\min\{2,1+r\}}} \right) 
	\end{align*}
	where 	$c = \frac{\alpha}{G_2^2 m_0}$ and $C_\XX$ is a constant\footnote{The constant $C_\XX$ can be thought as $\ln |\XX|$, where $|\XX|$ measures the size of $\XX$ in e.g. Rademacher complexity or covering number~\citep{mohri2012foundations}. For example, $\ln |\XX|$ can be linear in $\dim \XX$.} of the complexity of $\Pi$.
	\end{restatable}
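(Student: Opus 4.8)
The plan is to replay the deterministic argument—Lemma~\ref{lm:bound on consecutive iterates}, Proposition~\ref{pr:bound on S_n}, and the proof of Theorem~\ref{th:convergence of the last iterate}—essentially verbatim in structure, inserting one new ingredient, a uniform concentration bound, at every place where the expected per-round cost $f_n$ has been replaced by its empirical surrogate $g_n$. Writing $M_n=\sum_{k=1}^n m_k \asymp \frac{m_0}{r+1}n^{r+1}$, the iterate $x_n=\argmin_x g_{1:n-1}(x)$ minimizes an $M_{n-1}\alpha$-strongly convex function while $g_{1:n}$ is $M_n\alpha$-strongly convex, so the natural analogues of the deterministic quantities carry $M_n$ in place of $n$, and the effective step size is $1/(M_n\alpha)$ with $m_n/M_n\asymp (r+1)/n$.

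First I would establish the concentration lemma. Define the data-dependent sampling deviation $\epsilon_n := \sup_{x\in\XX}\norm{\frac{1}{M_n}\sum_{j=1}^{n}\sum_{k=1}^{m_j}\bigl(\nabla f(x;s_{j,k})-\nabla f_j(x)\bigr)}_*$ and show that, with probability at least $1-\delta$, simultaneously for all $n$, $\epsilon_n \lesssim G_2\sqrt{(C_\XX+\ln(1/\delta))/M_n}$. This is the crux: because $x_n$ is itself a function of the samples, a pointwise law of large numbers does not suffice, so I need a bound that is \emph{uniform} over $\XX$; and because each batch is drawn from $d_{\pi_j}$ with $\pi_j$ adapted to the past, the inner double sum is a vector-valued martingale rather than an i.i.d.\ average. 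I would therefore combine a Freedman/Azuma-type inequality with a covering-number (Rademacher) argument over $\XX$—this is precisely where the complexity constant $C_\XX$ enters—and take a union bound over $n$, absorbed into the $\tilde O$.

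Given this, the deterministic chain goes through with controlled perturbations. Using the first-order optimality $\lr{\nabla g_{1:n-1}(x_n)}{x_n-x_{n+1}}\le 0$ together with the $M_n\alpha$-strong convexity of $g_{1:n}$, I would bound $\norm{x_{n+1}-x_n}\le \norm{\nabla g_n(x_n)}_*/(M_n\alpha)$ and split $\nabla g_n(x_n)=m_n\bigl(\nabla_2 F(x_n,x_n)+\xi_n\bigr)$, where the state-drift part is controlled via Assumption~\ref{as:uniform continous gradient2} and the optimality of $x_n$ as $\norm{\nabla_2 F(x_n,x_n)}_*\le \beta\tilde S_n+\epsilon_{n-1}$, with $\tilde S_n=\frac{1}{M_{n-1}}\sum_{k<n}m_k\norm{x_n-x_k}$ the sample-weighted analogue of $S_n$, and the fresh-batch noise $\norm{\xi_n}_*$ is pointwise-concentrated since $x_n$ is independent of the $n$th batch. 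This yields the stochastic analogue of Lemma~\ref{lm:bound on consecutive iterates}, $\norm{x_{n+1}-x_n}\lesssim \frac{\theta\tilde S_n}{n}+\frac{\epsilon_{n-1}+\norm{\xi_n}_*}{\alpha n}$, hence a recursion of the form $\tilde S_{n+1}\le\bigl(1-\frac{a}{n}\bigr)\tilde S_n+\frac{b_n}{n}$ with $a\asymp (r+1)(1-\theta)$ and $b_n$ the per-step noise, to which I would apply Lemma~\ref{lm:convergence DT dynamical system} exactly as in Proposition~\ref{pr:bound on S_n}. The result is a bound on $\tilde S_n$ made of a deterministic decay $e^{1-\theta}n^{\theta-1}S_2$ (with $S_2\le G_2/\alpha$ as before) plus an accumulated-noise term obtained by summing $b_k$ against the Gronwall weights.

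Finally, mirroring the proof of Theorem~\ref{th:convergence of the last iterate}, I would use the optimality of $x_N$ and the $\alpha$-strong convexity of $f_N$ to get $F(x_N,x_N)=f_N(x_N)\le \min_x f_N(x)+\frac{1}{2\alpha}\norm{\nabla f_N(x_N)-\nabla \bar{g}_{N-1}(x_N)}_*^2$ where $\bar{g}_{N-1}=\frac{1}{M_{N-1}}g_{1:N-1}$, bound the gradient gap by $\beta\tilde S_N+\epsilon_{N-1}$ as above, invoke Assumption~\ref{as:relaxed good approximator in X} for $\min_x f_N(x)\le\tilde\epsilon_{\Pi,\pi^*}$, and substitute the bounds on $\tilde S_N$ and $\epsilon_{N-1}$. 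The $\frac{\beta^2}{\alpha}\tilde S_N^2$ contribution carries the $\theta^2$ factor while the $\frac{\epsilon_{N-1}^2}{\alpha}\asymp 1/M_N$ contribution is $\theta$-free; the $2-2\theta$ rate is the deterministic bias inherited from $S_2$, the $\min\{2,1+r\}$ rate reflects the decay of the last batch's sampling error, and carrying these competing powers of $N$ through the Gronwall sum and over-bounding by the common factor $C_\XX+\ln(1/\delta)$ produces the two stated exponents. The main obstacle is the uniform martingale concentration of the previous paragraph; the remaining work—determining which power-of-$N$ term dominates in each regime of $r$ and $\theta$—is routine but tedious bookkeeping.
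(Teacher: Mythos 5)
Your proposal is correct and follows the paper's own proof essentially step for step: the same covering-number-uniform, Azuma-type concentration for the vector-valued gradient martingale (the paper's Lemmas~\ref{lm:mean vector-valued function}--\ref{lm:mean vector-valued function martingale (uniform)}), the same noise-perturbed versions of Lemma~\ref{lm:bound on consecutive iterates} and Proposition~\ref{pr:bound on S_n} resolved with Lemma~\ref{lm:convergence DT dynamical system}, and the same final assembly via strong convexity and first-order optimality of $x_N$, splitting the gradient gap into a $\beta S_N$ smoothness part and a concentration part. The only divergence is bookkeeping: you aggregate with sample-count weights (so $g_{1:n}$ is $M_n\alpha$-strongly convex and your recursion constant is $(r+1)(1-\theta)$), whereas the paper's appendix analysis normalizes each $g_n$ by $1/m_n$ (so $g_{1:n}$ is $n\alpha$-strongly convex and the constant is $1-\theta$); your writeup actually mixes the two, quoting the decay $e^{1-\theta}n^{\theta-1}S_2$ that belongs to the paper's normalization rather than the $n^{-(r+1)(1-\theta)}$ your own recursion produces --- but this is harmless, since either weighting yields exponents at least as strong as the stated $\min\{r,2,2-2\theta\}$ and $\min\{2,1+r\}$.
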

\begin{proof}
	The proof is similar to the proof of Theorem~\ref{th:convergence of the last iterate}. To handle the stochasticity, we use a generalization of Azuma-Hoeffding inequality to vector-valued martingales~\citep{hayes2005large}  to derive a high-probability bound on $\norm{\nabla g_n (x_n)- \nabla f_n(x_n)  }_*$ and a uniform bound on $\sup_{x\in \XX}\frac{1}{n}\norm{ \nabla g_{1:n} (x)- \nabla f_{1:n}(x)  }_*$. These error bounds allow  us to derive a stochastic version of Lemma~\ref{lm:bound on consecutive iterates}, Proposition~\ref{pr:bound on S_n}, and then the performance inequality in the proof of Theorem~\ref{th:convergence of the last iterate}. See Appendix~\ref{sec:full proof of stochastic problems} for the complete proof. 
\end{proof}

The growth of sample size $m_n$ over iterations determines the main behavior of \aggrevate in stochastic problems. 
For $r=0$, compared with Theorem~\ref{th:convergence of the last iterate}, Theorem~\ref{th:convergence of the last iterate (stochastic)} has an additional constant error in $\tilde{O}(\frac{1}{m_0})$, which is comparable to the stochastic error in selecting the best policy in the classical approach. However, the error here is due to approximating the gradient $\nabla f_n$ rather than the objective function $f_n$.
 For $r>0$, by slightly taking more samples over iterations (e.g. $r = 2-2\theta $), we see the convergence rate can get closer to $\tilde{O}(N^{2-2\theta})$ as in the ideal case given by Theorem~\ref{th:convergence of the last iterate}. However, it cannot be better than $\tilde{O}(\frac{1}{N})$. Therefore, for stochastic problems, a stability constant $\theta < 1/2$ and a growing rate $r > 1$ does not contribute to faster convergence as opposed to the deterministic case in Theorem~\ref{th:convergence of the last iterate}.

Note while our analysis here is based on finite-sample approximation  $g_n(\cdot) = \sum_{k=1}^{m_n} f(\cdot;s_{n,k})$, the same technique can also be applied to the scenario in the bandit setting and another online regression problem is solved to learn $f_n(\cdot)$ as in the case considered by~\citet{ross2014reinforcement}. A discussion is given in Appendix \ref{sec:function approximation}.

The analysis given as Theorem~\ref{th:convergence of the last iterate (stochastic)} can be viewed as a generalization of the analysis of Empirical Risk Minimization (ERM) to non-i.i.d. scenarios, where the distribution depends on the decision variable. For optimizing a strongly convex objective function with i.i.d. samples, it has been shown by~\citet{shalev2009stochastic} that $x_N$ exhibits a fast convergence to the optimal performance in $O(\frac{1}{N})$.
By specializing our general result in Theorem~\ref{th:convergence of the last iterate (stochastic)} with $\theta,r=0$ to recover the classical i.i.d. setting, we arrive at a bound on the performance of $x_N$ in $\tilde{O}(\frac{1}{N})$, which matches the best known result up to a log factor. However, Theorem~\ref{th:convergence of the last iterate (stochastic)} is proved by a completely different technique using the martingale concentration of the gradient sequence.  In addition, by Theorem~\ref{th:convergence of the last iterate}, the theoretical results of $x_N$ here can directly translate to that of the mean policy $\bar{x}_N$, which matches the bound for the average decision $\bar{x}_N$ given by \citet{kakade2009generalization}.

\section{REGULARIZATION} \label{sec:regularization}
We have shown that whether \aggrevate generates a convergent policy sequence and  a last policy with the desired performance depends on the stability constant $\theta$. Here we show that by adding regularization to the problem we can make the problem stable. For simplicity, here we consider deterministic problems or stochastic problems with infinite samples.

\subsection{Mixing Policies}
We first consider the idea of using  mixing policies to collect samples, which was originally proposed as a heuristic by~\citet{ross2011reduction}. It works as follows:  in the $n$th iteration of \aggrevate, instead of using $F(\pi_n,\cdot)$ as the per-round cost, it uses $\hat{F}(\pi_n, \cdot)$ which is defined by 
\begin{align}
\hat{F}(\pi_n, \pi) = \E_{d_{\tilde{\pi}_n}} \E_{\pi} [A_{\pi^*|t}] 
 \label{eq:mixing regularization}
\end{align} 
The state distribution $d_{\tilde{\pi}_n}(s)$ is generated by running  $\pi^*$ with probability $q$ and $\pi_n$ with probability $1-q$ at each time step. Originally, \cite{ross2011reduction} proposes to set $q$ to decay exponentially over the iterations of \aggrevate. 
\ifLONG\else
(The proofs are  given in Appendix~\ref{sec:proofs}.)
\fi

Here we show that the usage of mixing policies also has the effect of stabilizing the problem. 
\begin{lemma} \label{lm:mixing policy}
	Let $\norm{p_1 - p_2}_1$ denote the total variational distance between distributions $p_1$ and $p_2$. 
	Assume\footnote{These two are sufficient to Assumption~\ref{as:uniform Lipchitz2} and \ref{as:uniform continous gradient2}.}
	for any policy $\pi, \pi'$ parameterized by $x,y$ it satisfies
	$
	\frac{1}{T}\sum_{t=0}^{T-1}  \norm{ d_{\pi|t} - d_{\pi'|t} }_1 \leq \frac{\beta}{2G_2} \norm{x-y}  
	$
	and assume $\norm{ \nabla_x \E_{\pi} [ A_{\pi^*|t}  ](s)}_* \leq G_2$.Then $\nabla_2 F$ is uniformly $(1-q^T) \beta$-Lipschitz continuous in the second argument. 
\end{lemma}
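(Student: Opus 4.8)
The plan is to differentiate $\hat{F}$ in its second argument and reduce the claim to a contraction estimate on mixing state distributions. Fix the second-argument policy $\pi$ (parameter $x$) and write the two first-argument policies as $\sigma,\sigma'$ (parameters $y,y'$), with mixing versions $\tilde{\sigma},\tilde{\sigma}'$. Since the sampling distribution $d_{\tilde{\sigma}}$ in $\hat{F}(\sigma,\pi)=\E_{s,t\sim d_{\tilde{\sigma}}}\E_{a\sim\pi}[A_{\pi^*|t}]$ depends only on the first argument, the gradient passes through the outer expectation, giving
\[ \nabla_2\hat{F}(\sigma,\pi)=\E_{s,t\sim d_{\tilde{\sigma}}}[h(s,t)],\qquad h(s,t):=\nabla_x\E_{\pi}[A_{\pi^*|t}](s), \]
with $\norm{h(s,t)}_*\le G_2$ by the second hypothesis. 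Using $d_{\tilde{\sigma}}(s,t)=d_{\tilde{\sigma}|t}(s)U(t)$ together with the elementary bound $\norm{\E_{p_1}[h]-\E_{p_2}[h]}_*\le 2G_2\norm{p_1-p_2}_1$, the quantity to control collapses to $\frac1T\sum_{t}\norm{d_{\tilde{\sigma}|t}-d_{\tilde{\sigma}'|t}}_1$; the same two hypotheses applied to \emph{pure} policies reproduce the ambient constant $\beta$ (the footnote's claim), so the whole game is to extract the extra factor $1-q^T$.

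The key structural step is a decomposition of the mixing state distribution that isolates the part free of the first argument. Running $\tilde{\sigma}$, the event that $\pi^*$ is selected at every one of the first $t$ steps has probability $q^t$, and conditioned on it the time-$t$ state is distributed exactly as $d_{\pi^*|t}$, independently of $\sigma$. Hence
\[ d_{\tilde{\sigma}|t}=q^{t}\,d_{\pi^*|t}+(1-q^{t})\,\mu_{\sigma|t}, \]
where $\mu_{\sigma|t}$ is the conditional law given that the learner acted at least once. Subtracting the identical expression for $\sigma'$, the expert terms cancel and $d_{\tilde{\sigma}|t}-d_{\tilde{\sigma}'|t}=(1-q^{t})(\mu_{\sigma|t}-\mu_{\sigma'|t})$. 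Because $q\in(0,1)$ forces $1-q^{t}\le 1-q^{T}$ for every $t\le T-1$, this produces precisely the advertised contraction factor in front of a residual that no longer sees the expert-only trajectories.

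It then remains to show $\frac1T\sum_{t}\norm{\mu_{\sigma|t}-\mu_{\sigma'|t}}_1\le\frac{\beta}{2G_2}\norm{y-y'}$, i.e.\ that the residual distributions vary in the learner parameter no faster than ordinary state distributions do under the first hypothesis; combined with the factor $1-q^{T}$ and the $2G_2$ bound, this yields $\norm{\nabla_2\hat{F}(\sigma,\pi)-\nabla_2\hat{F}(\sigma',\pi)}_*\le(1-q^{T})\beta\norm{y-y'}$, which is the assertion. I expect this last inequality to be the main obstacle, because $\mu_{\sigma|t}$ is a conditional law rather than the state distribution of a single policy, so the structural hypothesis cannot be quoted verbatim. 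I would establish it by coupling the two mixing processes so that they share every per-step Bernoulli choice and every expert transition: the trajectories then agree until the first step at which the learner is invoked, and from that step onward their total-variation divergence is driven solely by $\sigma$ versus $\sigma'$, i.e.\ by exactly the learner sensitivity that the first hypothesis quantifies for pure policies. Since the conditioning couples the time indices, the cleanest bookkeeping is to expand over the (random) index of the first learner step, apply the hypothesis to each shifted sub-trajectory, and reassemble the average over $t$.
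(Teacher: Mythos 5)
Your proposal follows the paper's own proof almost step for step: the same differentiation under the outer expectation, the same $2G_2$ total-variation bound, the same decomposition $d_{\tilde{\sigma}|t}=q^t d_{\pi^*|t}+(1-q^t)\mu_{\sigma|t}$ with cancellation of the expert part, and the same extraction of the factor $1-q^t\leq 1-q^T$. Up to that point the two arguments coincide (the paper calls your $\mu_{\sigma|t}$ by the name $\delta_{\pi|t;q}$).

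The gap sits exactly at the step you yourself flag as ``the main obstacle,'' and your coupling sketch does not close it. After the first step at which the learner is invoked, the two coupled processes (i) start from a state distribution that is not the initial distribution $p_0$ (it is whatever the expert prefix produced), and (ii) continue to evolve under the \emph{mixing} policies $\tilde{\sigma},\tilde{\sigma}'$, not under the pure policies $\sigma,\sigma'$. The lemma's hypothesis bounds $\frac{1}{T}\sum_{t}\norm{d_{\pi|t}-d_{\pi'|t}}_1$ only for pure policies run from $p_0$, so it cannot be ``applied to each shifted sub-trajectory'': the conditional laws $\mu_{\sigma|t}$ are mixtures over interleaving patterns of expert and learner steps, and none of these patterns except the all-learner one is covered by the hypothesis. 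For comparison, the paper disposes of this same step with a single sentence, asserting $\norm{\mu_{\sigma|t}-\mu_{\sigma'|t}}_1\leq\norm{d_{\sigma|t}-d_{\sigma'|t}}_1$ ``because the divergence between $d_{\pi|t}$ and $d_{\pi'|t}$ is the largest among all state distributions generated by the mixing policies.'' That monotonicity claim is itself not a consequence of the stated hypotheses---an expert prefix can steer the state into a region where $\sigma$ and $\sigma'$ disagree more than they ever do along their own trajectories, in which case the conditional laws can diverge while the pure-policy distributions coincide---so you have correctly located the crux of the lemma. But your proposal, as written, replaces the paper's unproved one-line assertion with a coupling argument that cannot be carried out from the stated assumptions, and hence remains incomplete at the decisive step.
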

\begin{proofatend}
	Define $\delta_{\pi|t}$ such that $d_{\pi|t;q}(s) = (1-q^t) \delta_{\pi|t}(s) + q^t d_{\pi^*}(s)$, and define $g_{z|t}(s) = \nabla_z \E_{\pi} [ Q_{\pi^*|t}  ](s) $, for $\pi$ parametrized by $z$; then by assumption, $\norm{g_{z|t}}_* < G_2$. 
	Let $\pi$, $\pi'$ be two policies parameterized by $x,y\in\XX$, respectively. Then
	\begin{align*}
	&\norm{\nabla_2 \hat{F}(x,z) - \nabla_2 \hat{F}(y,z)}_* \\
	&= \norm{\E_{d_{\tilde{\pi}}} [g_{z|t}] - \E_{d_{\tilde{\pi}'}} [g_{z|t}] }_*\\
	&= \norm{\frac{1}{T}\sum_{t=0}^{T-1} (1-q^{t})  (\E_{  \delta_{\pi|t;q}} [g_{z|t} ] - \E_{\delta_{\pi'|t;q}} [g_{z|t} ] )}_*\\
	&\leq (1-q^{T})\frac{1}{T}\sum_{t=0}^{T-1} \norm{  \E_{  \delta_{\pi|t;q}} [g_{z|t} ] - \E_{\delta_{\pi'|t;q}} [g_{z|t} ] }_*\\
	&\leq (1-q^{T})\frac{2 G_2}{T}\sum_{t=0}^{T-1} \norm{  \delta_{\pi|t;q} -\delta_{\pi'|t;q}}_1\\
	&\leq (1-q^{T})\frac{2G_2}{T}\sum_{t=0}^{T-1} \norm{   d_{\pi|t} - d_{\pi'|t}  }_1\\
	&\leq (1-q^T ) \beta \norm{x-y}
	\end{align*}
	in which the second to the last inequality is because the divergence between  $d_{\pi|t}$ and $d_{\pi'|t}$ is the largest among all state distributions generated by the mixing policies.
\end{proofatend}

By Lemma~\ref{lm:mixing policy}, if $\theta > 1$, then choosing a fixed $q  >  (1-\frac{1}{\theta} )^{1/T} $ ensures the stability constant of $\hat{F}$ to be $\hat{\theta} < 1$. However, stabilizing the problem in this way incurs a constant cost as shown in Corollary~\ref{cr:performance with mixing policy}.

\begin{corollary} \label{cr:performance with mixing policy}
	Suppose $\E_{\pi}[A_{\pi^*|t}] < M$ for all $\pi$. Define $\Delta_N =   \frac{( \hat{\theta} e^{1-\hat{\theta}}  G_2 )^2}{2\alpha}  N^{2(\hat{\theta} -1)}$. Then under the assumptions in Lemma~\ref{lm:mixing policy} and Assumption~\ref{as:uniform strongly convex2}, running \aggrevate with $\tilde{F}$ in \eqref{eq:mixing regularization} and a mixing rate $q$ gives
	\begin{align*}
	F(x_N, x_N) \leq \Delta_N + \tilde{\epsilon}_{\Pi,\pi^*} + 2M \min(1, Tq)
	\end{align*}
\end{corollary}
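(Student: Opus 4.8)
The plan is to treat running \aggrevate with the mixed objective $\hat{F}$ as running the algorithm on a surrogate problem to which Theorem~\ref{th:convergence of the last iterate} applies directly, and then to transfer the resulting guarantee on $\hat{F}(x_N,x_N)$ back to $F(x_N,x_N)$. The essential observation is already supplied by Lemma~\ref{lm:mixing policy}: $\nabla_2 \hat{F}$ is $(1-q^T)\beta$-Lipschitz in its first argument, so the surrogate has stability constant $\hat{\theta}=(1-q^T)\theta$. Invoking Theorem~\ref{th:convergence of the last iterate} with $\hat{\theta}$ in place of $\theta$ produces exactly the term $\Delta_N=\frac{(\hat{\theta}e^{1-\hat{\theta}}G_2)^2}{2\alpha}N^{2(\hat{\theta}-1)}$; note the bound will hold for all $N\geq 1$ regardless of the value of $\hat{\theta}$, while a choice $q>(1-\tfrac1\theta)^{1/T}$ forcing $\hat{\theta}<1$ is what makes $\Delta_N$ vanish.

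First I would check that $\hat{F}$ inherits the structural hypotheses of Theorem~\ref{th:convergence of the last iterate} with the same $\alpha$ and $G_2$. Because $\hat{F}(x,\cdot)=\E_{d_{\tilde{\pi}}}\E_{\pi}[A_{\pi^*|t}]$ is an expectation (a convex combination) of the per-state advantage maps, each $\alpha$-strongly convex and $G_2$-Lipschitz in the second argument under the hypotheses of Lemma~\ref{lm:mixing policy}, $\hat{F}$ satisfies Assumption~\ref{as:uniform strongly convex2}; the continuity Assumption~\ref{as:uniform continous gradient2} for $\hat{F}$ is precisely the conclusion of Lemma~\ref{lm:mixing policy}. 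The only nontrivial structural requirement is the relaxed approximation Assumption~\ref{as:relaxed good approximator in X} for $\hat{F}$, whose constant $\hat{\tilde{\epsilon}}_{\Pi,\pi^*}$ I would bound in terms of $\tilde{\epsilon}_{\Pi,\pi^*}$ using the perturbation estimate below.

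The key quantitative step is a bound on how much mixing perturbs the objective: for any $x,y$ I would show $\abs{F(x,y)-\hat{F}(x,y)}\le M\min(1,Tq)$. This follows by writing both quantities as time averages $\frac1T\sum_t \E_{d_{\pi|t}}[h_y]$ and $\frac1T\sum_t \E_{d_{\tilde{\pi}|t}}[h_y]$ with $h_y(s,t)=\E_{\pi}[A_{\pi^*|t}(s,\cdot)]$ bounded by $M$, and controlling each term by the total variation $\norm{d_{\pi|t}-d_{\tilde{\pi}|t}}_1$; a coupling of the pure and mixed rollouts, which agree until the mixed process first selects $\pi^*$, gives $\norm{d_{\pi|t}-d_{\tilde{\pi}|t}}_1\le 1-(1-q)^t\le \min(1,tq)$, and averaging over $t$ collapses to $\min(1,Tq)$. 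Applying this at $y^*(x)=\argmin_y F(x,y)$ gives $\min_y \hat{F}(x,y)\le F(x,y^*(x))+M\min(1,Tq)\le \tilde{\epsilon}_{\Pi,\pi^*}+M\min(1,Tq)$, so Assumption~\ref{as:relaxed good approximator in X} holds for $\hat{F}$ with $\hat{\tilde{\epsilon}}_{\Pi,\pi^*}\le \tilde{\epsilon}_{\Pi,\pi^*}+M\min(1,Tq)$, and Theorem~\ref{th:convergence of the last iterate} yields $\hat{F}(x_N,x_N)\le \hat{\tilde{\epsilon}}_{\Pi,\pi^*}+\Delta_N$. Applying the same perturbation bound once more to pass from $\hat{F}(x_N,x_N)$ back to $F(x_N,x_N)$ contributes a second $M\min(1,Tq)$, and summing the two gives the stated $2M\min(1,Tq)$.

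The main obstacle is bookkeeping of constants rather than any deep difficulty: I must ensure the time-averaged total variation collapses to $\min(1,Tq)$ with the correct factor, so that the two perturbation terms add to exactly $2M\min(1,Tq)$ and not more, and that the mixed objective genuinely preserves the constants $\alpha$ and $G_2$ so that $\Delta_N$ depends on $\hat{\theta}$ alone. Everything else reduces to a black-box invocation of Theorem~\ref{th:convergence of the last iterate} on the surrogate $\hat{F}$.
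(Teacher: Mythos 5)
Your proposal is correct and is essentially the paper's own argument: the paper's proof is a one-line reference to Lemma~\ref{lm:mixing policy} and to Theorem~4.1 of \citet{ross2011reduction}, and what you wrote is precisely that argument spelled out --- use Lemma~\ref{lm:mixing policy} to get the surrogate stability constant $\hat{\theta}$, invoke Theorem~\ref{th:convergence of the last iterate} on $\hat{F}$, and use the Ross-et-al.\ coupling/total-variation bound $\min(1,Tq)$ twice (once to transfer Assumption~\ref{as:relaxed good approximator in X} to $\hat{F}$, once to pass the final bound back to $F$), yielding the $2M\min(1,Tq)$ term.
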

\begin{proofatend}
The proof is similar to Lemma~\ref{lm:mixing policy} and the proof of~\cite[Theorem 4.1]{ross2011reduction}.
\end{proofatend}

\subsection{Weighted Regularization}

Here we consider another scheme for stabilizing the problem.
Suppose $F$ satisfies Assumption~\ref{as:structured function} and \ref{as:uniform continous gradient2}. For some $\lambda >0$, define
\begin{align}
	\tilde{F}(x,x) = F(x,x) + \lambda R(x)  \label{eq:additive regularization}
\end{align}
in which\footnote{See Appendix~\ref{sec:general weighted regularization} for discussion of the case where $R(\cdot)= F(\pi^*,\cdot)$ regardless of the condition $R(x) \geq 0$.} $R(x)$ is an $\alpha$-strongly convex regularization term such that $R(x)\geq 0$, $\forall x \in \XX$ and 
$\min_{y\in \XX}F(x,y) + \lambda R(y) = (1+\lambda) O( \tilde{\epsilon}_{\Pi,\pi^*})$.
For example, $R$ can be $F(\pi^*,\cdot)$ when $\pi^*$ is (close) to optimal (e.g. in the case of \dagger), or $R(x) = \E_{s,t \sim d_{\pi^*}} \E_{a \sim \pi} \E_{a^* \sim \pi^*}[d(a,a^*) ]$, where $\pi$ is a policy parametrized by $x$ and $d(\cdot,\cdot)$ is some metric of space $\Abb$ (i.e. it uses the distance between $\pi$ and $\pi^*$ as regularization).

It can be seen that $\tilde{F}$ is uniformly $(1+\lambda)\alpha$-strongly convex in the second argument and $\nabla_2 \tilde{F}$ is uniformly $\beta$-continuous in the second argument. That is, if we choose $\lambda > \theta-1$, then the stability constant $\tilde{\theta}$ of  $\tilde{F}$ satisfies $\tilde{\theta} < 1$.

\begin{corollary} \label{cr:performance with weighted regularization}
	Define 
	$\Delta_N =  \frac{( \tilde{\theta} e^{1-\tilde{\theta}}  G_2 )^2}{2\alpha}  N^{2(\tilde{\theta} -1)}.$
	Running \aggrevate with $\tilde{F}$ in~\eqref{eq:additive regularization} as the per-round cost has performance satisfies: for all $N >0$,
	\begin{align*}
	F(x_N, x_N) &\leq (1+\lambda)\left(  O( \tilde{\epsilon}_{\Pi,\pi^*})  + \Delta_N \right) 
	\end{align*}
\end{corollary}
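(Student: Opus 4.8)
The plan is to reduce the corollary to a single application of Theorem~\ref{th:convergence of the last iterate} to the regularized objective $\tilde{F}$, after checking that $\tilde{F}$ inherits all the structural hypotheses with suitably rescaled constants, and then to remove the regularizer from the left-hand side using $R\geq 0$. First I would observe that since both $F(x,\cdot)$ and $R(\cdot)$ are $\alpha$-strongly convex, their weighted sum $\tilde{F}(x,\cdot)=F(x,\cdot)+\lambda R(\cdot)$ is $(1+\lambda)\alpha$-strongly convex in the second argument, so Assumption~\ref{as:uniform strongly convex2} holds with $\alpha$ replaced by $(1+\lambda)\alpha$. For the suggested imitation-type choices of $R$ (e.g. $R=F(\pi^*,\cdot)$ or an expert-distance cost), $R$ is itself $G_2$-Lipschitz in its argument, so $\nabla_2\tilde{F}$ is bounded by $\tilde{G}_2=(1+\lambda)G_2$, giving Assumption~\ref{as:uniform Lipchitz2} with constant $\tilde{G}_2$.

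Next I would verify the remaining two hypotheses. Because $R$ does not depend on the first argument, $\nabla_2\tilde{F}(x,z)-\nabla_2\tilde{F}(y,z)=\nabla_2F(x,z)-\nabla_2F(y,z)$, so Assumption~\ref{as:uniform continous gradient2} holds for $\tilde{F}$ with the \emph{same} constant $\beta$. Consequently the stability constant of $\tilde{F}$ is $\tilde{\theta}=\beta/((1+\lambda)\alpha)=\theta/(1+\lambda)$, which is strictly below $1$ exactly when $\lambda>\theta-1$. Finally, the relaxed-approximation Assumption~\ref{as:relaxed good approximator in X} is precisely the hypothesis placed on $R$: $\min_{y\in\XX}\tilde{F}(x,y)=\min_{y\in\XX}F(x,y)+\lambda R(y)=(1+\lambda)O(\tilde{\epsilon}_{\Pi,\pi^*})$, so the relaxed constant for $\tilde{F}$ is $(1+\lambda)O(\tilde{\epsilon}_{\Pi,\pi^*})$.

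With all hypotheses in place, I would invoke Theorem~\ref{th:convergence of the last iterate} on $\tilde{F}$ with constants $((1+\lambda)\alpha,\tilde{G}_2,\beta,\tilde{\theta})$, yielding
\[
\tilde{F}(x_N,x_N)\leq (1+\lambda)O(\tilde{\epsilon}_{\Pi,\pi^*})+\frac{\left(\tilde{\theta}\,e^{1-\tilde{\theta}}\tilde{G}_2\right)^2}{2(1+\lambda)\alpha}\,N^{2(\tilde{\theta}-1)}.
\]
Substituting $\tilde{G}_2=(1+\lambda)G_2$ into the residual term, the numerator contributes $(1+\lambda)^2$ and the denominator $(1+\lambda)$, so a single factor of $(1+\lambda)$ survives and the residual equals exactly $(1+\lambda)\Delta_N$ with $\Delta_N$ as defined in the statement. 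To finish I would use $R(x)\geq 0$, which gives $F(x_N,x_N)\leq\tilde{F}(x_N,x_N)$, and combine with the displayed bound to conclude $F(x_N,x_N)\leq(1+\lambda)\left(O(\tilde{\epsilon}_{\Pi,\pi^*})+\Delta_N\right)$.

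The main obstacle I anticipate is not conceptual but lies in the constant bookkeeping: one must track that strong convexity scales as $(1+\lambda)\alpha$ while the Lipschitz constant scales as $(1+\lambda)G_2$, because it is the specific combination $\tilde{G}_2^2/((1+\lambda)\alpha)$ appearing in Theorem~\ref{th:convergence of the last iterate} that produces the single factor $(1+\lambda)$ in front of $\Delta_N$, rather than a spurious $(1+\lambda)^2$ or no factor at all. A secondary point to state cleanly is the justification that $R$ is $G_2$-Lipschitz; this holds for the proposed regularizers, which are themselves of the form $F(\pi',\cdot)$ for a fixed first argument, and absent such a bound one would simply carry $\tilde{G}_2$ unsimplified through the final expression.
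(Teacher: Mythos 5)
Your proof is correct and follows essentially the same route as the paper: it likewise notes that $\tilde{F}$ is $(1+\lambda)\alpha$-strongly convex in its second argument with the smoothness constant $\beta$ unchanged (hence $\tilde{\theta}=\theta/(1+\lambda)$), applies Theorem~\ref{th:convergence of the last iterate} to $\tilde{F}$, and removes the regularizer via $F(x_N,x_N)=\tilde{F}(x_N,x_N)-\lambda R(x_N)\leq \tilde{F}(x_N,x_N)$ using $R\geq 0$. Your explicit bookkeeping of the Lipschitz constant $\tilde{G}_2=(1+\lambda)G_2$, which is what yields the single factor $(1+\lambda)$ in front of $\Delta_N$, is a detail the paper leaves implicit but is exactly what its stated bound requires.
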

\begin{proof}
	Because 
	$	F(x_N, x_N) = \tilde{F}(x_N, x_N) - \lambda R(x_N) $, the inequality can be proved by applying Theorem~\ref{th:convergence of the last iterate} to $\tilde{F}(x_N, x_N)$.
\end{proof}
By Corollary~\ref{cr:performance with weighted regularization}, using \aggrevate to solve a weighted regularized problem in~\eqref{eq:additive regularization} would generate a convergent sequence for $\lambda$ large enough. Unlike using a mixing policy, here the performance guarantee on the last policy is only worsened by a multiplicative constant on $ \tilde{\epsilon}_{\Pi,\pi^*}$, which can be made small by choosing a larger policy class. 

The result in Corollary~\ref{cr:performance with weighted regularization} can be strengthened particularly when $R(x) = \E_{s,t \sim d_{\pi^*}} \E_{a \sim \pi} \E_{a^* \sim \pi^*}[d(a,a^*) ]$ is used. 
In this case, it can be shown that $C R(x) \geq F(x,x)$  for some $C > 0$ (usually $C>1$)~\citep{pan2017agile}. That is,  $F(x,x) + \lambda R(x) \geq (1+\lambda/C) F(x,x)$. Thus, the multiplicative constant in Corollary~\ref{cr:performance with weighted regularization} can be reduced from $1+\lambda$ to $\frac{1+\lambda}{1+\lambda/C}$.
It implies that simply by adding a portion of demonstrations gathered under the expert's distribution so that the leaner can anchor itself to the expert while minimizing $F(x,x)$, one does not have to find  the best policy in the sequence $\{ \pi_n \}_{n=1}^N$ as in~\eqref{eq:AggreVate search}, but just return the last policy $\pi_N$.

\section{CONCLUSION} \label{sec:conclusion}

We contribute a new analysis of value aggregation, unveiling several interesting theoretical insights. Under a weaker assumption than the classical result, we prove that the convergence of the last policy depends solely on a problem's structural property and we provide a tight non-asymptotic bound on its performance in both deterministic and stochastic problems. In addition, using the new theoretical results, we show that the stability of the last policy can be reinforced by additional regularization with minor performance loss. This suggests that under proper conditions a practitioner can just run \aggrevate and then take the last policy, without performing an additional statistical test to find the best policy, as required by the classical analysis. 
Finally, as our results concerning the last policy are based on the perturbation of gradients, we believe this provides a potential explanation as to why \aggrevate has demonstrated empirical success in non-convex problems with neural-network policies.

\subsubsection*{Acknowledgments}
This work was supported in part by NSF NRI award 1637758.

\bibliographystyle{apalike}
\bibliography{ref}

\ifLONG \else
\newpage
\onecolumn
\appendix
\section*{\centering Appendix}

\section{Proofs} \label{sec:proofs}
\printproofs
\fi

\section{Analysis of \aggrevate in Stochastic Problems} \label{sec:full proof of stochastic problems}

Here we give the complete analysis of  the convergence of \aggrevate in stochastic problems using finite-sample approximation. For completeness, we restate the results below: 
Let $f(x;s) = \E_{\pi}[ A_{\pi^*|t}  ]$ (i.e. $f_n(x) = \E_{d_{\pi_n}} [f(x;s)]$, where policy $\pi$ is a policy parametrized by $x$.
Instead of using $f_n(\cdot)$ as the per-round cost in the $n$th iteration, we use consider its finite samples approximation $g_n(\cdot) = \sum_{k=1}^{m_n} f(\cdot;s_{n,k})$, where $m_n$ is the number of independent samples collected in the $n$th iteration.

\thmStochastic*

\subsection{Uniform Convergence of Vector-Valued Martingales}

To prove Theorem~\ref{th:convergence of the last iterate (stochastic)}, we first introduces several concentration inequalities of vector-valued martingales by~\citep{hayes2005large} in Section~\ref{sec:general AH lemma}. 
Then we prove some basic lemmas regarding the convergence the stochastic dynamical systems of $\nabla g_n(x)$ specified by \aggrevate in Section~\ref{sec:concentration of iid vv functions} and \ref{sec:concentration of stochastic process}. Finally, the lemmas in these two sections are extended to provide uniform bounds, which are required to prove Theorem~\ref{th:convergence of the last iterate (stochastic)}. In this section, we will state the results generally without limiting ourselves to the specific functions used in \aggrevate.

\subsubsection{Generalization of Azuma-Hoeffding Lemma} \label{sec:general AH lemma}

First we introduce two theorems by~\citet{hayes2005large} which extend Azuma-Hoeffding lemma to vector-valued martingales but without dependency on dimension.
\begin{theorem} \label{th:concentration of vector-valued martingale}
	\citep[Theorem 1.8]{hayes2005large}
	Let $\{ X_n \}$ be a (very-weak) vector-valued martingale such that $X_0 =0$ and for every $n$, $\norm{X_n - X_{n-1}} \leq 1$ almost surely. Then, for every $a>0$, it holds
	\begin{align*} 
	\Pr( \norm{X_n} \geq a ) < 2 e\exp\left( \frac{-(a-1)^2}{2n} \right)
	\end{align*}
\end{theorem}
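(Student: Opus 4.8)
The plan is to establish the bound by an exponential (Chernoff-type) argument applied directly to the norm $\norm{X_n}$, in the spirit of the scalar Azuma--Hoeffding inequality, but with a radial potential chosen so that the martingale argument is \emph{dimension-free}. The naive route --- fixing a unit vector $u$, applying scalar Azuma to the real martingale $\lr{u}{X_n}$ (whose increments are bounded by $\norm{X_n - X_{n-1}}\le 1$), and then taking a supremum over $u$ via an $\epsilon$-net of the sphere --- does yield a tail bound, but it loses a factor growing with the dimension, which is precisely what the theorem forbids. So the heart of the argument must avoid any union bound over directions.

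First I would seek an increasing function $\phi_\lambda:[0,\infty)\to[0,\infty)$ such that the process $\phi_\lambda(\norm{X_k})\,e^{-k\lambda^2/2}$ is a supermartingale. Granting this, iterated conditioning gives $\E[\phi_\lambda(\norm{X_n})]\le e^{n\lambda^2/2}\phi_\lambda(0)$, and Markov's inequality yields $\Pr(\norm{X_n}\ge a)\le e^{n\lambda^2/2}\phi_\lambda(0)/\phi_\lambda(a)$; optimizing over $\lambda$ (heuristically $\lambda\approx a/n$) should reproduce the Gaussian-type tail $\exp(-(a-1)^2/(2n))$, with the shift $a-1$ and the prefactor $2e$ emerging from the lower-order behavior of $\phi_\lambda$. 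The whole theorem thus reduces to the single one-step estimate
\begin{align*}
\E\!\left[\phi_\lambda(\norm{X_k}) \mid \mathcal{F}_{k-1}\right] \le e^{\lambda^2/2}\,\phi_\lambda(\norm{X_{k-1}}),
\end{align*}
for a conditionally mean-zero increment $Y=X_k-X_{k-1}$ with $\norm{Y}\le 1$.

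To prove this one-step estimate without dimension dependence, I would first reduce it to two dimensions: conditioned on $\mathcal{F}_{k-1}$, the quantity $\E[\phi_\lambda(\norm{v+Y})]$ (with $v=X_{k-1}$ fixed) depends on $Y$ only through its components in the plane spanned by $v$ and the direction of $Y$, so by rotational invariance of $\phi_\lambda(\norm{\cdot})$ one may assume $v$ and $Y$ lie in $\mathbb{R}^2$ and, in the extremal case, that $\norm{Y}=1$. In two dimensions the natural candidate for $\phi_\lambda$ is the radial function whose circular average realizes the modified Bessel function, $\frac{1}{2\pi}\int_0^{2\pi} e^{\lambda r\cos\theta}\,d\theta = I_0(\lambda r)$; one then verifies the contraction by comparing $\E[I_0(\lambda\norm{v+Y})]$ against $e^{\lambda^2/2}I_0(\lambda\norm{v})$, using the convexity and the known growth $I_0(t)\sim e^{t}/\sqrt{2\pi t}$ of the Bessel function.

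The main obstacle I expect is exactly this dimension-free one-step inequality: the surrounding Chernoff/supermartingale packaging is routine, but selecting the correct radial potential and verifying the $e^{\lambda^2/2}$ contraction is delicate. Two technical points demand care: the non-smoothness of $\norm{\cdot}$ at the origin (which forces a separate treatment of small $\norm{X_{k-1}}$ and is ultimately responsible for the shift to $a-1$ rather than a clean $a$), and the bookkeeping of the Bessel asymptotics, which governs the constant $2e$. Since this is a known result of \citet{hayes2005large}, I would ultimately cite it rather than reprove it; but the reduction-to-two-dimensions-plus-Bessel strategy above is the route I would take to derive it from scratch.
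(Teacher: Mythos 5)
The paper never proves this statement: Theorem~\ref{th:concentration of vector-valued martingale} is imported verbatim from \citet{hayes2005large} as a black box for the stochastic analysis in the appendix. So your ultimate decision to cite rather than reprove is exactly what the paper does, and at that level your answer is consistent with the paper.

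However, since you present the Chernoff-plus-Bessel sketch as the route you would take from scratch, you should know that it proves a strictly weaker statement, and that you have misplaced where the difficulty lies. The one-step contraction you flag as the ``main obstacle'' is actually the easy part, and needs no convexity or asymptotics: writing the planar Bessel average $I_0(\lambda \norm{z}) = \frac{1}{2\pi}\int_0^{2\pi} e^{\lambda \lr{u_\theta}{z}}\, d\theta$, one maps the increment $Y$ to the two-dimensional vector $(\lr{Y}{\hat v},\, \varepsilon \norm{Y_{\perp}})$ with an independent random sign $\varepsilon$ (this preserves conditional mean zero, the increment norm, and $\norm{v+Y}$), and then applies Hoeffding's lemma direction by direction; this is dimension-free and gives $\E[I_0(\lambda \norm{v+Y})] \leq e^{\lambda^2/2} I_0(\lambda \norm{v})$ in any Hilbert space. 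The genuine obstruction is the final Markov step: since $I_0(t) \sim e^t/\sqrt{2\pi t}$, optimizing $\lambda = a/n$ yields $\Pr(\norm{X_n} \geq a) \lesssim (a/\sqrt{n})\, e^{-a^2/(2n)}$, with a prefactor growing like $a/\sqrt{n}$ rather than a constant. The shift from $a^2$ to $(a-1)^2$ cannot absorb this loss: because $\norm{X_n} \leq n$ almost surely, only $a \leq n$ is relevant, and there the shift buys at most a factor $e^{(2a-1)/(2n)} \leq e$, whereas $a/\sqrt{n}$ can be as large as $\sqrt{n}$. So the constant prefactor $2e$ is not a matter of ``bookkeeping of Bessel asymptotics''---your method, carried out correctly, simply does not reach the stated bound, and the constant-prefactor form is precisely what requires the considerably more delicate argument in Hayes' paper. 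For the way the present paper actually uses the theorem (deviations $a = O(\sqrt{n \ln(1/\delta)})$), your weaker inequality would suffice up to logarithmic factors, but it is not a proof of the theorem as stated.
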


\begin{theorem}  \label{th:concentration of weighted  vector-valued martingale}
	\citep[Theorem 7.4]{hayes2005large}
	Let $\{ X_n \}$ be a (very-weak) vector-valued martingale such that $X_0 =0$ and for every $n$, $\norm{X_n - X_{n-1}} \leq c_n$ almost surely. Then, for every $a>0$, it holds
	\begin{align*} 
	\Pr( \norm{X_n} \geq a ) < 2 \exp\left(\frac{-(a-Y_0)^2}{2 \sum_{i=1}^{n} c_i^2}  \right)
	\end{align*}
	where $Y_0 = \max\{ 1+ \max c_i, 2 \max c_i \}$. 
\end{theorem}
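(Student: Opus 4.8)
The plan is to reconstruct the dimension-free argument of \citet{hayes2005large} via an exponential (Chernoff) supermartingale built on the scalar process $\norm{X_n}$. Write $d_n = X_n - X_{n-1}$, so that $\E[d_n \mid \mathcal{F}_{n-1}] = 0$ and $\norm{d_n} \leq c_n$ almost surely, where $\mathcal{F}_n$ is the natural filtration. Since the tail we seek depends only on $\norm{X_n}$ and on the increment norms, I may first restrict attention to the (at most $n$-dimensional) subspace spanned by $\{d_1,\dots,d_n\}$; the entire point of the theorem is that no factor of this dimension will survive into the final bound.

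First I would set up the one-sided Chernoff estimate $\Pr(\norm{X_n}\geq a) \leq e^{-\lambda a}\,\E[e^{\lambda \norm{X_n}}]$ for $\lambda>0$, the leading factor $2$ in the statement arising from the symmetric ($\cosh$-based) version of this step. The crux is then a \emph{dimension-free one-step drift lemma}: I claim there is a radial potential $\Psi_\lambda(r)$, behaving like $e^{\lambda r}$ for large $r$ --- concretely the modified-Bessel/radial moment generating function $\Psi_\lambda(r)=\E_\theta[e^{\lambda r\langle \theta, e_1\rangle}]$, which is rotation invariant and hence a function of $r=\norm{x}$ alone --- such that
\begin{align*}
\E\!\left[\Psi_\lambda(\norm{X_n}) \mid \mathcal{F}_{n-1}\right] \leq \Psi_\lambda(\norm{X_{n-1}})\, e^{\lambda^2 c_n^2/2}.
\end{align*}
To prove this I would decompose $d_n$ into components parallel and orthogonal to $X_{n-1}$: the parallel part is a mean-zero scalar increment bounded by $c_n$, to which the classical scalar Azuma--Hoeffding moment generating bound $\E[e^{\lambda Z}]\leq e^{\lambda^2 c_n^2/2}$ (valid for $\abs{Z}\leq c_n$, $\E Z=0$) applies. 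The choice of $\Psi_\lambda$ as the radial moment generating function is precisely what lets the contribution of the orthogonal part be absorbed into this \emph{same} scalar factor, uniformly over directions and without any dependence on the ambient dimension, because the conditional expectation of $\Psi_\lambda$ under a bounded mean-zero perturbation grows by at most $e^{\lambda^2 c_n^2/2}$ regardless of where the increment points.

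Iterating the drift lemma across the $n$ steps yields $\E[\Psi_\lambda(\norm{X_n})] \leq \Psi_\lambda(0)\, e^{\lambda^2 \sum_{i=1}^n c_i^2/2}$. To convert this into a tail bound I would use a dimension-free lower bound $\Psi_\lambda(r) \geq \kappa\, e^{\lambda r}$ valid once $r$ exceeds a threshold; the region near the origin, where $\Psi_\lambda$ is not well approximated by a pure exponential, is exactly what forces an additive correction and produces the shift $Y_0=\max\{1+\max_i c_i,\,2\max_i c_i\}$. Combining the upper bound on $\E[\Psi_\lambda(\norm{X_n})]$ with Markov's inequality applied to $\Psi_\lambda(\norm{X_n})\geq \kappa e^{\lambda a}\one\{\norm{X_n}\geq a\}$, and finally optimizing over the free parameter by taking $\lambda = (a-Y_0)/\sum_{i=1}^n c_i^2$, gives the stated bound $\Pr(\norm{X_n}\geq a) < 2\exp\!\big(-(a-Y_0)^2/(2\sum_{i=1}^n c_i^2)\big)$.

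The step I expect to be the main obstacle is the dimension-free one-step drift lemma. The orthogonal component of $d_n$ \emph{always} increases $\norm{X_n}$ relative to $\norm{X_{n-1}}$, so a naive estimate would leak a factor that grows with the dimension. Pinning down the correct radial potential $\Psi_\lambda$ and verifying that the averaging it encodes controls this orthogonal growth uniformly --- together with a clean accounting of the near-origin slack that yields $Y_0$ --- is the technical heart of the argument. This is where the unweighted template of Theorem~\ref{th:concentration of vector-valued martingale} would be adapted to carry the per-step, $c_n$-dependent factors $e^{\lambda^2 c_n^2/2}$ through the iteration.
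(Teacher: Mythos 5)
The paper does not prove this statement at all---it is imported verbatim from \citet[Theorem 7.4]{hayes2005large}---so your proposal can only be judged against Hayes's original argument. Your high-level architecture (Chernoff bound on $\norm{X_n}$, a one-step drift estimate for a radial potential, an additive correction producing $Y_0$) matches the spirit of that argument, and your drift lemma is in fact correct, even clean, for the potential you chose: writing $\Psi_\lambda(\norm{x}) = \E_{\theta}[e^{\lambda \lr{\theta}{x}}]$ with $\theta$ uniform on the sphere of the spanned subspace, Fubini plus the conditional Hoeffding bound $\E[e^{\lambda \lr{\theta}{d_n}} \mid X_{n-1}] \leq e^{\lambda^2 c_n^2/2}$ gives $\E[\Psi_\lambda(\norm{X_n})] \leq \Psi_\lambda(0)\, e^{\lambda^2 \sum_i c_i^2/2}$ with $\Psi_\lambda(0)=1$, and this survives the very-weak conditioning since only $\E[d_n \mid X_{n-1}]=0$ is used. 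Notably, there is \emph{no} near-origin slack in this step at all, which already signals that your account of where $Y_0$ comes from is off.

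The genuine gap is the conversion step. The claimed dimension-free lower bound $\Psi_\lambda(r) \geq \kappa e^{\lambda r}$ is false: in dimension $d$, Laplace (Bessel) asymptotics give $\Psi_\lambda(r) \asymp C_d\, e^{\lambda r} (\lambda r)^{-(d-1)/2}$, so the ratio $\Psi_\lambda(r)/e^{\lambda r}$ decays polynomially in $\lambda r$ with exponent growing linearly in $d$. Markov's inequality then yields only $\Pr(\norm{X_n} \geq a) \lesssim C_d (\lambda a)^{(d-1)/2} \exp(-\lambda a + \lambda^2 \sum_i c_i^2/2)$; after your reduction to the span of the increments, $d$ can be as large as $n$, and with the optimizing choice $\lambda = (a-Y_0)/\sum_i c_i^2$ the parasitic factor $\exp\left(\frac{n-1}{2}\ln(\lambda a)\right)$ overwhelms the Gaussian term. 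A constant additive shift $Y_0 = \max\{1+\max_i c_i,\, 2\max_i c_i\}$ cannot absorb a dimension-dependent multiplicative loss, so the proof collapses exactly at the step you flagged as the technical heart. For the record, the mechanism producing $Y_0$ in Hayes's proof is different: he works with the unsmoothed potential $e^{\lambda \norm{\cdot}}$, for which the one-step estimate $\norm{x+d} \leq \norm{x} + \lr{x/\norm{x}}{d} + \norm{d}^2/(2\norm{x})$ is valid only once $\norm{x} \geq \norm{d}$, and the shift pays additively for excursions of the walk near the origin via a restart/last-exit decomposition. If you want a single potential for which both the drift estimate and the tail conversion are dimension-free, the Hilbert-space route of Pinelis with $\cosh(\lambda \norm{\cdot})$ works and in fact removes the shift entirely, giving $\Pr(\norm{X_n} \geq a) \leq 2\exp\left(-a^2/(2\sum_i c_i^2)\right)$, which is stronger than the cited statement.
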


\subsubsection{Concentration of i.i.d. Vector-Valued Functions} \label{sec:concentration of iid vv functions}

Theorem~\ref{th:concentration of vector-valued martingale} immediately implies the concentration of  approximating vector-valued functions with finite samples.

\begin{lemma} \label{lm:mean vector-valued function}
	Let $x \in \XX$ and let $f(x) = \E_{\omega}[ f(x;\omega) ]$, where $f: \XX \to E$ and $E$ is equipped with norm $\norm{\cdot}$.  Assume $\norm{f( x; \omega)} \leq G$ almost surely. Let $g(x) = \frac{1}{M} \sum_{m=1}^{M} f(x; \omega_k)$ be its finite sample approximation. Then, for all $\epsilon >0$,
	\begin{align*}
	\Pr( \norm{g(x) - f(x)} \geq \epsilon   ) < 2 e \exp\left( - \frac{(\frac{ M \epsilon}{2 G} -1  )^2}{2M}\right)
	\end{align*}
	In particular, for  $0 < \epsilon \leq 2G$,
	\begin{align*}
	\Pr( \norm{g(x) - f(x)} \geq  \epsilon) < 2 e^2 \exp\left(-\frac{M \epsilon^2}{8G^2} \right)  
	\end{align*}
\end{lemma}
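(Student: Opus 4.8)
The plan is to realize $g(x) - f(x)$ as a rescaled vector-valued martingale and invoke Theorem~\ref{th:concentration of vector-valued martingale} directly. Fix $x \in \XX$ and set $Y_k = f(x;\omega_k) - f(x)$ for the i.i.d. samples $\omega_1,\dots,\omega_M$. Since $\E_\omega[f(x;\omega)] = f(x)$, the partial sums $X_n = \sum_{k=1}^n Y_k$ form a (very-weak) vector-valued martingale with $X_0 = 0$. First I would bound the increments: by the triangle inequality together with $\norm{f(x)} = \norm{\E_\omega[f(x;\omega)]} \le \E_\omega \norm{f(x;\omega)} \le G$ (Jensen), one gets $\norm{Y_k} \le \norm{f(x;\omega_k)} + \norm{f(x)} \le 2G$ almost surely, so $\norm{X_n - X_{n-1}} \le 2G$.

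To match the unit-increment hypothesis of Theorem~\ref{th:concentration of vector-valued martingale}, I would rescale by $2G$: the sequence $X_n' = X_n/(2G)$ is a martingale with increments bounded by $1$. Because $g(x) - f(x) = \frac{1}{M} X_M = \frac{2G}{M} X_M'$, the event $\{\norm{g(x) - f(x)} \ge \epsilon\}$ coincides with $\{\norm{X_M'} \ge \frac{M\epsilon}{2G}\}$. Applying the theorem with $n = M$ and $a = \frac{M\epsilon}{2G}$ then yields the first claimed bound verbatim.

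For the refined estimate in the regime $0 < \epsilon \le 2G$, I would expand $(a-1)^2 = a^2 - 2a + 1$ with $a = \frac{M\epsilon}{2G}$, so that $\frac{a^2}{2M} = \frac{M\epsilon^2}{8G^2}$ and $\frac{a}{M} = \frac{\epsilon}{2G} \le 1$; hence $\exp\left(-\frac{(a-1)^2}{2M}\right) \le e\,\exp\left(-\frac{M\epsilon^2}{8G^2}\right)$ after dropping the harmless factor $\exp\left(-\frac{1}{2M}\right) \le 1$, and multiplying by the leading $2e$ gives the second bound. There is no genuine obstacle here, as the whole argument is a direct application of the cited inequality; the only place demanding care is this last algebraic step, where the constraint $\epsilon \le 2G$ is exactly what is needed to absorb the cross term $\exp(\epsilon/2G)$ into the constant, upgrading the $2e$ prefactor to $2e^2$.
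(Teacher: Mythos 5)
Your proposal is correct and follows essentially the same route as the paper's own proof: both normalize the centered partial sums by $2G$ to obtain a unit-increment vector-valued martingale, apply Theorem~\ref{th:concentration of vector-valued martingale} with $a = \frac{M\epsilon}{2G}$, and then, for $\epsilon \le 2G$, expand $(a-1)^2$ and absorb the cross term $\exp\left(\frac{\epsilon}{2G}\right) \le e$ into the prefactor to upgrade $2e$ to $2e^2$. The only cosmetic difference is that the paper defines the rescaled martingale $X_m = \frac{1}{2G}\sum_{k=1}^{m}\left(f(x;\omega_k) - f(x)\right)$ directly rather than rescaling afterward, and your justification of the increment bound via Jensen's inequality makes explicit a step the paper leaves implicit.
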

\begin{proof}
	Define $X_m = \frac{1}{2G}\sum_{k=1}^{m} f(x;\omega_k) - f(x)$. Then $X_m$ is vector-value martingale and $\norm{X_m - X_{m-1}} \leq 1$. By Theorem~\ref{th:concentration of vector-valued martingale}, 
	\begin{align*}
	\Pr( \norm{g(x) - f(x)} \geq \epsilon   ) = \Pr( \norm{X_M} \geq \frac{M\epsilon }{2G} ) < 2 e \exp\left( - \frac{(\frac{ M \epsilon}{2 G} -1  )^2}{2M}\right)
	\end{align*}
	Suppose $\frac{\epsilon}{2G} < 1$. Then
	$
	\Pr( \norm{X_M} \geq  \epsilon) < 2 e^2 \exp\left(-\frac{M \epsilon^2}{8G^2} \right)  
	$.
\end{proof}

\subsubsection{Concentration of the Stochastic Process of \aggrevate}
\label{sec:concentration of stochastic process}

Here we consider a stochastic process that shares the same characteristics of the dynamics of $\frac{1}{n} \nabla g_{1:n}(x)$ in \aggrevate and provide a lemma about its concentration. 

\begin{lemma} \label{lm:mean vector-valued function martingale}
	Let $n=1\dots N$ and $\{ m_i \}$ be a non-decreasing sequence of positive integers. Given $x \in \XX$,   let $ Y_n \coloneqq \{f_n(x;\omega_{n,k})\}_{k=1}^{m_n}$ be a set of random vectors in some normed space with norm $\norm{\cdot}$ defined as follows: Let $Y_{1:n} \coloneqq \{Y_k\}_{k=1}^{n}$. Given $Y_{1:n-1}$,  $\{f_n(x;\omega_{n,k})\}_{k=1}^{m_n}$ are $m_n$ independent random vectors
	such that  $f_n(x) \coloneqq \E_{\omega  }[ f_n(x;\omega) | Y_{1:n-1}]$ and $\norm{f_n( x; \omega)} \leq G$ almost surely. 
	Define $g_n(x) \coloneqq \frac{1}{m_n} \sum_{k=1}^{m_n} f_n(x; \omega_{n,k})$, and let $\bar{g}_n = \frac{1}{n} g_{1:n}$ and $\bar{f}_n = \frac{1}{n} f_{1:n}$. Then for all $\epsilon> 0 $, 
	\begin{align*} 
	\Pr( \norm{\bar{g}_n(x) - \bar{f}_n(x)} \geq \epsilon ) < 2 \exp\left(\frac{- (n M^* \epsilon  -Y_0)^2}{ 8 G^2 M^{*2} \sum_{i=1}^{n} \frac{1}{m_i}  }  \right)
	\end{align*}
	in which $M^* = \prod_{i=1}^{n} m_i$ and $Y_0 = \max \{1+  \frac{2 M^*  G}{m_0}, 2  \frac{2 M^*  G}{m_0}\}$. 
	
	In particular, if $\frac{2 M^*  G}{m_0} > 1$, for  $0< \epsilon \leq  \frac{G m_0 }{ n } \sum_{i=1}^{n} \frac{1}{m_i} $, 
	\begin{align*} 
	\Pr( \norm{\bar{g}_n(x) - \bar{f}_n(x)} \geq \epsilon ) &< 2e  \exp\left(\frac{-n^2 \epsilon^2}{  8 G^2 \sum_{i=1}^{n} \frac{1}{m_i}}  \right) 
	\end{align*}
\end{lemma}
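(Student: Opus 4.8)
The plan is to reduce the claim to Theorem~\ref{th:concentration of weighted vector-valued martingale} by exhibiting $n M^*(\bar{g}_n(x) - \bar{f}_n(x))$ as the terminal value of a vector-valued martingale whose increments are the individual centered samples. First I would write
\[
\bar{g}_n(x) - \bar{f}_n(x) = \frac{1}{n}\sum_{i=1}^{n}\frac{1}{m_i}\sum_{k=1}^{m_i}\bigl(f_i(x;\omega_{i,k}) - f_i(x)\bigr),
\]
and then clear denominators by multiplying through by $n M^*$, where $M^* = \prod_{i=1}^{n} m_i$. This gives $n M^*(\bar{g}_n(x) - \bar{f}_n(x)) = \sum_{i=1}^{n}\sum_{k=1}^{m_i} \frac{M^*}{m_i}\bigl(f_i(x;\omega_{i,k}) - f_i(x)\bigr)$, a sum of $\sum_{i=1}^{n} m_i$ scaled, centered terms.

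Next I would order the pairs $(i,k)$ lexicographically and define $X_{(i,k)}$ as the partial sum of these scaled centered increments up to $(i,k)$, with $X_0 = 0$. The key structural fact to verify is that $\{X_{(i,k)}\}$ is a (very-weak) vector-valued martingale: conditioned on all samples preceding $(i,k)$---which in particular determines $Y_{1:i-1}$ and hence the conditional mean $f_i(x) = \E_{\omega}[f_i(x;\omega)\mid Y_{1:i-1}]$---the increment $\frac{M^*}{m_i}(f_i(x;\omega_{i,k}) - f_i(x))$ has conditional expectation zero, using that the $m_i$ samples in iteration $i$ are conditionally independent with mean $f_i(x)$ given $Y_{1:i-1}$. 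The increment bound then follows from $\norm{f_i(x;\omega)} \le G$ together with $\norm{f_i(x)} \le G$ (Jensen): every increment arising in iteration $i$ has norm at most $c_i \coloneqq \frac{2 M^* G}{m_i}$.

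With these bounds in hand the remaining work is bookkeeping. I would compute $\sum_j c_j^2 = \sum_{i=1}^{n} m_i \bigl(\frac{2M^*G}{m_i}\bigr)^2 = 4 M^{*2} G^2 \sum_{i=1}^{n}\frac{1}{m_i}$, and, since $\{m_i\}$ is non-decreasing so that the largest per-increment bound occurs at the smallest sample count $m_0$, $\max_j c_j = \frac{2M^*G}{m_0}$, which yields $Y_0 = \max\{1 + \frac{2M^*G}{m_0}, \frac{4M^*G}{m_0}\}$ exactly as stated. Applying Theorem~\ref{th:concentration of weighted vector-valued martingale} with threshold $a = n M^* \epsilon$ to the event $\norm{X} \ge n M^* \epsilon$ then gives the general bound directly, since $\{\norm{\bar g_n(x) - \bar f_n(x)} \ge \epsilon\}$ is the same event.

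For the ``in particular'' claim, under $\frac{2M^*G}{m_0} > 1$ we have $Y_0 = \frac{4M^*G}{m_0}$, so $\frac{(nM^*\epsilon - Y_0)^2}{M^{*2}} = (n\epsilon - \frac{4G}{m_0})^2$; expanding this and comparing the exponent with $\frac{n^2\epsilon^2}{8G^2\sum_i 1/m_i}$, the cleaner bound $2e\exp\!\bigl(-n^2\epsilon^2/(8G^2\sum_i 1/m_i)\bigr)$ follows once the cross term $\frac{8 n \epsilon G}{m_0}$ is absorbed, which is precisely what the hypothesis $\epsilon \le \frac{G m_0}{n}\sum_{i=1}^{n}\frac{1}{m_i}$ guarantees (the discarded positive lower-order term only helps the inequality, and the residual constant is what produces the factor $e$). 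The main obstacle is the second step: getting the nested conditioning right so that the double-indexed partial-sum sequence is genuinely a (very-weak) martingale, while simultaneously tracking the weights $\frac{M^*}{m_i}$ so that $\sum_j c_j^2$ and $\max_j c_j$ land exactly on the stated $M^*$ and $Y_0$.
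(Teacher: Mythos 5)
Your proposal is correct and follows essentially the same route as the paper's proof: the same lexicographically-ordered martingale with increments weighted by $M^*/m_i$, the same increment bounds $c_i = 2M^*G/m_i$ giving $\sum_j c_j^2 = 4G^2M^{*2}\sum_{i=1}^n \frac{1}{m_i}$ and the stated $Y_0$, the same application of Theorem~\ref{th:concentration of weighted vector-valued martingale} at threshold $a = nM^*\epsilon$, and the same expansion of the square with the hypothesis on $\epsilon$ absorbing the cross term to produce the factor $e$. If anything, you are more careful than the paper in verifying the very-weak martingale property (conditioning within iteration $i$ on earlier samples of the same iteration and invoking conditional independence), which the paper leaves implicit.
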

\begin{proof}
	Let $M = \sum_{i=1}^{n} m_i$. Consider a martingale, for $m = l + \sum_{i=1}^{k-1} m_i $,
	\begin{align*}
	X_m =  \frac{M^*}{m_{k}} \sum_{i=1}^{l} f_k(x;\omega_{k,i}) - f_k(x)  +
	 \sum_{i=1}^{k-1} \frac{M^*}{m_i} \sum_{j=1}^{m_i} f_i(x;\omega_{i,j}) - f_i(x).
	\end{align*}
	That is, $X_{M} = nM^* (\bar{g}_n - \bar{f}_n) $ and $\norm{X_m - X_{m-1}} \leq \frac{2 M^* G}{m_i}$ for some appropriate $m_i$. Applying Theorem~\ref{th:concentration of weighted  vector-valued martingale}, we have 
	\begin{align*} 
	\Pr( \norm{\bar{g}_n - \bar{f}_n} \geq \epsilon ) = \Pr( \norm{X_M} \geq n M^* \epsilon )  < 2 \exp\left(\frac{-(n M^* \epsilon-Y_0)^2}{2 \sum_{m=1}^{M} c_m^2}  \right)
	\end{align*}
	where 
	\begin{align*}
	\sum_{m=1}^{M} c_m^2 = \sum_{i=1}^{n} \sum_{j=1}^{m_i} \left( \frac{2G M^* }{m_i}  \right)^2 =  4G^2 M^{*2}\sum_{i=1}^{n} \frac{1}{m_i}.
	\end{align*}
	In addition, by assumption $m_i \leq m_{i-1}$,  $Y_0 =  \max \{1+  \frac{2 M^* G}{m_0}, 2  \frac{2 M^* G}{m_0}\}$. This gives the first inequality. 
	
	For the special case, the following holds 
	\begin{align*}
	\frac{-(n M^* \epsilon-Y_0)^2}{2 \sum_{m=1}^{M} c_m^2} 
	= \frac{-n^2 M^{*2} \epsilon^2 }{ 8 G^2 M^{*2}\sum_{i=1}^{n} \frac{1}{m_i}} 
	+ \frac{ 2 n M^* \epsilon Y_0 - Y_0^2}{ 8  G^2 M^{*2}\sum_{i=1}^{n} \frac{1}{m_i} }
	\leq \frac{-n^2 \epsilon^2 }{ 4G^2 \sum_{i=1}^{n} \frac{1}{m_i}} + 1 
	\end{align*}
	if $\epsilon$ satisfies
	\begin{align*}
	2 n M^* \epsilon Y_0 < 8 G^2 M^{*2}\sum_{i=1}^{n} \frac{1}{m_i}  \implies \epsilon <  \frac{ 4G^2 M^{*} }{Y_0 n } \sum_{i=1}^{n} \frac{1}{m_i}
	\end{align*}
	Substituting the condition that $Y_0 = \frac{4 M^* G}{m_0}$ when $\frac{2 M^* G}{m_0} > 1$, a sufficient range of $\epsilon$ can be obtained as
	\begin{align*}
	\frac{ 4G^2 M^{*} }{Y_0 n } \sum_{i=1}^{n} \frac{1}{m_i} =  \frac{ G m_0 }{ n } \sum_{i=1}^{n} \frac{1}{m_i} \geq \epsilon.
	\end{align*}

\end{proof}

\subsubsection{Uniform Convergence} \label{sec:uniform convergence}

The above inequality holds for a particular $x \in \XX$. Here we use the concept of covering number to derive uniform bounds that holds for all $x \in \XX$. (Similar (and tighter) uniform bounds can also be derived using Rademacher complexity.)
\begin{definition}
	Let $S$ be a metric space and $\eta > 0$. The covering number $\NN(S,\eta)$ is the minimal $l \in \NN$ such that $S$ is is covered by $l$ balls of radius $\eta$. When $S$ is compact, $\NN(S,\eta)$ is finite.
\end{definition}

As we are concerned with vector-valued functions, let $E$ be a normed space with norm $\norm{\cdot}$.
Consider a mapping $f: \XX \to \BB$ defined as 
$
f:  x  \mapsto f(x, \cdot) 
$,
where  $\BB = \{ g: \Omega \to E \}$ is a Banach space of vector-valued functions with norm   $\norm{g}_{\BB}  = \sup_{\omega \in \Omega} \norm{g(\omega)}$. 
Assume $\BB_\XX = \{ f(x, \cdot): x\in \XX \}$ is a compact subset in $\BB$. Then the covering number of $\HH$ is finite and given as $\NN(\BB_\XX, \eta)$. That is, there exists a finite set $\CC_\XX = \{ x_{i} \in \XX \}_{i=1}^{\NN(\BB_\XX, \eta)}$ such that $\forall x \in \XX$, $\min_{y \in \CC_\XX  } \norm{f(x,\cdot) -  f(y,\cdot) }_\BB < \eta$.

Usually, the covering is a polynomial function of $\eta$. For example, suppose $\XX$ is a ball of radius $R$ in a $d$-dimensional Euclidean space, and $f$ is $L$-Lipschitz in $x$ (i.e. $\norm{f(x,\cdot) - f(y,\cdot)}_\BB \leq L \norm{x -y}$). Then \citep{cucker2007learning} 
$
\NN(\BB_\XX, \eta) \leq \NN(\XX, \frac{\eta}{L}) \leq \left( \frac{2RL}{\eta} + 1  \right)^d
$. 
Therefore, henceforth we will assume 
\begin{align}
	\ln \NN(\BB_XX, \eta) \leq C_{\XX} \ln (\frac{1}{\eta})  < \infty \label{eq:covering number assumption}
\end{align}
for some constant $C_{\XX} $ independent of $\eta$, which characterizes the complexity of $\XX$.

Using covering number, we derive uniform bounds for the lemmas in Section~\ref{sec:concentration of iid vv functions} and \ref{sec:concentration of stochastic process}.

\begin{lemma} \label{lm:mean vector-valued function (uniform)}
	Under the assumptions in Lemma~\ref{lm:mean vector-valued function}, for $0 < \epsilon \leq 2G$,
	\begin{align*}
	\Pr( \sup_{x\in \XX} \norm{g(x) - f(x)} \geq  \epsilon) < 2 e^2 \NN(\BB_\XX, \frac{\epsilon}{4}) \exp\left(-\frac{M \epsilon^2}{32G^2} \right)  
	\end{align*}
\end{lemma}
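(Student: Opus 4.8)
The plan is to upgrade the pointwise concentration bound of Lemma~\ref{lm:mean vector-valued function} to a uniform bound by a standard covering (discretization) argument. First I would fix $\eta = \epsilon/4$ and use the compactness of $\BB_\XX$ to obtain a finite net $\CC_\XX = \{ x_i \}_{i=1}^{\NN(\BB_\XX, \eta)}$ such that for every $x \in \XX$ there is a center $y \in \CC_\XX$ with $\norm{f(x,\cdot) - f(y,\cdot)}_\BB < \eta$. The reason for working with the Banach-space sup-norm $\norm{g}_\BB = \sup_{\omega} \norm{g(\omega)}$ is that this single inequality controls the empirical average and the mean \emph{simultaneously}.

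The key step is then to show that both $g$ and $f$ inherit the net bound. Since $g(x) = \frac{1}{M} \sum_{m=1}^{M} f(x;\omega_m)$, the triangle inequality gives $\norm{g(x) - g(y)} \leq \frac{1}{M} \sum_{m=1}^{M} \norm{f(x;\omega_m) - f(y;\omega_m)} \leq \norm{f(x,\cdot) - f(y,\cdot)}_\BB < \eta$, and by Jensen's inequality $\norm{f(x) - f(y)} = \norm{\E_{\omega}[f(x;\omega) - f(y;\omega)]} \leq \norm{f(x,\cdot) - f(y,\cdot)}_\BB < \eta$. A triangle inequality routed through $y$ then yields $\norm{g(x) - f(x)} \leq \norm{g(x) - g(y)} + \norm{g(y) - f(y)} + \norm{f(y) - f(x)} \leq 2\eta + \norm{g(y) - f(y)}$, so that the pointwise fluctuation at an arbitrary $x$ is reduced to the fluctuation at a net center plus a deterministic $2\eta$ slack.

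With $\eta = \epsilon/4$, the event $\sup_{x \in \XX} \norm{g(x) - f(x)} \geq \epsilon$ therefore forces $\norm{g(y) - f(y)} \geq \epsilon - 2\eta = \epsilon/2$ for at least one center $y \in \CC_\XX$. I would finish by a union bound over the $\NN(\BB_\XX, \epsilon/4)$ centers and apply Lemma~\ref{lm:mean vector-valued function} at radius $\epsilon/2$, which is admissible because $\epsilon/2 \leq G \leq 2G$ whenever $\epsilon \leq 2G$; each term then contributes the factor $2e^2 \exp(-M(\epsilon/2)^2/(8G^2)) = 2e^2 \exp(-M\epsilon^2/(32G^2))$, giving exactly the claimed bound. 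The only delicate point is the constant bookkeeping, namely choosing the net radius $\eta = \epsilon/4$ so that the surviving probing radius $\epsilon/2$ produces the exponent $\epsilon^2/32$ and the covering number is evaluated at $\epsilon/4$. There is no genuine analytic obstacle: the heavy lifting was already done in Lemma~\ref{lm:mean vector-valued function}, and the use of the sup-norm cover is precisely what makes the inheritance step immediate.
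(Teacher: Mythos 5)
Your proposal is correct and follows essentially the same route as the paper's proof: an $\epsilon/4$-net in the sup-norm $\norm{\cdot}_\BB$, the triangle inequality through a net center to reduce the supremum to a maximum over centers at radius $\epsilon/2$, then a union bound combined with Lemma~\ref{lm:mean vector-valued function}. If anything, you are more explicit than the paper on two points it leaves implicit—that both $g$ and $f$ inherit the $\eta$-closeness from the $\BB$-norm cover (via averaging and Jensen), and that $\epsilon/2$ remains in the admissible range of the pointwise lemma.
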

\begin{proof}
	Choose $\CC_\XX$ be the set of the centers of the covering balls such that $\forall x \in \XX$, $\min_{y \in \CC_\XX  } \norm{f(x,\cdot) -  f(y,\cdot) }_\BB < \eta$.
	Since $f(x) = \E_{\omega }[f(x,\omega)]$, it also holds  $\min_{y \in \CC_\XX  } \norm{f(x) -  f(y) } < \eta$.
	Let $B_y$ be the $\eta$-ball centered for $y \in \CC_\XX$. Then
	\begin{align*}
	\sup_{y\in \XX} \norm{g(x) - f(x)} &\leq \max_{y \in \CC_\XX}  \sup_{x\in B_y} \norm{g(x) - g(y)} + \norm{g(y) - f(y)} + \norm{f(y) - f(x)} \\
	&\leq  \max_{y \in \CC_\XX}   \norm{g(y) - f(y)} +  2 \eta
	\end{align*}
	Choose $\eta = \frac{\epsilon}{4}$ and then it follows that
	\begin{align*}
	\sup_{x\in \XX} \norm{g(x) - f(x)} \geq \epsilon \implies \max_{y \in \CC_\XX} \norm{g(y) - f(y)} \geq \frac{\epsilon}{2}
	\end{align*}
	The final result can be obtained by first for each $y\in \CC_\XX$ applying the concentration inequality with $\epsilon/2$ and then a uniform bound over $\CC_\XX$. 
\end{proof}
Similarly, we can give a uniform version of Lemma~\ref{lm:mean vector-valued function martingale}.
\begin{lemma} \label{lm:mean vector-valued function martingale (uniform)}
	Under the assumptions in Lemma~\ref{lm:mean vector-valued function martingale}, if $\frac{2 M^*  G}{m_0} > 1$, for  $ 0 < \epsilon \leq  \frac{G m_0 }{ n } \sum_{i=1}^{n} \frac{1}{m_i} $ and for a fixed $n \geq 0$,
	\begin{align*} 
	\Pr( \sup_{x \in \XX} \norm{\bar{g}_n(x) - \bar{f}_n(x)} \geq \epsilon ) &< 2e   \NN\left (\BB_\XX, \frac{\epsilon}{4}\right ) \exp\left(\frac{-n^2 \epsilon^2}{  32G^2 \sum_{i=1}^{n} \frac{1}{m_i}}  \right) 
	\end{align*}
\end{lemma}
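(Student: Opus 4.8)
The plan is to mirror the discretization argument used in the proof of Lemma~\ref{lm:mean vector-valued function (uniform)}, simply replacing the i.i.d. pointwise inequality of Lemma~\ref{lm:mean vector-valued function} by the martingale pointwise inequality of Lemma~\ref{lm:mean vector-valued function martingale}. First I would fix the resolution $\eta = \epsilon/4$ and take $\CC_\XX = \{ y_i \}_{i=1}^{\NN(\BB_\XX, \eta)}$ to be the centers of a minimal $\eta$-cover of $\BB_\XX$ in the function norm $\norm{g}_\BB = \sup_{\omega} \norm{g(\omega)}$, so that for every $x \in \XX$ there is some $y \in \CC_\XX$ with $\norm{f(x,\cdot) - f(y,\cdot)}_\BB < \eta$.

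The key reduction is to transfer this pointwise-in-$x$ closeness to both $\bar{g}_n$ and $\bar{f}_n$. Since $\bar{g}_n(x) = \frac{1}{n}\sum_{i=1}^n \frac{1}{m_i}\sum_{k} f_i(x;\omega_{i,k})$ is a convex combination of evaluations of the underlying function, the triangle inequality together with the $\sup_\omega$ nature of $\norm{\cdot}_\BB$ gives $\norm{\bar{g}_n(x) - \bar{g}_n(y)} \leq \eta$; taking conditional expectations likewise yields $\norm{\bar{f}_n(x) - \bar{f}_n(y)} \leq \eta$. Splitting $\norm{\bar{g}_n(x)-\bar{f}_n(x)}$ through $\bar{g}_n(y)$ and $\bar{f}_n(y)$ then shows that $\sup_{x\in\XX}\norm{\bar{g}_n(x)-\bar{f}_n(x)} \geq \epsilon$ implies $\max_{y\in\CC_\XX}\norm{\bar{g}_n(y)-\bar{f}_n(y)} \geq \epsilon - 2\eta = \epsilon/2$.

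It then remains to apply the special case of Lemma~\ref{lm:mean vector-valued function martingale} at each center $y\in\CC_\XX$ with threshold $\epsilon/2$ (the admissible range $\epsilon/2 \leq \tfrac{Gm_0}{n}\sum_i \tfrac{1}{m_i}$ is implied by the hypothesis $\epsilon \leq \tfrac{Gm_0}{n}\sum_i \tfrac{1}{m_i}$), which produces the factor $\exp\!\big(\tfrac{-n^2(\epsilon/2)^2}{8G^2\sum_i 1/m_i}\big) = \exp\!\big(\tfrac{-n^2\epsilon^2}{32G^2\sum_i 1/m_i}\big)$, and then to take a union bound over the $\NN(\BB_\XX, \epsilon/4)$ centers. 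The main point requiring care is the second step: the cover is built in the function norm, so one must verify that averaging over samples \emph{and} over iterations preserves $\eta$-closeness for both the empirical average $\bar{g}_n$ and its conditional mean $\bar{f}_n$. This is exactly where the $\sup_\omega$ definition of $\norm{\cdot}_\BB$ is essential, and it is what allows a single cover to control the entire non-i.i.d. process rather than merely a fixed $x$.
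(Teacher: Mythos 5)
Your proof is correct and follows essentially the same route the paper intends: the paper proves the i.i.d. uniform bound (Lemma~\ref{lm:mean vector-valued function (uniform)}) by exactly this $\epsilon/4$-cover argument and then states the martingale version with only the remark ``Similarly,'' so your write-up supplies precisely the intended steps---reduce $\sup_{x\in\XX}$ to a max over cover centers at threshold $\epsilon/2$, apply the pointwise bound of Lemma~\ref{lm:mean vector-valued function martingale}, and union bound over $\NN(\BB_\XX,\epsilon/4)$ centers. Your two added checks (that $\epsilon/2$ remains in the admissible range of the pointwise lemma, and that the $\sup_\omega$ definition of $\norm{\cdot}_\BB$ lets a single cover control both $\bar{g}_n$ and $\bar{f}_n$ despite the non-i.i.d., iteration-dependent sampling) are exactly the details that make the transfer legitimate.
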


\subsection{Proof of Theorem~\ref{th:convergence of the last iterate (stochastic)}}

We now refine Lemma~\ref{lm:bound on consecutive iterates} and Proposition~\ref{pr:bound on S_n} to prove the convergence of \aggrevate in stochastic problems. We use $\bar{\cdot}$ to denote the average (e.g. $\bar{f}_n = \frac{1}{n}f_{1:n}$.)

\subsubsection{Bound on $\norm{x_{n+1} - x_n}$}

First, we show the error due to finite-sample approximation.
\begin{lemma} \label{lm:bound on consecutive iterates (error)}   
	Let $\xi_n= \nabla  f_n-  \nabla g_n$.   Running \aggrevate with $g_n(\cdot)$ as per-round cost gives, for $n\geq2$,
	\begin{align*}
	\norm{ x_{n+1} -x_n} \leq   \frac{\theta S_n}{n} + \frac{1}{n \alpha} \left(  \norm{ \xi_n(x_n)  }_*  +  \norm{ \bar{\xi}_{n-1}(x_n)}_*  \right)
	\end{align*}
\end{lemma}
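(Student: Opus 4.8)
The plan is to mirror the deterministic argument of Lemma~\ref{lm:bound on consecutive iterates}, but to carefully track the gradient-approximation error that arises because $x_n$ now minimizes the sampled objective $g_{1:n-1}$ rather than the population objective $f_{1:n-1}$. First I would invoke the $n\alpha$-strong convexity of $g_{1:n}$ (which holds since each $f(\cdot;s)$, hence each $g_k$, is $\alpha$-strongly convex) together with the optimality $x_{n+1}=\argmin_{x}g_{1:n}(x)$ to obtain, exactly as before,
\[
n\alpha \norm{x_{n+1} - x_n}^2 \leq \lr{\nabla g_{1:n}(x_n)}{x_n - x_{n+1}}.
\]
Since $x_n = \argmin_{x} g_{1:n-1}(x)$ gives $\nabla g_{1:n-1}(x_n)=0$, the gradient on the right collapses to $\nabla g_{1:n}(x_n)=\nabla g_n(x_n)$.

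The crucial new step is to re-express $\nabla g_n(x_n)$ so that the Lipschitz structure of $\nabla_2 F$ can be exploited. Using $\xi_n=\nabla f_n-\nabla g_n$ and $\bar\xi_{n-1}=\frac{1}{n-1}\xi_{1:n-1}$, I would record the key identity $\nabla \bar f_{n-1}(x_n)=\bar\xi_{n-1}(x_n)$, which follows because $\nabla \bar g_{n-1}(x_n)=0$; this is the stochastic counterpart of the deterministic relation $\nabla\bar f_{n-1}(x_n)=0$. Substituting it, I can split
\[
\nabla g_n(x_n) = \left(\nabla f_n(x_n) - \nabla \bar f_{n-1}(x_n)\right) + \bar\xi_{n-1}(x_n) - \xi_n(x_n).
\]
The bracketed term is handled just as in Lemma~\ref{lm:bound on consecutive iterates}: writing $\nabla\bar f_{n-1}(x_n)$ as an average of the $\nabla_2 F(x_k,x_n)$ and applying Assumption~\ref{as:uniform continous gradient2} bounds its dual norm by $\beta S_n$.

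Finally, I would pair this decomposition with $x_n-x_{n+1}$, apply Cauchy--Schwarz in the dual norm to each piece, use the triangle inequality on the two error terms, and cancel one factor of $\norm{x_n-x_{n+1}}$ to reach
\[
n\alpha \norm{x_{n+1}-x_n} \leq \beta S_n + \norm{\xi_n(x_n)}_* + \norm{\bar\xi_{n-1}(x_n)}_*,
\]
which upon dividing by $n\alpha$ and recalling $\theta=\beta/\alpha$ is exactly the claim. I expect the main obstacle to be conceptual rather than computational: the point is to recognize that the optimality of $x_n$ no longer annihilates $\nabla\bar f_{n-1}(x_n)$ but instead converts it into the accumulated sampling error $\bar\xi_{n-1}(x_n)$. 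Once this substitution is identified, the remainder is routine bookkeeping of the two error terms alongside the familiar $\beta S_n$ contribution.
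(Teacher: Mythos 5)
Your proposal follows the same route as the paper's proof: strong convexity of $g_{1:n}$ plus optimality of $x_{n+1}$, optimality of $x_n$ to handle the accumulated gradient, and a decomposition of the resulting gradient term into the smoothness part $\nabla f_n(x_n)-\nabla\bar f_{n-1}(x_n)$ (bounded by $\beta S_n$ via Assumption~\ref{as:uniform continous gradient2}, applied to $f$ rather than $g$) plus the two sampling errors $\xi_n(x_n)$ and $\bar\xi_{n-1}(x_n)$. However, one step does not hold as stated: you justify both the collapse $\nabla g_{1:n}(x_n)=\nabla g_n(x_n)$ and the ``key identity'' $\nabla\bar f_{n-1}(x_n)=\bar\xi_{n-1}(x_n)$ by asserting that $\nabla g_{1:n-1}(x_n)=0$. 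That is \emph{unconstrained} stationarity, and it can fail in this setting: $\XX$ is a convex constraint set (effectively compact in the stochastic analysis, where covering numbers of $\XX$ are invoked), so the minimizer $x_n=\argmin_{x\in\XX}g_{1:n-1}(x)$ may lie on the boundary of $\XX$, where the gradient need not vanish.

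The repair is immediate and is exactly what the paper does: use first-order optimality in variational-inequality form, $\lr{\nabla g_{1:n-1}(x_n)}{x_{n+1}-x_n}\geq 0$, which (applied with weights $1$ and $\tfrac{1}{n-1}$) gives
\begin{align*}
\lr{\nabla g_{1:n}(x_n)}{x_n-x_{n+1}} \leq \lr{\nabla g_n(x_n)-\nabla\bar g_{n-1}(x_n)}{x_n-x_{n+1}},
\end{align*}
and then observe that
$\nabla g_n(x_n)-\nabla\bar g_{n-1}(x_n) = \left(\nabla f_n(x_n)-\nabla\bar f_{n-1}(x_n)\right) - \xi_n(x_n) + \bar\xi_{n-1}(x_n)$
is an exact algebraic identity requiring no stationarity at all. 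After this substitution your Cauchy--Schwarz and triangle-inequality steps, and the division by $n\alpha\norm{x_n-x_{n+1}}$, go through verbatim and yield the claimed bound. So the gap is real but local and easily fixed; once repaired, your argument coincides with the paper's proof of Lemma~\ref{lm:bound on consecutive iterates (error)}.
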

\begin{proof}		
	Because $g_{1:n}(x)$ is $n\alpha$-strongly convex in $x$, we have
	\begin{align*}
	n \alpha \norm{x_{n+1} - x_n}^2 
	&\leq  \lr{\nabla g_{n}(x_n) }{x_n - x_{n+1}}  \\
	&\leq \lr{\nabla g_{n}(x_n) -  \nabla \bar{g}_{n-1}(x_n) }{x_n - x_{n+1} } 	&\because x_n = \argmin_{x \in \XX } g_{1:n-1}(x)  \\
	&\leq \norm{\nabla f_{n}(x_n) -  \nabla \bar{f}_{n-1}(x_n) }_*\norm{x_n - x_{n+1} }  \\
	&\quad + \norm{\nabla f_{n}(x_n) - \nabla g_{n}(x_n)-  \nabla \bar{f}_{n-1}(x_n) + \nabla \bar{g}_{n-1}(x_n)  }_*\norm{x_n - x_{n+1} }  
	\end{align*}
	Now we use the fact that the smoothness applies  to $f$ (not necessarily to $g$) and derive the statement
	\begin{align*}
	\norm{x_{n+1} - x_n} &\leq \frac{\theta S_n }{n  }  + \frac{1}{n \alpha}  \norm{\nabla f_{n}(x_n) - \nabla g_{n}(x_n)-  \nabla \bar{f}_{n-1}(x_n) + \nabla \bar{g}_{n-1}(x_n)  }_* \\
	&\leq  \frac{\theta S_n }{n  } + \frac{1}{n \alpha} \left(  \norm{ \xi_n(x_n)  }_*  +  \norm{ \bar{\xi}_{n-1}(x_n)}_*  \right)   \qedhere
	\end{align*}	
\end{proof}

Given the intermediate step in Lemma~\ref{lm:bound on consecutive iterates (error)}, we apply Lemma~\ref{th:concentration of vector-valued martingale} to  bound the norm of $\xi_k$ and give the refinement of Lemma~\ref{lm:bound on consecutive iterates} for stochastic problems. 
\begin{lemma} \label{lm:bound on consecutive iterates (stochastic)}
	Suppose  $m_n = m_0 n^{r}$ for some $r \geq 0$. Under previous assumptions, running \aggrevate with $g_n(\cdot)$ as per-round cost,  the following holds with  probability at least $1-\delta$: For a fixed $n\geq2$, 
	
	\begin{align*}
	\norm{ x_{n+1} -x_n} \leq   \frac{\theta S_n}{n} +   O\left( 
	\frac{G_2}{n \alpha \sqrt{m_0}} \left( \sqrt{  \frac{  \ln(1/\delta) }{ n^{ \min\{r,2\} }}  } + \sqrt{   \frac{  C_{\XX}/n }{  n^{ \min\{r,1\}   }}    } \right) 
	\right)	
	\end{align*}
	where $C_{\XX}$ is a constant depending on the complexity of $\XX$ and the constant term in big-$O$ is some universal constant.
\end{lemma}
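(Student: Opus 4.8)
The plan is to begin from the error recursion already isolated in Lemma~\ref{lm:bound on consecutive iterates (error)}, namely $\norm{x_{n+1}-x_n} \leq \frac{\theta S_n}{n} + \frac{1}{n\alpha}\bigl(\norm{\xi_n(x_n)}_* + \norm{\bar\xi_{n-1}(x_n)}_*\bigr)$ with $\xi_n = \nabla f_n - \nabla g_n$, and to bound the two gradient-error terms using the martingale concentration tools of Section~\ref{sec:uniform convergence}. The observation that dictates which inequality applies to which term is a measurability/independence distinction along the natural filtration $\FF_{n-1}$ generated by the samples of iterations $1,\dots,n-1$. Since $x_n = \argmin_{x\in\XX} g_{1:n-1}(x)$ is $\FF_{n-1}$-measurable, it is independent of the $m_n$ fresh samples defining $g_n$; hence, conditioning on $\FF_{n-1}$, the vector $\xi_n(x_n)$ is an i.i.d. average evaluated at a \emph{fixed} point, so the pointwise bound of Lemma~\ref{lm:mean vector-valued function} (with $G=G_2$, the almost-sure bound on the sampled gradients) applies with no covering-number penalty. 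By contrast, $\bar\xi_{n-1}(x_n)$ couples $x_n$ with the very samples that produced it, so I must control it uniformly over $\XX$ through Lemma~\ref{lm:mean vector-valued function martingale (uniform)}, which is where the complexity constant $C_\XX$ enters.

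Concretely, allotting a failure budget of $\delta/2$ to each term at the fixed $n$ in question, I would obtain $\norm{\xi_n(x_n)}_* = \tilde{O}\bigl(\frac{G_2}{\sqrt{m_n}}\sqrt{\ln(1/\delta)}\bigr)$ from the conditional pointwise bound and $\norm{\bar\xi_{n-1}(x_n)}_* = \tilde{O}\bigl(\frac{G_2}{n}\bigl(\sum_{i=1}^{n-1}\tfrac{1}{m_i}\bigr)^{1/2}\sqrt{\ln(1/\delta)+C_\XX}\bigr)$ from the uniform martingale bound. Substituting $m_n = m_0 n^r$ gives $\tfrac{1}{\sqrt{m_n}} = \tfrac{1}{\sqrt{m_0}}\,n^{-r/2}$, and splitting the cases $r<1$, $r=1$, $r>1$ yields $\sum_{i=1}^{n-1}\tfrac{1}{m_i} = \tfrac{1}{m_0}\tilde{O}\bigl(n^{1-\min\{r,1\}}\bigr)$, with the borderline $r=1$ contributing only the logarithmic factor that the $\tilde{O}$ absorbs. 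Dividing both estimates by $n\alpha$ then produces contributions of order $\frac{G_2}{n\alpha\sqrt{m_0}}\,n^{-r/2}\sqrt{\ln(1/\delta)}$ and $\frac{G_2}{n\alpha\sqrt{m_0}}\,n^{-(1+\min\{r,1\})/2}\sqrt{\ln(1/\delta)+C_\XX}$, respectively.

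To finish I would take a union bound over the two events and apply $\sqrt{a+b}\leq\sqrt a+\sqrt b$ to split the $\ln(1/\delta)$ piece from the $C_\XX$ piece, then keep only the slowest-decaying power of $n$ within each piece. This is precisely where the caps in the stated exponents appear: the $\ln(1/\delta)$ rate of $\xi_n$ behaves like $n^{-1-r/2}$, which for $r>2$ is overtaken by the $n^{-2}$ rate from the $\bar\xi_{n-1}$ term (whose $\sum 1/m_i$ saturates for $r\geq 1$), so the surviving rate is $n^{-1-\min\{r,2\}/2}$, i.e. the factor $\sqrt{\ln(1/\delta)/n^{\min\{r,2\}}}$; the $C_\XX$ rate, on the other hand, originates solely from $\bar\xi_{n-1}$ and equals $n^{-(1+\min\{r,1\})/2}$, i.e. the factor $\sqrt{(C_\XX/n)/n^{\min\{r,1\}}}$. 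Matching these two dominant contributions against the head term $\frac{\theta S_n}{n}$ reproduces the claimed bound.

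The hard part will not be the algebra but the probabilistic bookkeeping: I must argue carefully that $x_n$ is $\FF_{n-1}$-measurable and that the per-iteration sample gradients form the martingale-difference structure required for the conditional application of Lemma~\ref{lm:mean vector-valued function} and the uniform Lemma~\ref{lm:mean vector-valued function martingale (uniform)} to both be legitimate. I also need to verify that the clean sub-Gaussian regime of those lemmas (the range $0<\epsilon\leq 2G_2$ there, and the analogous restriction for the weighted martingale) covers the $\epsilon$ implied by the target probability $\delta$ at the fixed $n$, and that the covering-number estimate $\ln\NN(\BB_\XX,\eta)\leq C_\XX\ln(1/\eta)$ contributes only logarithmic terms that are absorbed into the $\tilde{O}$.
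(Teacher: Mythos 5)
Your proposal is correct and follows essentially the same route as the paper's proof: the same decomposition from Lemma~\ref{lm:bound on consecutive iterates (error)}, the pointwise concentration bound (Lemma~\ref{lm:mean vector-valued function}) for $\norm{\xi_n(x_n)}_*$ justified by independence of $x_n$ from the fresh iteration-$n$ samples, the uniform covering-number bound (Lemma~\ref{lm:mean vector-valued function martingale (uniform)}) for $\norm{\bar{\xi}_{n-1}(x_n)}_*$, the same case analysis in $r$ for $\sum_i 1/m_i$, and the same identification of which term caps each exponent. The only cosmetic difference is that the paper resolves the self-referential covering term $\ln \NN(\BB_\XX, \epsilon_2/4)$ by bounding $\ln(1/\epsilon_2) \leq c_s \epsilon_2^{-s}$ and invoking a polynomial-root lemma, whereas you absorb the resulting logarithmic factors into $\tilde{O}$; both yield the stated rates.
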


\begin{proof}
	To show the statement, we bound $ \norm{ \xi_n(x_n)  }_* $ and $ \norm{ \bar{\xi}_{1:n-1}(x_n)}_*  $ in Lemma \ref{lm:bound on consecutive iterates (error)} using the concentration lemmas derived in Section~\ref{sec:uniform convergence}.
	
	\textbf{\textit{The First Term:}}
	To bound $\norm{ \xi_n(x_n)  }_*$, because the sampling of $\xi_n$ is independent of  $x_n$, bounding  $\norm{ \xi_n(x_n)  }_*$ does not require a uniform bound. Here we use Lemma~\ref{lm:mean vector-valued function}
	and consider $\epsilon_1$ such that
	\begin{align}
	2 e^2 \exp\left(-\frac{ m_n \epsilon_1^2}{8G_2^2} \right) =  \frac{\delta}{2} 
	\implies  \epsilon_1 = \sqrt{ \frac{8G_2^2}{m_n} \ln\left( \frac{4e^2}{\delta}  \right)  } 
	= O \left( \sqrt{ \frac{G_2^2}{m_n} \ln\left( \frac{1}{\delta}  \right)  }  \right)
	 \label{eq:bound eps1}
	\end{align}
	Note we we used the particular range of $\epsilon$ in  Lemma~\ref{lm:mean vector-valued function} for convenience, which is valid if we choose $m_0 > 2G_2 \ln\left( \frac{4e^2}{\delta}  \right) $. This condition is not necessary; it is only used to simplify the derivation, and using a different range of $\epsilon$ would simply lead to a different constant. 
	\paragraph{\textit{The Second Term:}}
	To bound $\norm{ \bar{\xi}_{n-1}(x_n)  }_*$, we apply a uniform bound using Lemma~\ref{lm:mean vector-valued function martingale (uniform)}. For simplicity, we  use the particular range $0 < \epsilon \leq  \frac{G_2 m_0 }{ n } \sum_{i=1}^{n} \frac{1}{m_i} $ and assume  $\frac{2 M^*  G_2}{m_0} > 1$ (which implies $Y_0 = \frac{4 M^* G_2}{m_0}$) (again this is not necessary).
	We  choose $\epsilon_2$ such that 
	\begin{align*}
	2e  \NN(\BB_\XX, \frac{\epsilon_2}{4})  \exp\left(\frac{-(n-1)^2 \epsilon_2^2}{  32G_2^2  \sum_{i=1}^{n-1} \frac{1}{m_i}}  \right) \leq \frac{\delta}{2}
	\implies
	\ln (2e)  + \ln  \NN(\BB_\XX, \frac{\epsilon_2}{4})  + \frac{-(n-1)^2 \epsilon_2^2}{  32G_2^2  \sum_{i=1}^{n-1} \frac{1}{m_i}}   \leq - \ln(\frac{2}{\delta})
	\end{align*}
    Since $ \ln  \NN(\BB_\XX, \frac{\epsilon_2}{4}) =  C_{\XX} \ln\left( \frac{4}{\epsilon_2}\right) \leq  c_s C_{\XX}  \epsilon_2^{-s}$ for  arbitrary $s >  0 $ and some $c_s$,     
    a sufficient condition can be obtained by solving for $\epsilon_2$ such that
	\begin{align*}
	\frac{c_0}{\epsilon_2^s} - c_2 \epsilon_2^2 = - c_1 \implies  c_2 \epsilon_2^{2+s} - c_1\epsilon_2^s - c_0 = 0
	\end{align*}
	where $c_0 = c_s C_{\XX}$, $c_2 =  \frac{(n-1)^2}{  32G_2^2  \sum_{i=1}^{n-1} \frac{1}{m_i}} $, and $c_1 = \ln(\frac{4e}{\delta})$. 	
	To this end, we use a basic lemma of polynomials.
	\begin{lemma}\cite[Lemma 7.2]{cucker2007learning} \label{lm:a basic lemma for solution}		
	Let $c_1, c_2, \dots, c_l > 0$ and $ s > q_1 > q_2 > \dots > q_{l-1} >  0$. Then the equation
	\begin{align*}
		x^s - c_1 x^{q_1} - c_2 x^{q_2} - \dots - c_{l-1} x^{q_{l-1}} - c_l = 0 
	\end{align*}
	has a unique solution $x^*$. In addition, 
	\begin{align*}
		x^* \leq \max\left\{ (lc_1)^{1/(s-q_1)}, (lc_2)^{1/(s-q_2)}, \dots, (lc_{l-1})^{1/(s-q_{l-1})}, (lc_1)^{1/s}   \right\}
	\end{align*}
	\end{lemma}
	Therefore, we can choose an $\epsilon_2$ which satisfies
	\begin{align*}
		\epsilon_2 &\leq \max\left\{  \left(\frac{2 c_1}{c_2}\right)^{1/2} , \left(\frac{2 c_0}{c_2}\right)^{1/(2+s)}  \right\}
				= \max \left\{  \left(\frac{64 \ln(\frac{4e}{\delta}) G_2^2  \sum_{i=1}^{n-1} \frac{1}{m_i}   }{(n-1)^2}\right)^{1/2} , \left(\frac{64 c_s C_{\XX} G_2^2  \sum_{i=1}^{n-1} \frac{1}{m_i}  }{(n-1)^2}\right)^{1/(2+s)}   \right \}\\
		& \leq O\left(   \sqrt{ \left( C_{\XX} + \ln\left(\frac{1}{\delta}\right) \right) \frac{G_2^2}{n^2}  \sum_{i=1}^{n} \frac{1}{m_i}  }  \right)
	\end{align*}

	\paragraph{\textit{Error Bound}}
	Suppose $m_n = m_0 n^{r}$, 
	for $r \geq 0$. Now we combine the two bounds above: fix $n\geq 2$, with probability at least $1-\delta$, 
	\begin{align*}
	 \norm{ \xi_n(x_n)  }_*  + \norm{\bar{\xi}_{n-1}(x_n)}_* 
	&\leq O\left( 
	\sqrt{ \frac{ G_2^2}{m_0 n^{r}} \ln\left( \frac{1}{\delta}  \right)  } 
	+   \sqrt{ \left( C_{\XX} + \ln\left(\frac{1}{\delta}\right) \right) \frac{G_2^2}{ m_0 n^2}  \sum_{i=1}^{n} \frac{1}{i^r}  } 
	\right)
	\end{align*}
	Due to the nature of harmonic series, we consider two scenarios.
	\begin{enumerate}
		\item If $r \in [0,1]$, then the bound can be simplified as
		\begin{align*}
		&O\left( 
		\sqrt{ \frac{ G_2^2}{m_0 n^{r}} \ln\left( \frac{1}{\delta}  \right)  } 
		+   \sqrt{ \left( C_{\XX} +  \ln\left(\frac{1}{\delta}\right)  \right) \frac{G_2^2}{ m_0 n^2}  \sum_{i=1}^{n} \frac{1}{i^r}  } 
		\right)  \\
		&=  O\left( 
		\sqrt{ \frac{ G_2^2}{m_0 n^{r}} \ln\left( \frac{1}{\delta}  \right)  } 
		+   \sqrt{ \left( C_{\XX} +  \ln\left(\frac{1}{\delta}\right) \right) \frac{G_2^2 n^{1-r }}{ m_0 n^2}    } 
		\right)
		=  O\left( 
		G_2 \sqrt{  \frac{ \ln(1/\delta) }{m_0 n^{r}}  } 
		+  G_2 \sqrt{  \frac{ C_{\XX}  }{ m_0 n^{1+r}}    } 
		\right)		
		\end{align*}

		\item If $r > 1$, then the bound can be simplified as 
		\begin{align*}
		&O\left( 
		\sqrt{ \frac{ G_2^2}{m_0 n^{r}} \ln\left( \frac{1}{\delta}  \right)  } 
		+   \sqrt{ \left( C_{\XX} + \ln\left(\frac{1}{\delta}\right) \right) \frac{G_2^2}{ m_0 n^2}  \sum_{i=1}^{n} \frac{1}{i^r}  } 
		\right)  \\
		&=  O\left( 
		\sqrt{ \frac{ G_2^2}{m_0 n^{r}} \ln\left( \frac{1}{\delta}  \right)  } 
		+   \sqrt{ \left( C_{\XX} +   \ln\left(\frac{1}{\delta}\right)  \right) \frac{G_2^2}{ m_0 n^2}    } 
		\right)
		=  O \left( G_2  \sqrt{\frac{\ln(1/\delta)   }{m_0 n^{ \min\{r,2\}  }}}  \right) + O \left( G_2  \sqrt{\frac{ C_{\XX}  }{m_0 n^2  }}  \right)
		\end{align*}
		
	\end{enumerate}

 	Therefore, we conclude for $r \geq 0 $, 
	\begin{align*}
	\norm{ \xi_n(x_n)  }_*  + \norm{\bar{\xi}_{n-1}(x_n)}_* = O\left( 
	\sqrt{  \frac{ G_2^2 \ln(1/\delta) }{m_0 n^{ \min\{r,2\} }}  } 
	+   \sqrt{   \frac{ G_2^2 C_{\XX} }{ m_0 n^{ 1+\min\{r,1\}   }}    } 
	\right)	
	\end{align*}
  Combining this inequality with Lemma~\ref{lm:bound on consecutive iterates (error)}  gives the final statement. 
\end{proof}

\subsubsection{Bound on $S_n$} \label{sec:bound on S_n (stochastic)}

Now we use Lemma~\ref{lm:bound on consecutive iterates (stochastic)} to refine Proposition~\ref{pr:bound on S_n} for stochastic problems. 
\begin{proposition} \label{pr:bound on S_n (stochastic)}
	Under the assumptions Proposition~\ref{pr:bound on S_n}, suppose $m_n = m_0 n^{r}$. For a fixed $n\geq 2$, the following holds with probability at least $1-\delta$.
	\begin{align*}
		S_n \leq \tilde{O}\left( 	\frac{G_2}{\alpha \sqrt{m_0}} \left(  \frac{ \sqrt{\ln(1/\delta)} }{ n^{\min\{r/2,1,1-\theta\}} } + \frac{ \sqrt{C_{\XX}}  }{ n^{\min\{(1+r)/2,1,1-\theta\}} } \right) \right)  
	\end{align*}
	
\end{proposition}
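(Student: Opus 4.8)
The plan is to mirror the deterministic argument of Proposition~\ref{pr:bound on S_n}, replacing the exact contraction $\norm{x_{n+1}-x_n}\le \frac{\theta S_n}{n}$ by its noisy refinement from Lemma~\ref{lm:bound on consecutive iterates (stochastic)}. Starting from the same triangle inequality $S_{n+1}\le (1-\frac{1}{n})S_n+\norm{x_{n+1}-x_n}$ and substituting the stochastic per-step bound, I would obtain a perturbed linear recurrence
\[
S_{n+1}\le\Big(1-\frac{1-\theta}{n}\Big)S_n+b_n,\qquad b_n=O\!\left(\frac{G_2}{n\alpha\sqrt{m_0}}\Big(\sqrt{\ln(1/\delta)}\,n^{-\frac{1}{2}\min\{r,2\}}+\sqrt{C_{\XX}}\,n^{-\frac{1}{2}(1+\min\{r,1\})}\Big)\right),
\]
where $b_n$ collects the two finite-sample error terms. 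Because Lemma~\ref{lm:bound on consecutive iterates (stochastic)} only controls a single step with probability $1-\delta$, I would first take a union bound over $k=2,\dots,n$ so that the recurrence holds simultaneously at every step; this merely replaces $\delta$ by $\delta/n$, inflating $\ln(1/\delta)$ to $\ln(n/\delta)$, which is absorbed into the $\tilde{O}(\cdot)$ and is the source of the hidden logarithmic factors.

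Next I would feed this recurrence into the discrete dynamical-system Lemma~\ref{lm:convergence DT dynamical system} with $\gamma_k=\frac{1-\theta}{k}$ and $B_k=b_k$. As in Proposition~\ref{pr:bound on S_n}, the homogeneous factor obeys $\Gamma_k=O(k^{\theta-1})$, hence $1/\Gamma_{i+1}=O(i^{1-\theta})$, and the lemma gives
\[
S_n\le\Gamma_n S_2+\Gamma_n\sum_{i=2}^{n-1}\frac{b_i}{\Gamma_{i+1}}.
\]
The homogeneous term is dispatched exactly as in Proposition~\ref{pr:bound on S_n}: the same reasoning (now applied to the $m_1\alpha$-strongly convex surrogate $g_1$, whose gradient norm is at most $m_1G_2$, via Lemma~\ref{lm:perturbationLemma}) gives $S_2=\norm{x_2-x_1}\le \frac{G_2}{\alpha}$, so that $\Gamma_n S_2=O(\frac{G_2}{\alpha}n^{\theta-1})$ decays at the rate $n^{-(1-\theta)}$ that is already accounted for by the $1-\theta$ entry appearing inside each of the two minima in the claimed bound.

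The crux is therefore the particular-solution sum $\Gamma_n\sum_i b_i/\Gamma_{i+1}$. Substituting $1/\Gamma_{i+1}=O(i^{1-\theta})$ collapses each of the two pieces of $b_i$ into a sum of the shape $n^{\theta-1}\sum_{i=2}^{n} i^{-a}$, with $a=\theta+\frac{1}{2}\min\{r,2\}$ for the $\sqrt{\ln(1/\delta)}$ piece and $a=\frac{1}{2}+\theta+\frac{1}{2}\min\{r,1\}$ for the $\sqrt{C_{\XX}}$ piece. Bounding $\sum_i i^{-a}$ by the integral comparison used in Corollary~\ref{co:bound on S_mn} and splitting into the cases $a>1$, $a=1$, $a<1$, the product decays like $n^{-\min\{1-\theta,\,a-\theta\}}$; since $a-\theta$ equals $\min\{r/2,1\}$ and $\min\{(1+r)/2,1\}$ respectively, this reproduces the advertised exponents $\min\{r/2,1,1-\theta\}$ and $\min\{(1+r)/2,1,1-\theta\}$ with coefficients $\frac{G_2}{\alpha\sqrt{m_0}}\sqrt{\ln(1/\delta)}$ and $\frac{G_2}{\alpha\sqrt{m_0}}\sqrt{C_{\XX}}$. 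The main obstacle I anticipate is precisely this convolution estimate: correctly tracking the three-way competition between the homogeneous mode $n^{\theta-1}$ and the two noise decay rates, handling the boundary case $a=1$ (whose extra $\ln n$ is swept into $\tilde{O}$), and confirming that the union bound remains valid given that the iterates $x_k$ are adapted to the sampling filtration, so that the first error term in $b_k$ exploits independence of $\xi_k$ from $x_k$ while the second relies on the uniform covering-number bound of Lemma~\ref{lm:mean vector-valued function martingale (uniform)}.
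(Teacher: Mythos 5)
Your proposal is correct and mirrors the paper's own proof essentially step for step: the same perturbed recurrence $S_{n+1}\le(1-\frac{1-\theta}{n})S_n+b_n$ obtained by inserting Lemma~\ref{lm:bound on consecutive iterates (stochastic)} into the triangle inequality, the same union bound over iterations (costing only logarithmic factors absorbed into $\tilde{O}$), the same invocation of Lemma~\ref{lm:convergence DT dynamical system} with $\Gamma_k=O(k^{\theta-1})$, and the same superposition/integral-comparison treatment of the two noise terms producing the exponents $\min\{r/2,1,1-\theta\}$ and $\min\{(1+r)/2,1,1-\theta\}$. No gaps beyond those already present in the paper's own write-up.
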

\begin{proof}
	The proof is similar to that of Proposition~\ref{pr:bound on S_n}, but we use the results from  Lemma~\ref{lm:bound on consecutive iterates (stochastic)}. Note Lemma~\ref{lm:bound on consecutive iterates (stochastic)} holds for a particular $n$. Here need the bound to apply for all $n=1\dots N$ so we can apply the bound for each $S_n$. This will add an additional $\sqrt{\ln N}$ factor to the bounds in  Lemma~\ref{lm:bound on consecutive iterates (stochastic)}.
	
	First, we recall that
	\begin{align*}
	S_{n+1} 
	&\leq \left( 1- \frac{1}{n} \right) S_n + \norm{x_{n+1}-x_n} 
	\end{align*}
     By Lemma~\ref{lm:bound on consecutive iterates (stochastic)}, let  $c_1 = 
     \frac{G_2 \sqrt{\ln(1/\delta)} }{n \alpha \sqrt{m_0}}  $ and $c_2 = \frac{G_2 \sqrt{ C_{\XX} } }{n \alpha \sqrt{m_0}}  $, and it holds that
     \begin{align*}
     \norm{ x_{n+1} -x_n} \leq   \frac{\theta S_n}{n} +   O\left( 
     \frac{G_2}{n \alpha \sqrt{m_0}} \left( \sqrt{  \frac{  \ln(1/\delta) }{ n^{ \min\{r,2\} }}  } + \sqrt{   \frac{  C_{\XX} }{  n^{ 1+\min\{r,1\}   }}    } \right) 
     \right)	
     = \frac{\theta S_n}{n} + O(\frac{c_1}{n^{1+\min\{r,2\}/2}} + \frac{c_2}{n^{3/2+ \min\{r,1\}/2}})
     \end{align*}    
	which implies
	\begin{align*}
	S_{n+1} 
	\leq \left( 1- \frac{1}{n} \right) S_n + \norm{x_{n+1}-x_n} \leq  \left(1 - \frac{1-\theta}{n}\right) S_n + O(\frac{c_1}{n^{1+\min\{r,2\}/2}} + \frac{c_2}{n^{3/2+ \min\{r,1\}/2}}).
	\end{align*}
	Recall 
	\DTdynamics*	
	From Proposition~\ref{pr:bound on S_n}, we know the unperturbed dynamics is bounded by  $ e^{1-\theta} n^{\theta -1}  S_2$ (and can be shown in $\Theta(n^{\theta -1})$ as in the proof of Theorem~\ref{th:lower bound}). To consider the effect of the perturbations, due to linearity we can treat each perturbation separately and combine the results by superposition.	
	Suppose a particular perturbation is of the form $O(\frac{C_2}{n^{1+s}})$ for some $C_2$ and $s>0$. By Lemma~\ref{lm:convergence DT dynamical system}, suppose  $ \theta + s < 1$,
	\begin{align*}
	S_n &\leq O(n^{\theta-1}) + O\left( n^{\theta - 1} \sum_{k=1}^n  k^{1 - \theta} \frac{C_2}{k^{1+s}}\right)
	\leq O(n^{\theta-1}) + O\left( C_2 n^{\theta - 1} n^{1-s-\theta}  \right)  = O(n^{\theta -1}) + O\left( C_2 n^{-s}  \right) 
	\end{align*}
	For $\theta - s = 1$, $S_n \leq O(n^{\theta-1}) + O(C_2 n^{\theta-1} \ln(n))$; for $ \theta + s > 1$, $S_n \leq O(n^{\theta-1}) + O(C_2 n^{\theta-1} )$. 
	Therefore, we can conclude $S_n \leq C_1 n^{\theta-1} + \tilde{O} (C_2 n^{-\min\{s, 1-\theta \} }) $, where the constant $C_1 = e^{1-\theta} S_2$. Finally, using $S_2 \leq \frac{G_2}{\alpha}$ and setting $C_2 $ as $c_1$ or $c_2$ gives the final result
	\begin{align*}
	S_n \leq \tilde{O}\left( 	\frac{G_2}{\alpha \sqrt{m_0}} \left(  \frac{ \sqrt{\ln(1/\delta)} }{ n^{\min\{r/2,1,1-\theta\}} } + \frac{ \sqrt{C_{\XX}}  }{ n^{\min\{(1+r)/2,1,1-\theta\}} } \right) \right) 
	\end{align*}
\end{proof}

\subsubsection{Performance Guarantee}
Given Proposition~\ref{pr:bound on S_n (stochastic)}, now we can prove the performance of the last iterate. 
\thmStochastic*
\begin{proof}
	The proof is similar to the proof of Theorem~\ref{th:convergence of the last iterate}.
	Let $x_n^* \coloneqq \argmin_{x \in \XX} f_n(x)$. Then 
	\begin{align*}
	f_n(x_n) - \min_{x \in \XX} f_n(x) 
	&\leq \lr{\nabla f_n(x_n)  }{ x_n - x_n^*} - \frac{\alpha}{2} \norm{x_n - x_n^*}^2 \\
	&\leq \lr{\nabla f_n(x_n) - \nabla \bar{f}_{n-1}(x_n)  }{ x_n - x_n^*} 
	+ \lr{\nabla \bar{f}_{n-1}(x_n) - \nabla \bar{g}_{n-1}(x_n)  }{ x_n - x_n^*} - \frac{\alpha}{2} \norm{x_n - x_n^*}^2 \\
	&\leq \norm{\nabla f_n(x_n) - \nabla \bar{f}_{n-1}(x_n)}_*\norm{ x_n - x_n^*} + \norm{\nabla \bar{g}_{n-1}(x_n) - \nabla \bar{f}_{n-1}(x_n) }_*\norm{{ x_n - x_n^*}}- \frac{\alpha}{2} \norm{x_n - x_n^*}^2 \\
	&\leq \frac{(\norm{\nabla f_n(x_n) - \nabla \bar{f}_{n-1}(x_n)}_* +  \norm{\nabla \bar{g}_{n-1}(x_n) - \nabla \bar{f}_{n-1}(x_n) }_*)^2}{2\alpha} \\
	&\leq \frac{\norm{\nabla f_n(x_n) - \nabla \bar{f}_{n-1}(x_n)}_*^2 +  \norm{\nabla \bar{g}_{n-1}(x_n) - \nabla \bar{f}_{n-1}(x_n) }_*^2}{\alpha} 
	\end{align*}
	where the second inequality is due to $x_n = \argmin_{x\in \XX} \bar{g}_{n-1}(x)$. 
	To bound the first term, recall the fact that $\norm{\nabla f_n(x_n) - \nabla \bar{f}_{n-1}(x_n)}_* < \beta S_n$ and recall  by  Proposition~\ref{pr:bound on S_n (stochastic)} that 
	\begin{align*}
	S_n \leq \tilde{O}\left( 	\frac{G_2}{\alpha \sqrt{m_0}} \left(  \frac{ \sqrt{\ln(1/\delta)} }{ n^{\min\{r/2,1,1-\theta\}} } + \frac{ \sqrt{C_{\XX}}  }{ n^{\min\{(1+r)/2,1,1-\theta\}} } \right) \right) 
	\end{align*}	
	For the second term, we use the proof in Lemma~\ref{lm:bound on consecutive iterates (stochastic)} with an additional $\ln(N)$ factor, i.e.
	\begin{align*}
	\norm{\nabla \bar{g}_{n-1}(x_n) - \nabla \bar{f}_{n-1}(x_n) }_* 
	= \tilde{O} \left(  \frac{G_2}{\sqrt{m_0}}  \sqrt{\frac{\ln(1/\delta) + C_{\XX }}{ n^{1+\min\{r,1\}  }}} \right)
	\end{align*}
	Let $c = \frac{\alpha m_0}{G_2^2}$. 
	Therefore, combining all the results, we have the following with probability at least $1-\delta$: 
	\begin{align*}
	f_n(x_n) - \min_{x \in \XX} f_n(x) 	&\leq  \frac{\norm{\nabla f_n(x_n) - \nabla \bar{f}_{n-1}(x_n)}_*^2 +  \norm{\nabla \bar{g}_{n-1}(x_n) - \nabla \bar{f}_{n-1}(x_n) }_*^2}{\alpha} \\
	&\leq \frac{\beta^2 S_n^2  }{\alpha} + \frac{\norm{\nabla \bar{g}_{n-1}(x_n) - \nabla \bar{f}_{n-1}(x_n) }_*^2}{\alpha}\\
	&\leq  \tilde{O}\left(  \frac{\theta^2 G_2^2}{\alpha m_0}  \frac{\ln(1/\delta)}{  n^{2\min\{r/2,1,1-\theta\} } }  \right)  +\tilde{O}\left(  \frac{\theta^2 G_2^2}{\alpha m_0}  \frac{C_{\XX}}{  n^{2\min\{(r+1)/2,1,1-\theta\} } }  \right)
	+  \tilde{O} \left( \frac{G_2^2}{\alpha m_0}  \frac{\ln(1/\delta) + C_{\XX }}{ n^{1+\min\{r,1\}}} \right) \\
	&= \tilde{O}\left(  \frac{\theta^2}{c}  \frac{\ln(1/\delta)}{  n^{2\min\{r/2,1,1-\theta\} } }  \right)  +\tilde{O}\left(  \frac{\theta^2 }{c}  \frac{C_{\XX}}{  n^{2\min\{(r+1)/2,1,1-\theta\} } }  \right)
	+  \tilde{O} \left( \frac{1}{c}  \frac{\ln(1/\delta) + C_{\XX }}{ n^{1+\min\{r,1\}}} \right) \\
	&\leq \tilde{O}\left(  \frac{\theta^2}{c}  \frac{\ln(1/\delta) + C_{\XX}}{  n^{2\min\{r/2,1,1-\theta\} } }  \right) 
	+  \tilde{O} \left(   \frac{\ln(1/\delta) + C_{\XX }}{ c n^{1+\min\{r,1\}}} \right)
	\end{align*}
	Note the last inequality is unnecessary and is used to simplify the result. It can be seen that the upper bound originally has a weaker dependency on $C_{\XX}$.

\end{proof}

\section{\aggrevate with Function Approximations} \label{sec:function approximation}

Here we give a sketch of applying the techniques used in Theorem~\ref{th:convergence of the last iterate (stochastic)} to problems where a function approximator is used to learn $f(\cdot;s)$, as in the case considered by \citet{ross2011reduction} for learning the Q-function. 

We consider a meta learning scenario where a linear function approximator $\hat{f}(x,s) = \phi(x,s)^T w$ is used to approximate $f(x;s)$. We assume $\phi(x,s)^T w$ satisfies Assumption~\ref{as:structured function} 
and Assumption~\ref{as:uniform continous gradient2} with some appropriate constants.

Now we analyze the case where $\sum_{i=1}^{m_n}\hat{f}(\cdot, s_{n,i})$ is used as the per-round cost in \aggrevate. Specifically, in the $n$th iteration of \aggrevate, $m_n$ samples$\{ f(x_n; s_{n,k})  \}_{k=1}^{m_n}$ are first collected, and then $w_n$ is updated by 
\begin{align}
	w_n = \argmin_{w \in \WW} \sum_{i=1}^{n} \sum_{j=1}^{m_i} \left( f (x_i; s_{i,j} ) - \phi(x_i,s_{i,j})^T w \right)^2  \label{eq:update w_n}
\end{align}
where $\WW$ is the domain of $w$. Given the new $w_n$, the policy is updated by 
\begin{align}
	x_{n+1} = \argmin_{x\in\XX} \sum_{i=1}^{n} \sum_{j=1}^{m_i} \phi(x_i,s_{i,j})^T w_n  \label{eq:update x_{n+1} given w_n}
\end{align}

To prove the performance, we focus on the inequality used in the proof of performance in Theorem~\ref{th:convergence of the last iterate (stochastic)}.
	\begin{align*}
	f_n(x_n) - \min_{x \in \XX} f_n(x) 
	&\leq \lr{\nabla f_n(x_n)  }{ x_n - x_n^*} - \frac{\alpha}{2} \norm{x_n - x_n^*}^2 	
	\end{align*}
And we  expand the inner product term: 	
	\begin{align*}
	\lr{\nabla f_n(x_n)  }{ x_n - x_n^*} = 
	\lr{\nabla \bar{g}_{n;w_{n-1}} (x_n) }{ x_n - x_n^*} + \lr{\nabla \bar{g}_{n;w_{n}} - \nabla \bar{g}_{n;w_{n-1}} }{ x_n - x_n^*}  + \lr{\nabla f_n(x_n) - \nabla \bar{g}_{n;w_{n}} }{ x_n - x_n^*}
	\end{align*}
where $\bar{g}_{n;w_{n}}$ is the finite-sample approximation using $w_{n}$ .  By~\eqref{eq:update x_{n+1} given w_n}, $x_{n} = \argmin_{x\in\XX} \bar{g}_{n;w_{n-1}}(x)$, and therefore
	\begin{align*}
		\lr{\nabla f_n(x_n)  }{ x_n - x_n^*} \leq \lr{\nabla \bar{g}_{n;w_{n}} - \nabla \bar{g}_{n;w_{n-1}} }{ x_n - x_n^*}  + \lr{\nabla f_n(x_n) - \nabla \bar{g}_{n;w_{n}} }{ x_n - x_n^*}
	\end{align*}
	In the first term,  $\norm{\nabla \bar{g}_{n;w_{n}} - \nabla \bar{g}_{n;w_{n-1}}}_{*} \leq O(\norm{ w_n - w_{n-1}})$. As $w_n$ is updated by another value aggregation algorithm, this term can be further bounded similarly as in Lemma~\ref{lm:bound on consecutive iterates}, by assuming a similar condition like Assumption~\ref{as:uniform continous gradient2} but on the change of the gradient in the objective function in~\eqref{eq:update w_n}. In the second term, $\norm{\nabla f_n(x_n) - \nabla \bar{g}_{n;w_{n}}}_*$ can be bounded by the uniform bound of vector-valued martingale in Lemma~\ref{lm:mean vector-valued function martingale (uniform)}. Given these two bounds, it follows that
	\begin{align*}
	f_n(x_n) - \min_{x \in \XX} f_n(x) 
	&\leq \frac{\norm{\nabla \bar{g}_{n;w_{n}} - \nabla \bar{g}_{n;w_{n-1}}}_{*} ^2 + \norm{\nabla f_n(x_n) - \nabla \bar{g}_{n;w_{n}}}_*^2 }{\alpha} 
	\end{align*}
	Compared with Theorem~\ref{th:convergence of the last iterate (stochastic)}, since here additional Lipschitz constant is introduced to bound the change $\norm{\nabla \bar{g}_{n;w_{n}} - \nabla \bar{g}_{n;w_{n-1}}}_{*}$, one can expect that the stability constant $\theta$ for this meta-learning problem will increase.

\section{Weighted Regularization} \label{sec:general weighted regularization}

Here we discuss the case where $R(x) = F(\pi^*, x)$ regardless the condition $R(x) \geq 0$. 
\begin{corollary} \label{cr:performance with weighted regularization extra} 
	Let $\tilde{F}(x,x) = F(x,x) + \lambda F(\pi^*,x)$.
	Suppose $\forall x\in \XX$, $\min_{x\in \XX} \tilde{F}(x,x) \leq (1+\lambda)  \tilde{\epsilon}_{\Pi,\pi^*}.$ Define 
	$\Delta_N =  (1+\lambda)  \frac{( \tilde{\theta} e^{1-\tilde{\theta}}  G_2 )^2}{2\alpha}  N^{2(\tilde{\theta} -1)}.$
	Running \aggrevate with $\tilde{F}$ in~\eqref{eq:additive regularization} as the per-round cost has performance satisfies: for all $N >0$,
	\begin{align*}
	F(x_N, x_N) &\leq (1+\lambda) \tilde{\epsilon}_{\Pi,\pi^*}  - \lambda F(x^*, x_N) + \Delta_N\\
	&\leq \Delta_N +  \tilde{\epsilon}_{\Pi, \pi^*}  + \lambda G_2 \left( \frac{2 \lambda G_2}{\alpha} + \sqrt{\frac{2\Delta_N}{\alpha}}\right)
	\end{align*}
\end{corollary}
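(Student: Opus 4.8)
The plan is to deduce both inequalities from Theorem~\ref{th:convergence of the last iterate} applied to the regularized objective $\tilde F(x,y)=F(x,y)+\lambda F(x^*,y)$, and then to pay separately for the regularizer, which here may be negative. First I would check that $\tilde F$ inherits the structural hypotheses with modified constants. Because $F(x,\cdot)$ and $F(x^*,\cdot)$ are each $\alpha$-strongly convex and $G_2$-Lipschitz in the second argument, $\tilde F(x,\cdot)$ is $(1+\lambda)\alpha$-strongly convex and $(1+\lambda)G_2$-Lipschitz; and since the added term $\lambda F(x^*,\cdot)$ is independent of the first argument, $\nabla_2\tilde F$ retains the same $\beta$ in Assumption~\ref{as:uniform continous gradient2}. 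Hence the stability constant of $\tilde F$ is $\tilde\theta=\beta/((1+\lambda)\alpha)=\theta/(1+\lambda)$, which drops below $1$ once $\lambda>\theta-1$. Substituting these constants into Theorem~\ref{th:convergence of the last iterate}, together with the hypothesis $\min_y\tilde F(x,y)\le(1+\lambda)\tilde\epsilon_{\Pi,\pi^*}$, gives $\tilde F(x_N,x_N)\le(1+\lambda)\tilde\epsilon_{\Pi,\pi^*}+\Delta_N$; a direct substitution verifies that the residual equals exactly the stated $\Delta_N$, since the factor $(1+\lambda)^2$ coming from $\tilde G_2^2$ cancels one power of $(1+\lambda)$ from the strong-convexity modulus. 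The first inequality is then immediate from the identity $F(x_N,x_N)=\tilde F(x_N,x_N)-\lambda F(x^*,x_N)$.

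For the second inequality I would bound the leftover $-\lambda F(x^*,x_N)$, i.e. reduce it to the scalar claim $\tilde\epsilon_{\Pi,\pi^*}-F(x^*,x_N)\le G_2\left(\tfrac{2\lambda G_2}{\alpha}+\sqrt{\tfrac{2\Delta_N}{\alpha}}\right)$. The strategy is to show that $x_N$ is forced to lie near a minimizer $z^\star\coloneqq\argmin_{y}F(x^*,y)$ of the unregularized problem and then to pass from distance to value through the $G_2$-Lipschitz continuity of $F(x^*,\cdot)$. I would split $\norm{x_N-z^\star}\le\norm{x_N-\tilde x_N^\star}+\norm{\tilde x_N^\star-z^\star}$ with $\tilde x_N^\star\coloneqq\argmin_y\tilde F(x_N,y)$. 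The first distance is the optimization gap already controlled inside the proof of Theorem~\ref{th:convergence of the last iterate}: strong convexity gives $\tfrac{(1+\lambda)\alpha}{2}\norm{x_N-\tilde x_N^\star}^2\le\tilde F(x_N,x_N)-\tilde F(x_N,\tilde x_N^\star)\le\Delta_N$, hence $\norm{x_N-\tilde x_N^\star}\le\sqrt{2\Delta_N/\alpha}$. The second distance I would obtain from the perturbation Lemma~\ref{lm:perturbationLemma}: $\tilde x_N^\star$ and $z^\star$ minimize functions whose difference has gradient bounded in dual norm by $O(\lambda G_2)$ (two $G_2$-Lipschitz gradient terms carrying the factor $\lambda$), so dividing by the strong-convexity modulus yields a bound of the form $O(\lambda G_2/\alpha)$, which the statement records as $\tfrac{2\lambda G_2}{\alpha}$. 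Combining the two distances, applying Lipschitz continuity, and using $F(x^*,z^\star)=\min_y F(x^*,y)$ gives exactly the scalar claim, which converts the first inequality into the second.

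I expect the crux to be the second inequality, and in particular the sign-indefiniteness of $R=F(\pi^*,\cdot)$ --- this is precisely why the case is quarantined in the appendix rather than handled by Corollary~\ref{cr:performance with weighted regularization}, where $R\ge0$ lets one simply discard $-\lambda R(x_N)$. Because $-\lambda F(x^*,x_N)$ can now be positive, I cannot drop it and must certify a genuine lower bound on $F(x^*,x_N)$, which is what the two-step geometric argument supplies. A secondary point I would be careful about is the identification $F(x^*,z^\star)=\min_yF(x^*,y)=\tilde\epsilon_{\Pi,\pi^*}$: the relaxed Assumption~\ref{as:relaxed good approximator in X} only furnishes the inequality $\min_yF(x^*,y)\le\tilde\epsilon_{\Pi,\pi^*}$, so to make the constants match I would either treat the approximation error at the expert's distribution as attained (consistent with the ``small constant'' convention) or carry $\min_yF(x^*,y)$ through and invoke the bound only at the final step; either way the distance estimates above are what guarantee $x_N$ cannot stray far from the expert-optimal point.
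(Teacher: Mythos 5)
Your handling of the first inequality is correct and coincides with the paper's: apply Theorem~\ref{th:convergence of the last iterate} to $\tilde F$ (your verification that the constants combine to exactly the stated $\Delta_N$ is right), then subtract $\lambda F(x^*,x_N)$. The gap is in the second inequality. You reduce it to the scalar claim $\tilde{\epsilon}_{\Pi,\pi^*} - F(x^*,x_N) \leq G_2\bigl(\tfrac{2\lambda G_2}{\alpha} + \sqrt{2\Delta_N/\alpha}\bigr)$ and prove it by anchoring $x_N$ to $z^\star = \argmin_{y} F(x^*,y)$, converting distance to value via Lipschitz continuity. But that route only yields $F(x^*,x_N) \geq \min_{y} F(x^*,y) - G_2\norm{x_N - z^\star}$, so you still need a \emph{lower} bound $\min_{y} F(x^*,y) \geq \tilde{\epsilon}_{\Pi,\pi^*} - (\text{small})$. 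Assumption~\ref{as:relaxed good approximator in X} supplies only the reverse direction, and no lower bound exists in general: if $\Pi$ contains a policy far better than $\pi^*$ under $d_{\pi^*}$, then $\min_y F(x^*,y)$ is arbitrarily negative, and since the regularizer $\lambda F(x^*,\cdot)$ actively pulls the iterates toward that region, $F(x^*,x_N)$ itself becomes arbitrarily negative --- so your scalar claim is false in such problems (the paper's closing remark in this very appendix notes that $F(x^*,x_N)$ can be negative precisely in this situation). Neither of your proposed patches repairs this: ``treating the error as attained'' is an extra assumption that conflicts with $\tilde{\epsilon}_{\Pi,\pi^*}$ being a \emph{uniform upper} bound over all roll-out distributions, and ``carrying $\min_y F(x^*,y)$ through'' leaves the uncontrolled positive term $-\lambda \min_y F(x^*,y)$ in the final bound.

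The paper's proof never lower-bounds values of $F(x^*,\cdot)$; it anchors instead to $x_N^* = \argmin_{y} f_N(y)$ where $f_N = F(x_N,\cdot)$, which is the one point where the assumption can be used in its valid direction, $f_N(x_N^*) \leq \tilde{\epsilon}_{\Pi,\pi^*}$. Writing $f_* = F(x^*,\cdot)$, the Theorem~\ref{th:convergence of the last iterate} gap bound for $\tilde F$ gives $f_N(x_N) + \lambda f_*(x_N) \leq \Delta_N + \min_y\bigl[f_N(y)+\lambda f_*(y)\bigr] \leq \Delta_N + f_N(x_N^*) + \lambda f_*(x_N^*)$, hence $f_N(x_N) \leq \Delta_N + f_N(x_N^*) + \lambda\bigl(f_*(x_N^*)-f_*(x_N)\bigr)$: the regularizer now enters only as a \emph{difference} at two points, bounded by $\lambda G_2 \norm{x_N^*-x_N}$, so its unknown absolute level cancels. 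Strong convexity of $f_N$ then gives $\tfrac{\alpha}{2}\norm{x_N^*-x_N}^2 \leq f_N(x_N)-f_N(x_N^*) \leq \Delta_N + \lambda G_2\norm{x_N^*-x_N}$, and solving this quadratic inequality yields $\norm{x_N^*-x_N} \leq \tfrac{2\lambda G_2 + \sqrt{2\alpha\Delta_N}}{\alpha}$; substituting back produces the stated bound on $F(x_N,x_N)$ directly. (On this reading, the corollary's displayed chain should be understood as two separate upper bounds on $F(x_N,x_N)$, which is what the proof establishes, rather than the literal ordering of the middle and right expressions that your reduction took at face value.) Your instinct that the sign-indefiniteness of the regularizer is the crux was correct; the fix is choosing the comparison point dictated by which direction of the approximation assumption is actually available.
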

\begin{proof}
	The first inequality can be seen by the definition 	
	$	F(x_N, x_N) = \tilde{F}(x_N, x_N) - \lambda F(x^*, x_N) $ and then by applying Theorem~\ref{th:convergence of the last iterate} to $\tilde{F}(x_N, x_N)$.
	
	The second inequality shows that  $-F(x^*,x_N)$ cannot be too large. Let $f_*(x) = F(x^*,x)$ and $x_N^* = \argmin_{x\in \XX} f_N(x)$.  Then 
	\begin{align*}
	f_N(x_N) &= f_N(x_N) + \lambda f_*(x_N) - \lambda f_*(x_N)\\
	&\leq \Delta_N  - \lambda f_*(x_N)  + \min_{x\in \XX} f_N(x) + \lambda f_*(x) \\
	&\leq \Delta_N + f_N(x_N^*) + \lambda ( f_*(x_N^*) - f_*(x_N  ))\\
	&\leq \Delta_N +  f_N(x_N^*)  + \lambda G_2 \norm{x_N^* - x_N}
	\end{align*}
	where the first inequality is due to Theorem~\ref{th:convergence of the last iterate} and the third inequality is due to $f_*$ is $G_2$-Lipschitz continuous. Further, since $f_N$ is $\alpha$-strongly convex, 
	\begin{align*}
	\frac{\alpha}{2}\norm{x_N^* - x_N}^2 &\leq f_N(x_N) - f_N(x_N^*) \\
	&\leq \Delta_N  + \lambda G_2 \norm{x_N^* - x_N}
	\end{align*}
	which implies
	\begin{align*}
	\norm{x_N^* - x_N} &\leq \frac{\lambda G_2 + \sqrt{\lambda^2 G^2 + 2 \alpha \Delta_N}}{\alpha} \\
	&\leq \frac{2 \lambda G_2 + \sqrt{ 2 \alpha \Delta_N}}{\alpha} 
	\end{align*}
	Therefore, 
	\begin{align*}
	f_N(x_N) &\leq \Delta_N +  f_N(x_N^*)  + \lambda G_2 \norm{x_N^* - x_N} \\
	&\leq \Delta_N +  \tilde{\epsilon}_{\Pi, \pi^*}  + \lambda G_2 \left( \frac{2 \lambda G_2}{\alpha} + \sqrt{\frac{2\Delta_N}{\alpha}}\right)
	\end{align*}
\end{proof}

Corollary~\ref{cr:performance with weighted regularization extra} indicates that when $\pi^*$ is better than all policies under the distribution of $\pi^*$ (i.e. $F(x^*,x) \geq 0, \forall x \in \XX$), then using \aggrevate with the weighted problem such that $\tilde{\theta}<1$ generates a convergent sequence and then the performance on the last iterate is bounded by $(1+\lambda) \tilde{\epsilon}_{\Pi,\pi^*} + \Delta_N$. That is, it only introduces a multiplicative constant on $ \tilde{\epsilon}_{\Pi,\pi^*}$. Therefore, the bias due to regularization can be ignored by choosing a larger policy class. This suggests for applications like \dagger introducing additional weighted cost $\lambda F(x^*,x)$ (i.e. demonstration samples collected under the expert policy's distribution) does not hurt. 

However, in generally, $F(x^*, x_N)$ can be negative, when there is a better policy in $\Pi$ than $\pi^*$ in sense of the state distribution $d_{\pi^*}(s)$ generated by the expert policy $\pi^*$. 
Corollary~\ref{cr:performance with weighted regularization extra} also shows this additional bias introduced by \aggrevate is bounded at most $O(\frac{\lambda^2 G_2^2}{\alpha})$.

\end{document}